\documentclass[10pt]{article}
\usepackage{geometry}
\usepackage{amsmath,amssymb,amsthm,mathrsfs,nicefrac}
\usepackage{mathtools}
\usepackage{latexsym,amscd,amsbsy,dsfont,amsfonts}
\usepackage{graphicx}
\usepackage{xcolor}
\usepackage[pdftex,colorlinks=true]{hyperref}
\usepackage{caption}
\usepackage{physics}
\usepackage{tensor}
\usepackage{tikz}
\usetikzlibrary{bayesnet}
\usetikzlibrary{fit}
\usepackage[most]{tcolorbox}
\tcbuselibrary{theorems}
\usepackage{empheq}
\usepackage{enumitem}
\usepackage{wrapfig}
\usepackage{pythonhighlight}

\usepackage{cite}
\newcommand{\comprimi}{\medmuskip=0mu
\thinmuskip=0mu
\thickmuskip=0mu}

\usepackage{float}
\usepackage{graphicx}
\def\new{\text{new}}

\newcommand{\tot}{\textrm{tot}}

\def\dd{\text{d}}

\def\sign{\text{sign}}



\DeclareMathOperator{\Prox}{Prox}
\DeclareMathOperator*{\Extr}{Extr}
\DeclareMathOperator*{\Argmin}{argmin}

\newcommand{\bsA}{{\boldsymbol{\mathsf A}}}
\newcommand{\bG}{{\boldsymbol{G}}}
\newcommand{\bT}{{\boldsymbol{T}}}
\newcommand{\bOne}{{\boldsymbol{1}}}
\newcommand{\bR}{{\boldsymbol{R}}}
\newcommand{\bhR}{{\hat{\boldsymbol{R}}}}

\newcommand{\bH}{{\boldsymbol{H}}}
\newcommand{\bA}{{\boldsymbol{A}}}
\newcommand{\bC}{{\boldsymbol{C}}}
\newcommand{\bS}{{\boldsymbol{S}}}
\newcommand{\bhS}{{\hat{\boldsymbol{S}}}}
\newcommand{\bU}{{\boldsymbol{U}}}
\newcommand{\bhU}{{\hat{\boldsymbol{U}}}}
\newcommand{\bh}{{\boldsymbol{h}}}
\newcommand{\bW}{{{\boldsymbol{{W}}}}}
\newcommand{\bw}{{\boldsymbol{w}}}
\newcommand{\bQ}{{\boldsymbol{Q}}}
\newcommand{\bV}{{\boldsymbol{V}}}
\newcommand{\bsQ}{{\boldsymbol{\mathsf Q}}}
\newcommand{\bsS}{{\boldsymbol{\mathsf S}}}
\newcommand{\bM}{{\boldsymbol{m}}}
\newcommand{\bhQ}{{\hat{\boldsymbol{Q}}}}
\newcommand{\bhV}{{\hat{\boldsymbol{V}}}}
\newcommand{\bhM}{{\hat{\boldsymbol{m}}}}
\newcommand{\bMM}{{{\boldsymbol{M}}}}

\newcommand{\bbb}{{\boldsymbol{b}}}
\newcommand{\bv}{{\boldsymbol{v}}}
\newcommand{\ba}{{\boldsymbol{a}}}
\newcommand{\bbB}{{\boldsymbol{B}}}
\newcommand{\bF}{{\boldsymbol{F}}}
\newcommand{\bZ}{{\boldsymbol{Z}}}
\newcommand{\bY}{{\boldsymbol{Y}}}
\newcommand{\bTheta}{{\boldsymbol{\Theta}}}
\newcommand{\bSigma}{{\boldsymbol{\Sigma}}}
\newcommand{\bPi}{{\boldsymbol{\Pi}}}

\newcommand{\bphi}{{\boldsymbol{\phi}}}
\newcommand{\bsigma}{{\boldsymbol{\sigma}}}
\newcommand{\blambda}{{\boldsymbol{\lambda}}}
\newcommand{\bomega}{{\boldsymbol{\omega}}}

\newcommand{\bXi}{{\boldsymbol{\Xi}}}
\newcommand{\bmu}{{\boldsymbol{\mu}}}
\newcommand{\bvarphi}{{\boldsymbol{\varphi}}}
\newcommand{\bx}{{\boldsymbol{x}}}
\newcommand{\bn}{{\boldsymbol{n}}}
\newcommand{\bX}{{\boldsymbol{X}}}
\newcommand{\bI}{{\boldsymbol{I}}}
\newcommand{\by}{{\boldsymbol{y}}}
\newcommand{\bff}{{\boldsymbol{f}}}
\newcommand{\be}{{\boldsymbol{e}}}
\newcommand{\beeta}{{\boldsymbol{\eta}}}
\newcommand{\bxi}{{\boldsymbol{\xi}}}

\newcommand{\bu}{{\boldsymbol{u}}}

\def\mat#1{\text{#1}}
\renewcommand{\vec}[1]{\bm{#1}}

\title{Learning Gaussian Mixtures with Generalised Linear Models: Precise Asymptotics in High-dimensions}

\usepackage{authblk}

\author[1]{Bruno Loureiro}
\author[2]{Gabriele Sicuro}
\author[3]{C\'edric Gerbelot}
\author[1]{Alessandro Pacco}
\author[1]{\\Florent Krzakala}
\author[4]{Lenka Zdeborov\'a}
{
\affil[1]{\small IdePHICS Lab. EPFL, Lausanne}
\affil[2]{\small Department of Mathematics, King's College London}
\affil[3]{\small Laboratoire de Physique de l'\'Ecole Normale Sup\'erieure, Universit\'e PSL, CNRS, Sorbonne Universit\'e}
\affil[4]{\small SPOC, EPFL, Lausanne}
}
\newtheorem{theorem1}{Theorem}
\newtheorem{theorem2}[theorem1]{Theorem}
\newtheorem{corollary1}[theorem1]{Corollary}

\newtheorem{Lemma}[theorem1]{Lemma}
\newtheorem{definition}{Definition}

\usepackage{hyperref}
\hypersetup{pdfauthor={IdePHICS},pdftitle={MultiClass},%
            colorlinks, linktocpage=true, pdfstartpage=1, pdfstartview=FitV,%
    breaklinks=true, pdfpagemode=UseNone, pageanchor=true, pdfpagemode=UseOutlines,%
    plainpages=false, bookmarksnumbered, bookmarksopen=true, bookmarksopenlevel=1,%
    hypertexnames=true, pdfhighlight=/O,%
    urlcolor=orange, linkcolor=blue, citecolor=red, 
        }

\usepackage[cal=euler]{mathalfa}
\usepackage{libertine}
\usepackage[libertine,smallerops]{newtxmath}
\usepackage[T1]{fontenc}

\allowdisplaybreaks
\begin{document}

\maketitle

\begin{abstract}
  Generalised linear models for multi-class classification problems are one of the fundamental building blocks of modern machine learning tasks. In this manuscript, we characterise the learning of a mixture of $K$ Gaussians with generic means and covariances via empirical risk minimisation (ERM) with any convex loss and regularisation. In particular, we prove exact asymptotics characterising the ERM estimator in high-dimensions, extending several previous results about Gaussian mixture classification in the literature. We exemplify our result in two tasks of interest in statistical learning: a) classification for a mixture with sparse means, where we study the efficiency of $\ell_1$ penalty with respect to $\ell_2$; b) max-margin multi-class classification, where we characterise the phase transition on the existence of the multi-class logistic maximum likelihood estimator for $K>2$. Finally, we discuss how our theory can be applied beyond the scope of synthetic data, showing that in different cases Gaussian mixtures capture closely the learning curve of classification tasks in real data sets.
\end{abstract}

\section{Introduction}\label{sec:intro}
A recurring observation in modern deep learning practice is that neural networks often defy the standard wisdom of classical statistical theory. For instance, deep neural networks typically achieve good generalisation performances at a regime in which it interpolates the data, a fact at odds with the intuitive bias-variance trade-off picture stemming from classical theory \cite{Geman1992, Hastie2001, Belkin2019}. Surprisingly, many of the ``exotic'' behaviours encountered in deep neural networks have recently been shown to be shared by models as simple as overparametrised linear classifiers \cite{Hastie2020, Belkin2018}, e.g., the aforementioned benign over-fitting \cite{Bartlett2020}. Therefore, understanding the generalisation properties of simple models in high-dimensions has proven to be a fertile ground for elucidating some of the challenging statistical questions posed by modern machine learning practice \cite{Mei2019, Gerace2020, ghorbani2019limitations, Goldt2020, Goldt2020b, Loureiro2021, Sur2020, Mignacco2020, Refinetti2021, Candes2020}. 

In this manuscript, we pursue this enterprise in the context of a commonly used model for high-dimensional classification problems: the Gaussian mixture. Indeed, it has been recently argued that the features learned by deep neural networks trained on the cross-entropy loss ``collapse'' in a mixture of well-separated clusters, with the last layer acting as a simple linear classifier \cite{Donoho2020}. Another observation put forward in \cite{Couillet2020} is that data obtained using generative adversarial networks behave as Gaussian mixtures.
Here, we derive an exact asymptotic formula characterising the performance of generalised linear classifiers trained on $K$ Gaussian clusters with generic covariances and means. Our formula is valid for any convex loss and penalty, encompassing popular tasks in the machine learning literature such as ridge regression, basis pursuit, cross-entropy minimisation and max-margin estimation. This allow us to answer relevant questions for statistical learning, such as: what is the separability threshold for $K$-clustered data? How does regularisation affects estimation? Can different penalties help when the means are sparse? 
We also extend the observation of \cite{Couillet2020} showing that the learning curves of binary classification tasks on {\it real data} are indeed well captured by our asymptotic analysis. 
\paragraph{Model definition} We consider learning from a $d$-dimensional mixture of $K$ Gaussian clusters $\mathcal C_{k\in[K]}$. The data set is obtained by sampling $n$ pairs $(\bx^{\nu},\by^{\nu})_{\nu \in [n]} \in \mathbb{R}^{d+K}$ identically and independently. We adopt the one-hot encoded representation of the labels, i.e., if $\bx^\nu\in\mathcal C_k$, then $\by^{\nu}=\be_k$, $k$th basis vector of $\mathbb R^K$. We will denote the matrix of concatenated samples $\bX \in \mathbb{R}^{d\times n}$. The mixture density then reads:
\begin{equation}
\label{joint}
 P(\bx,\by)
 = \sum_{k=1}^{K}y_k\rho_k\mathcal{N}\left(\bx\left|\bmu_k,\bSigma_k\right.\right),
\end{equation}
where $\mathcal N(\bx|\bmu,\bSigma)$ is the multivariate normal distribution with mean $\bmu$ and covariance matrix $\bSigma$. The matrix of concatenated means is denoted $\bMM\in \mathbb{R}^{d \times K}$. In Eq.~\eqref{joint}, $\forall k$, $\rho_k=P(\by=\be_k)\geq 0$, $\bmu_k\in\mathbb R^d$ and $\bSigma_k\in\mathbb R^{d\times d}$ is positive-definite. We will consider the estimator obtained by minimising the following empirical risk:
\begin{align}
\label{ERM}
&\mathcal R(\bW,\bbb)\equiv \sum_{\nu=1}^n\ell\left(\by^\nu,\frac{\bW\bx^\nu}{\sqrt d}+\bbb\right)+\lambda r(\bW), \\
& (\bW^{\star},\bbb^{\star}) \equiv \Argmin_{\bW\in \mathbb{R}^{K\times d},\,\bbb \in \mathbb{R}^{K}}\mathcal R(\bW,\bbb)\, , 
\end{align}
where $\bW\in\mathbb R^{K\times d}$ and $\bbb\in\mathbb R^{K}$ are the weights and bias to be learned, $\ell$ is a convex loss function, and $r$ is a regularisation function whose strength is tuned by the parameter $\lambda\geq0$. For example the loss function $\ell$ can represent the composition of a cross-entropy loss with a softmax thresholding on the linear part of Eq.~\eqref{ERM}.
We will characterise the distribution of the estimator $(\bW^{\star},\bbb^{\star})$, and we will evaluate the average training loss defined as
\begin{equation}
\epsilon_\ell=\frac{1}{n}\sum_{\nu=1}^n\ell\left(\by^\nu,\frac{\bW^\star\bx^\nu}{\sqrt d}+\bbb^\star\right),
\end{equation}
as well as the average training error $\epsilon_t$ and generalisation error $\epsilon_g$, defined as the misclassification rates:
\begin{equation}
\epsilon_t=\frac{1}{n}\sum_{\nu=1}^n\mathbb I\left[\by^\nu\neq\hat\by\left(\frac{\bW^\star\bx^\nu}{\sqrt d}+\bbb^\star\right)\right],\ \
\epsilon_g=\mathbb E_{(\bx^{\rm new},\by^{\rm new})}\left[\mathbb I\left[\by^{\rm new}\neq\hat\by\left(\frac{\bW^\star\bx^{\rm new}}{\sqrt d}+\bbb^\star\right)\right]\right],\nonumber
\end{equation}
where $(\bx^{\rm new},\by^{\rm new})$ is a new unseen data point sampled from the distribution in Eq.~\eqref{joint}. In the previous equations, we have used the function $\hat \by\colon \mathbb R^K\to\mathbb R^K$, so that $\hat y_{k}(\bx)\coloneqq\mathbb I(\max_\kappa x_\kappa=x_k)$. 

The \textbf{main contributions} in this manuscript are the following:
\begin{description}[wide = 0pt,noitemsep]
\item[(C1)] In Sec.~\ref{sec:main} and Appendix \ref{sec:app:proof} we prove closed-form equations characterizing the asymptotic distribution of the matrix of weights $\bW^\star \in \mathbb{R}^{K\times d}$, enabling the exact computation of key quantities such as the training and generalisation error. Our proof method solves shortcomings of previous approaches by introducing a novel approximate message-passing sequence, building on recent advances in this framework, that is of independent interest. 
\item[(C2)] In Sec.~\ref{sec:examples:sparse} we study the problem of classifying an anisotropic mixture with sparse means, where the strong or weak directions in the data are correlated with the non-zero components of the mean as in \cite{Donoho2008}. We study how learning the sparsity with an $\ell_1$ penalty improves the classification performance.
\item[(C3)] In Sec.~\ref{sec:examples:multi} we study the performance of the cross-entropy estimator in the limit of vanishing regularisation $\lambda\to 0^{+}$ for $K$ Gaussian clusters as a function of the sample complexity $\alpha=\nicefrac{n}{d}$; we show that a phase transition takes place at a certain value $\alpha^\star_K$ between a regime of complete separability of the data and a regime in which the correct classification of almost all points in the data set is not possible. We also investigate the effect of $\lambda > 0$ regularisation on the generalisation error, comparing the $K>2$ case with the results given in the literature for $K=2$ \cite{Mignacco2020,Thrampoulidis2020}.  
\item[(C4)] In Sec. \ref{sec:examples:mnist} we investigate the applicability of our formula beyond the Gaussian assumption by applying it to classification tasks on \emph{real data}. We show that for different tasks and losses, it closely captures the real learning curves, even when data is mapped through a non-linear feature map. This further shows that Gaussian mixtures are a good surrogate model for investigating real classification tasks, as put forward in \cite{Couillet2020}.
\end{description}
\paragraph{Relation to previous work} The analysis of Gaussian mixture models in the high-dimensional regime has been the subject of many recent works. Exact asymptotics has been derived for the binary classification case with diagonal covariances in \cite{Thrampoulidis2019, Couillet2019, Mai2020} for the logistic loss and in \cite{Dobriban2018, Thrampoulidis2020b} for the square loss, both with $\ell_2$ penalty. A similar analysis has been performed in \cite{Sifaou2019} for the hard-margin SVM. These works were generalised to generic convex losses and $\ell_2$ penalty in \cite{Mignacco2020}, where it has been also shown that the regularisation term can play an important role in reaching Bayes-optimal performances. 
Hinge regression with $\ell_1$ penalty and diagonal covariance was treated in \cite{Sur2020}. Recently, these asymptotic results were generalised to the case in which both clusters share the same covariance in \cite{Thrampoulidis2021}, and finite rate bounds were given in \cite{Long2021, Belkin2021} in the case of sub-Gaussian mixtures. Asymptotic results for the multiclass problem with diagonal covariance were derived in \cite{Thrampoulidis2020} for the restricted case of the square loss with $\ell_2$ penalty. Our result unifies all the aforementioned asymptotic formulas, and extends them to the general case of a multiclass problem with generic covariances and arbitrary convex losses and penalties.\\
From a technical standpoint, in \cite{Thrampoulidis2019, salehi2019impact, Thrampoulidis2020b, Thrampoulidis2020, Mignacco2020, Sur2020, Thrampoulidis2021} the authors use convex Gaussian comparison inequalities, see e.g.~\cite{thrampoulidis2018precise,stojnic2013framework}, to prove their result. In particular, the proof given in \cite{Thrampoulidis2020} for the multiclass problem harnesses the geometry of least-squares, and it is then stressed that this method breaks down for multiclass problems in which the risk does not factorise over the $K$ clusters (as for the cross-entropy, for example). We solve this problem using an innovative proof technique which has an interest in its own. Our approach is to capture
 the effect of non-linearity and generic covariances via the rigorous study of an approximate message-passing (AMP) sequence, a family of iterations that admit closed-form asymptotics at each step called \emph{state evolution equations} \cite{bayati2011dynamics}. Our proof relies on several refinements of AMP methods to handle the full complexity of the problem, notably spatial coupling with matrix valued variables \cite{krzakala2012probabilistic,donoho2013information,javanmard2013state} and non-separable update functions \cite{berthier2020state}, via a multi-layer approach to AMP \cite{manoel2017multi}.\\
The sparse Gaussian mixture model analysed in Section \ref{sec:examples:sparse} is closely related to the rare/weak features model introduced in \cite{Donoho2008} and widely studied in the context of sparse linear discriminant analysis \cite{Jiashun2009, Shao2011, Qing2012, Li2017}. It was recently revisited in \cite{Belkin2021, Long2021} in the context of ERM with max-margin classifiers. Here, we consider a correlated variation of the model and study the benefit of using a sparsity inducing $\ell_1$ penalty.\\
The separability transition is a classical topic \cite{Cover1965,gardner1988space} that has recently witnessed a renewal of interest thanks to its connection to overparametrization. It was studied in \cite{Candes2020} in the context of uncorrelated Gaussian data, in \cite{Gerace2020} in the random features model and in \cite{Thrampoulidis2019, Mignacco2020} for binary Gaussian mixtures. \\
Recently, \cite{jacot2020kernel, bordelon2020, Loureiro2021} showed that the performance of different regression tasks on real data are well-captured by a teacher-student Gaussian model in high-dimensions for ridge regression, but this turned not to be true for non-linear problems such as logistic classification \cite{Loureiro2021}. Authors of \cite{Couillet2020} showed instead that data from generative adversarial networks behave like Gaussian mixtures, motivating the modeling of such mixture for real-data in the present paper. 

\section{Technical results}\label{sec:main}
Our main technical result is an exact asymptotic characterization of the distribution of the estimator~$\bW^{\star}$. Informally, the estimator $\bW^{\star}$ and the quantity $\bW^{\star}\bX/\sqrt{d}$ behave asymptotically as non-linear transforms of multivariate Gaussian distributions. These transforms are directly linked to the proximal operators \cite{parikh2014proximal,bauschke2011convex} associated to the loss and regularisation functions, summarizing the effect of the cost function landscape on the estimator. The parameters of these Gaussian distributions and proximals can then be computed from the fixed point of a self-contained set of equations. We start by presenting the most generic form of our result in a concentration of measure-like statement in Theorem~\ref{the:1}, and discuss an intuitive interpretation of the different quantities involved. Theorem \ref{th:2} then states how the training and generalisation errors can be computed. All results presented in the experiments section can be obtained from Theorem \ref{the:1}. In Corollary \ref{corr:1} we discuss a particular case where explicit simplifications can be obtained. But first, let's summarise the required assumptions for our result to hold.
\begin{description}[wide = 0pt,noitemsep]
\label{set:assump}
	\item[(A1)] \label{assu:conv} The functions $\ell$ (as a function of its second argument) and $r$ are proper, closed, lower semi-continuous convex functions. We assume additionally
	that either the cost function $\ell(\by,\bullet\bX)+r(\bullet)$ is strictly convex, or that $\ell(\by,\bullet)$ is strictly convex in its second argument and $r$ is the $\ell_{1}$ norm. We also assume that the cost function $\ell(\by,\bullet\bX)+r(\bullet)$ is coercive.
	\item[(A2)] The covariance matrices are positive definite and their spectral norms are bounded (with probability one). 
	\item[(A3)] The mean vectors $\bmu_{k}$ are distributed according to some density $P_\bmu(\bMM)$ such that the following quantity is finite
	\begin{equation}
	    \forall d\qquad  \mathbb{E}\left[\norm{\bMM^\top\bMM}_{\rm F}\right] < +\infty,
	\end{equation}
	where  $\|\bullet\|_{\rm F}$ denotes the Frobenius norm.
	\item[(A4)] The number of samples $n$ and dimension $d$ both go to infinity with fixed ratio $\alpha = \nicefrac{n}{d}$, called hereafter the sample complexity. The number of clusters $K$ is finite.
	\item[(A5)] The fixed point of the set of self-consistent equations Eq.\eqref{spgeneral} exists and is unique.
\end{description}
As specified by assumption \textbf{(A1)}, our proof does not apply to any convex problem. We discuss this assumption further in Appendix \ref{app:strct}. We also comment on the existence and uniqueness of the solution to the set of self consistent equations Eq.\eqref{spgeneral} in Appendix \ref{app:unique}.
Before proceeding further, let us specify a useful notation. Suppose that the matrix $\bG=(G_{ki})_{ki}\in\mathbb R^{K\times d}$ is given, alongside the four-index tensor $\bsA=(A_{ki\,k'i'})_{ki\,k'i'}\in\mathbb R^{K\times d}\otimes\mathbb R^{K\times d}$. We will use the notation $\bG\odot \bsA=\sum_{ki}G_{ki}A_{ki\,k'i'}\in\mathbb R^{K\times d}$. Similarly, given a four-index tensor $\bsA$, we will define $\sqrt{\bsA}$ as the tensor such that $\bsA=\sqrt\bsA\odot \sqrt\bsA$. We are now in a position to state our main result.
\begin{theorem1}[Concentration properties of the estimator]\label{the:1} Let $\bxi_{k \in [K]}\sim \mathcal N(\mathbf 0,\bI_K)$ be  collection of $K$-dimensional standard normal vectors independent of other quantities. Let also be $\{\bXi_k\}$ a set of $K$ matrices, $\bXi_k\in\mathbb R^{K\times d}$, with i.i.d.~standard normal entries, independent of other quantities. Under the set of assumptions (A1--A5), for any pseudo-Lispchitz functions of finite order $\phi_{1}:\mathbb{R}^{K\times d} \to \mathbb{R}, \phi_{2}: \mathbb{R}^{K\times n}\to \mathbb{R}$, the estimator $\bW^{\star}$ and the matrix $\boldsymbol{Z}^{\star} = \frac{1}{\sqrt{d}}\bW^{\star}\bX$ verify:
\begin{align}
    \phi_{1}(\bW^{\star}) \xrightarrow[n,d \to +\infty]{P}\mathbb{E}_{\bXi}\left[\phi_{1}(\bG)\right], && \phi_{2}(\boldsymbol{Z}^{\star})\xrightarrow[n,d \to +\infty]{P}\mathbb{E}_{\bxi}\left[\phi_{2}(\bH)\right]\, ,
\end{align}
where we have introduced the proximal for the loss:
\begin{equation}
    \bh_k=\bV_k^{1/2}\Prox_{\ell(\be_k,\bV_k^{1/2}\bullet)}(\bV^{-1/2}_k\bomega_{k})\in\mathbb R^{K}\, , \qquad
\boldsymbol{\omega}_k\equiv\bM_k+\bbb+\bQ^{1/2}_k\bxi_{k}\, ,
\end{equation}
and $\boldsymbol{H} \in \mathbb{R}^{K \times n}$ is obtained by concatenating each $\bh_k$, $\rho_{k}n$ times. 
We have also introduced the matrix proximal $\bG\in\mathbb R^{K\times d}$:
\begin{equation}
\bG=\bsA^{\frac{1}{2}}\odot \Prox_{r(\bsA^{\frac{1}{2}}\odot \bullet)}(\bsA^{\frac{1}{2}}\odot \bbB),\ \ \ \ 
\bsA^{-1}\equiv\sum_k \bhV_k\otimes\bSigma_k,\ \  \bbB\!\equiv\sum_k\!\left(\!\bmu_k\bhM_k^\top\!+\!\bXi_k\odot \sqrt{\bhQ_k\!\otimes\!\bSigma_k }\!\right). \nonumber
\end{equation}
The collection of parameters $(\bQ_{k},\bM_{k},\bV_{k},\bhQ_{k},\bhM_{k},\bhV_{k})_{k\in[K]}$ is given by the fixed point of the following self-consistent equations:
\begin{equation}\label{spgeneral}
\begin{cases}
\bQ_k\!=\!\frac{1}{d}\mathbb E_{\bXi}[\bG\bSigma_k\bG^\top]\\
\bM_k\!=\!\frac{1}{\sqrt d}\mathbb E_{\bXi}[\bG\bmu_k]\\
\bV_k\!=\!\frac{1}{d}\mathbb{E}_{\bXi}\!\!\left[\left(\bG\odot \!\left(\bhQ_k\otimes\bSigma_k\right)^{-\frac{1}{2}}\!\!\!\odot (\bI_K\otimes\bSigma_k) \!\right)\bXi_k^\top\right]
\end{cases} \!
\begin{cases}
\bhQ_k\!=\alpha \rho_k\mathbb E_{\bxi}\left[\bff_k\bff_k^\top\right]\\
\bhV_k\!=-\alpha \rho_k\bQ_k^{-\frac{1}{2}}\mathbb E_{\bxi}\!\left[\bff_k\bxi^\top\right]\\
\bhM_k\!=\alpha \rho_k\mathbb E_{\bxi}\left[\bff_k\right]
\end{cases}
\end{equation}
where $\bff_k\equiv \bV_k^{-1}(\bh_k-\bomega_k)$, and the vector $\bbb^\star$ is such that $\sum_k \rho_k\mathbb E_{\bxi}\left[\bV_k\bff_k\right]=\mathbf 0$ holds.
\end{theorem1}
The purpose of this statement is to have an asymptotically exact description of the distribution of the estimator, where the dimensions going to infinity are effectively summarized as averages over simple, independent distributions. Those distributions are parametrised by the set of finite-size parameters $(\bQ_{k},\bM_{k},\bV_{k},\bhQ_{k},\bhM_{k},\bhV_{k})_{k\in[K]}$ that can be exactly evaluated and have a clear interpretation. Indeed, the parameters $(\bM_{k},\bhM_{k})$ and $(\bQ_{k},\bhQ_{k})$ respectively represent means and covariances of multivariate Gaussians (combined with the original $\bmu_{k},\bSigma_{k}$), and the $(\bV_{k},\bhV_{k})$ parametrise the deformations that should be applied to these Gaussians to obtain the distribution of $\bW^{\star},\boldsymbol{Z}^{\star}$. The distribution is characterized in a weak sense with concentration of pseudo-Lipschitz (i.e., sufficiently regular) functions, whose definition is reminded in the Appendix \ref{sec:app:proof}. From this result one can work out a number of properties of the weights $\bW^{\star}$, e.g., training and generalisation error, but also hypothesis tests as done in \cite{celentano2020lasso} for the LASSO. Due to the generality of the statement, no direct simplification is possible. However, we will see that in certain specific cases all quantities can be greatly simplified. This is notably the case for diagonal covariance matrices and separable estimators and observables $\phi_{1},\phi_{2}$, where the sums over high-dimensional Gaussians concentrate explicitly to one-dimensional expectations. For instance the results of \cite{Thrampoulidis2020, Mignacco2020} can be recovered as special cases of the present work. Theorem \ref{the:1} then allows to obtain the asymptotic values of the generalisation error, of the training loss and of the training error. Their explicit expression is given in the following Theorem.

\begin{theorem2}[generalisation error and training loss]
\label{th:2}
In the hypotheses of Theorem \ref{the:1}, the training loss, the training error and the generalisation error are given by
\begin{equation}\label{eq:errors}
\epsilon_\ell=\sum_{k=1}^K\rho_k\mathbb E_{\bxi}[\ell(\be_k,\bh_k)],\qquad 
\epsilon_t=1-\sum_{k=1}^K\rho_k\mathbb E_{\bxi}\left[\hat y_k(\bh_k)\right],
\qquad \epsilon_g=1-\sum_{k=1}^K\rho_k\mathbb E_{\bxi}\left[\hat y_k(\bomega_k)\right].
\end{equation}
\end{theorem2}
\paragraph{The case of ridge regularisation and diagonal $\bSigma_k$} The general formulas given above can be remarkably simplified under some assumptions about the choice of the regularisation and about the structure of the covariance matrices $\bSigma_k$. This is the case for instance for the ridge regularisation $r(\bW)=\|\bW\|^2_{\rm F}/2$ and jointly diagonalizable covariances. In this case, Theorem \ref{the:1} simplifies as follows.

\begin{corollary1}
\label{corr:1}
Under the hypotheses of Theorem~\ref{the:1}, let us further assume that a ridge regularisation is adopted, $r(\bW)=\|\bW\|^2_{\rm F}/2$, and that the covariance matrices $\bSigma_k$ have a common set of orthonormal eigenvectors $\{\bv_i\}_{i=1}^d$, so that, for each $\bSigma_k=\sum_{i=1}^d\sigma_i^k\bv_i\bv_i^\top$. Let us also introduce, in the $d\to+\infty$ limit, the joint distribution for the $K$-dimensional vectors $\bsigma=(\sigma^1,\dots,\sigma^K)$ and $\bmu=(\mu^1,\dots,\mu^K)$,
\begin{equation}
\frac{1}{d}\sum_{i=1}^d\prod_{k=1}^K\delta(\sigma^k-\sigma^k_i)\delta(\mu^k-\sqrt d\bmu_k^\top\bv_i)\xrightarrow{d\to+\infty} p(\bsigma,\bmu),
\end{equation}
Then, the first three saddle point equations in eqs.~\eqref{spgeneral} take the form
\begin{equation}\label{eq:corr1}
\begin{cases}
\bQ_k=\mathbb E_{\bsigma,\bmu}\left[\sigma^k\left(\lambda\bI_K+\sum_{\kappa=1}^K\sigma^\kappa\bhV_k\right)^{-2}\left(\sum_{\kappa\kappa'}\mu^\kappa\mu^{\kappa'}\bhM_{\kappa}\bhM^\top_{\kappa'}+\sum_{\kappa=1}^K\sigma^\kappa\bhQ_k\right)\right],\\
\bM_k=\mathbb E_{\bsigma,\bmu}\left[\mu^k\left(\lambda\bI_K+\sum_{\kappa=1}^K\sigma^\kappa\bhV_k\right)^{-1}\sum_{\kappa=1}^K\mu^\kappa\bhM_\kappa\right],\\
\bV_k=\mathbb E_{\bsigma,\bmu}\left[\sigma^k\left(\lambda\bI_K+\sum_{\kappa=1}^K\sigma^\kappa\bhV_k\right)^{-1}\right].
\end{cases}
\end{equation}
\end{corollary1}

\paragraph{Narrative of the proof} The proof is detailed in Appendix \ref{sec:app:proof}.
It overcomes problems that existing methods, notably convex Gaussian comparison inequalities \cite{Thrampoulidis2020}, have yet to be adapted to. The first main technical difficulty resides in the estimator of interest being a matrix learned with non-linear functions. This makes it impossible to decompose the problem on each row of the estimator, which must be characterized in a probabilistic sense directly as a matrix. The second main difficulty is brought by the mixture of arbitrary covariances. Intuitively, the covariances correlate the estimator with the individual clusters, and therefore the correlation function cannot be represented by a single quantity. In our proof, these points are handled using the AMP and related state-evolution techniques \cite{bolthausen2014iterative,bayati2011dynamics,bayati2011lasso,gerbelot2020asymptotic}. The main idea of the proof is to express the estimator $\bW^{\star}$ as the limit of a convergent sequence whose structure enables the decomposition of all correlations and distributions in closed form. AMP iterations can handle matrix valued variables \cite{aubin2019committee,javanmard2013state}, correlations in block-structure \cite{javanmard2013state}, non-separable functions \cite{manoel2017multi,berthier2020state} and compositions of the previous three, leaving a large choice of possibilities in their design. We thus reformulate the problem in a way that makes the interaction between the estimator and each cluster explicit, effectively introducing a block structure to the problem, and isolate the overlaps with the means $\left\{\bmu_{k}\right\}$. We then design a matrix-valued sequence that obeys the update rule of an AMP sequence, in order to benefit from its exact asymptotics, and whose fixed point condition matches the optimality condition of the ERM problem, Eq.~\eqref{ERM}. Our proof builds on the spatial coupling framework in the AMP literature \cite{krzakala2012statistical,javanmard2013state}, which shows that the effect of random matrices defined with non-identically distributed blocks can be embedded in an AMP iteration while explicitly keeping the effect of each block. The non-linearities are then obtained by a block decomposition of the proximal operators defined on sets of matrices, acting on different variables of the AMP sequence and representing the effect of each cluster. The convergence analysis is made possible by the convexity of the problem: the sequence is defined with proximal operators of convex functions which are roughly contractions, and results in converging sequences when combined with the high-dimensional properties of the iteration. It is also interesting to note that the replica method, although heuristic, yet again gives the correct prediction without any hindering from the aforementioned main difficulties, as detailed in Appendix~\ref{sec:app:replicas}.

\paragraph{Universality} AMP-type proofs are amenable to both finite sample size analysis and universality proofs. For instance, in \cite{rush2018finite} it is shown that simpler instances of AMP for the LASSO exhibit exponential concentration in the system size, and the i.i.d.~Gaussian assumption can be relaxed to independently sampled sub-Gaussian distributions, as shown in \cite{bayati2015universality,chen2021universality}. Although these results do not formally encompass our case, their proof method contains most of the required technicalities, and it should be possible to prove similar results in the present setting. Indeed, recent results in \cite{Couillet2020} suggest that the formula of Theorem \ref{the:1} and \ref{th:2} should be universal for all mixtures of concentrated distribution in high-dimension, not only Gaussian ones. As we discuss Sec.~\ref{sec:examples:mnist}, even real data learning curves are empirically found to follow the behavior of the mixture of Gaussians.

\section{Results on synthetic and real datasets}
\label{sec:examples}
In this section we exemplify how Theorem \ref{the:1} can be employed to compute quantities of interest in different empirical risk minimisation tasks in high-dimensions. In all cases discussed below, eqs.~\eqref{spgeneral} have been solved numerically. A repository with a polished version of the code we used to solve the equations is available on GitHub \cite{github} (see also Appendix \ref{app:numint}).

\subsection{Correlated sparse mixtures}
\label{sec:examples:sparse}
\begin{figure}[ht]
    \centering
    \includegraphics[width=0.45\textwidth]{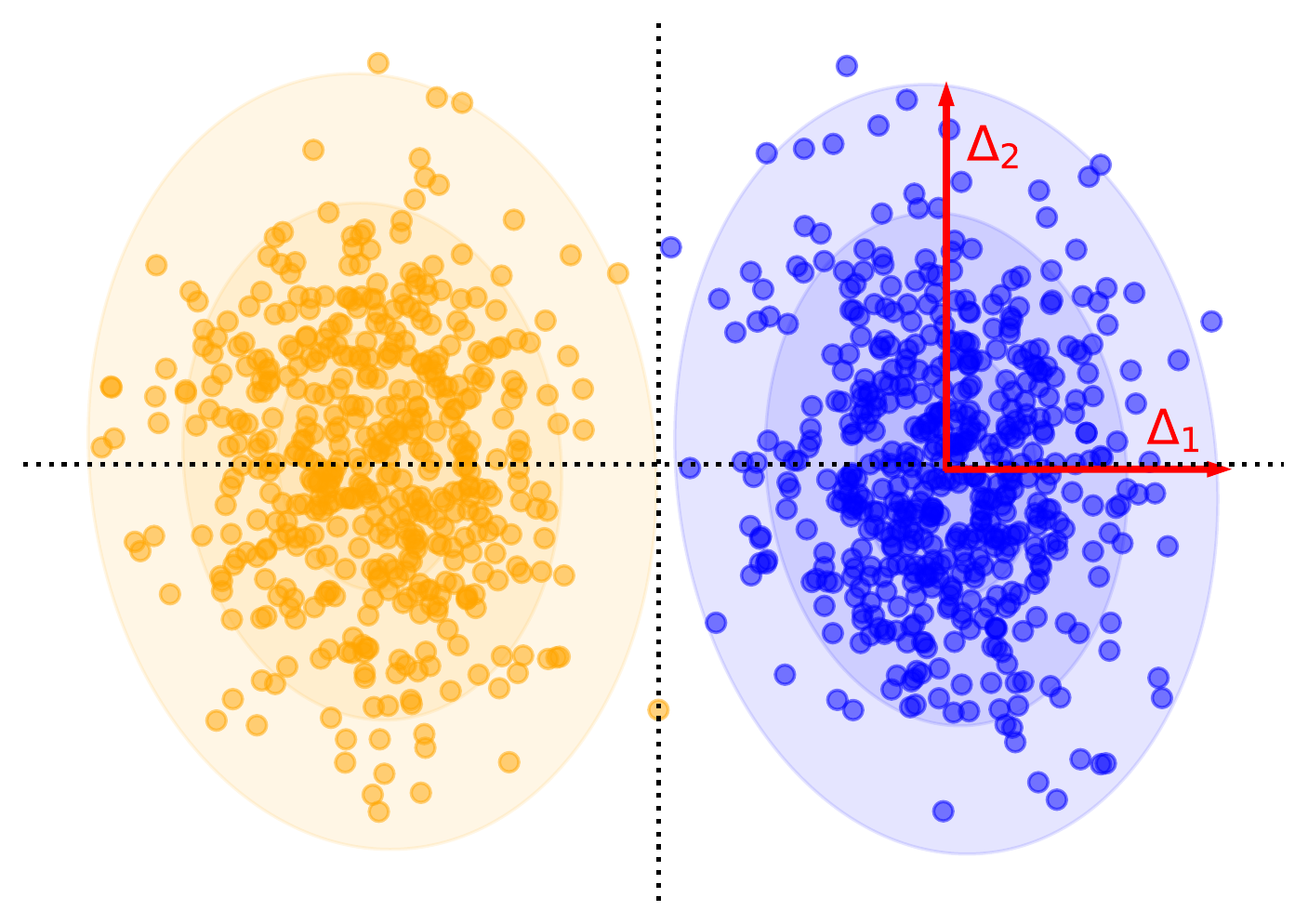}
    \includegraphics[width=0.45\textwidth]{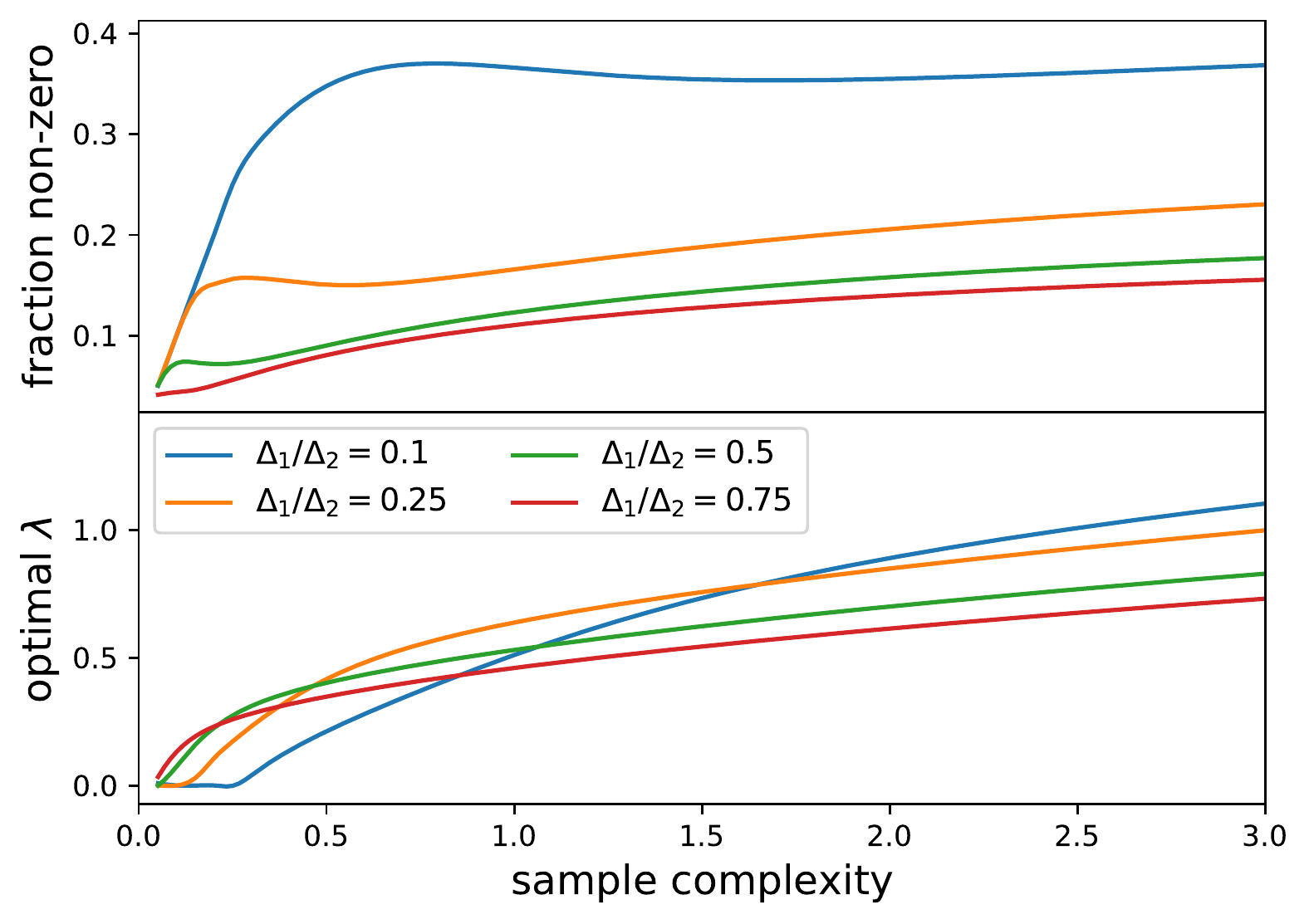}
    \caption{(\textbf{Left}) Two-dimensional projection of the Gaussian mixture introduced via Eq.~\eqref{eq:sparse} in which the sparse directions of the means are correlated with the weak/strong directions in the data. (\textbf{Right}) Fraction of non-zero elements of the lasso estimator (\textit{top}) and optimal regularisation strength (\textit{bottom}) as a function of the sample complexity $\alpha=\nicefrac{n}{d}$ for different anisotropy ratios and fixed sparsity $\rho=0.1$. Note that for $\Delta_1/\Delta_2 \lesssim 1$ and for low $\alpha$ the optimal error is achieved for vanishing regularisation, which corresponds to the \emph{basis pursuit} algorithm \cite{Donoho2018}.}
    \label{fig:eg1:setup}
\end{figure}
\begin{figure}
    \centering
    \includegraphics[width=\textwidth]{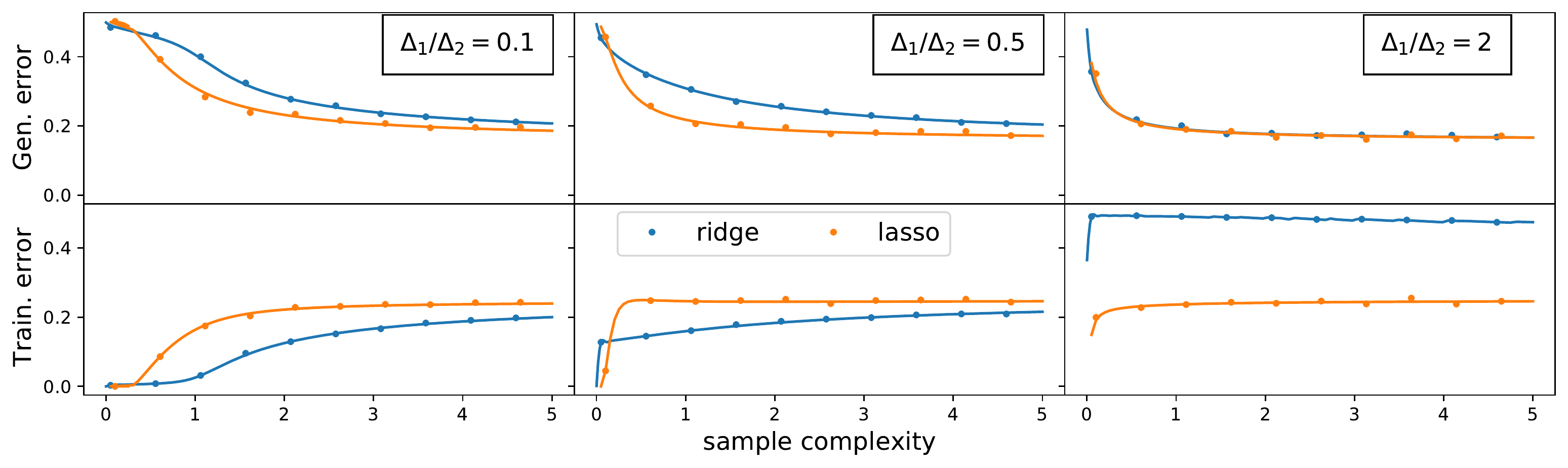}
    \caption{Learning curves for the sparse mixture model defined via Eq.~\eqref{eq:sparse} at fixed sparsity $\rho=0.1$, comparing the performance of the ridge (blue) and the lasso (orange) estimators at optimal regularisation strength $\lambda^{*}$ and for different anisotropy ratio $\Delta_{1}/\Delta_2$ (here $\Delta_1 =0.1$ and we vary $\Delta_{2}$). Full lines denote the theoretical prediction, and dots denote finite instance simulations with $d=1000$ using the \texttt{ElasticNet} module in the \texttt{Scikit-learn} package \cite{pedregosa2011scikit}. Above a certain sample complexity $\alpha$, we can identify two regimes: a) a $\Delta_1/\Delta_2 \lesssim 1$ regime in which the $\ell_1$ penalty improves significantly over $\ell_2$; b) a  $\Delta_1/\Delta_2 \gtrsim 1$ regime in which the performance is similar. Interestingly, even though the generalisation error of lasso is considerably better in a), the training loss (i.e. the mse on the labels) is higher, \& vice-versa in b).}
    \label{fig:eg1:lasso}
        \vspace{-0.3cm}
\end{figure}

As a first example, consider a binary classification problem in which the most relevant features live in a subspace of $\mathbb{R}^{d}$, and can be either weaker or stronger with respect to the irrelevant features. This problem can be modelled with a Gaussian mixture model with sparse means, and where the strong/weak directions of the covariance matrix are correlated with the non-zero components of the means. Mathematically, we consider a data set with $n$ independent samples $(\bx^{\nu}, y^{\nu})\in\mathbb{R}^{d}\times \{-1,1\}$ drawn from a Gaussian mixture $\bx^{\nu}\sim\mathcal{N}(y^{\nu}\bmu, \bSigma)$ with diagonal covariance $\Sigma_{ij} = \sigma_{i}\delta_{ij}$ which is correlated with the sparse means:
\begin{equation}
    P(\bmu, \bsigma) = \prod\limits_{i=1}^{d}\left\{\rho \mathcal{N}(\mu_{i}|0,1) \delta_{\sigma_{i}, \Delta_{1}} + (1-\rho) \delta_{\mu_{i},0} \delta_{\sigma_{i}, \Delta_{2}}\right\}\label{eq:sparse}
\end{equation}
\noindent where $\rho>0$ is the fraction of non-zero entries in $\bmu$. This model is closely related to the rare/weak features model introduced by Donoho and Jin in \cite{Donoho2008}. Indeed, in the case $\Delta_1 = \Delta_2 \equiv \Delta$ the signal-to-noise ratio of the model is proportional to $\rho/\sqrt{\Delta}$, with $\rho$ and $\Delta^{-1/2}$ playing the roles of the parameters $\epsilon$ and $\mu_{0}$ setting the "rareness" and "strength" of the features in \cite{Donoho2008}. 

The formulas given in Theorem \ref{the:1} simplify considerably for this model (see Appendix \ref{sec:app:simpli} for details), and therefore can be readily used to characterise the learning performance of different losses and penalties. For instance, one fundamental question we can address is when learning a sparse solution with the $\ell_1$ regularization is advantageous over the usual $\ell_2$. Figure \ref{fig:eg1:lasso} compares the learning curves computed from Theorem \ref{the:1} for the lasso and ridge estimators, with optimal regularisation strength $\lambda^{\star}(\alpha) =\text{argmin}~\epsilon_{g}(\alpha, \lambda)$ at fixed sparsity $\rho=0.1$. We can see that lasso performs considerably better than ridge in the regime where $\Delta_1 / \Delta_2 \lesssim 1$, while it achieves a similar performance when $\Delta_1 / \Delta_2  \gtrsim 1$. This is quite intuitive: the sparse directions are uninformative, and therefore learning the relevant features is better when they are stronger. Figure \ref{fig:eg1:setup} (right) shows how the sparsity of the learned estimator $\bW^{\star}$ and the optimal regularisation $\lambda^{\star}$ depends on the sample complexity $\alpha = n/d$. Interestingly, for $\Delta_1/\Delta_2 =0.1$ or lower there is a region of small $\alpha$ in which basis pursuit ($\lambda = 0^{+}$) \cite{Donoho2018} is optimal, and the sparsity of the estimator has a curious non-monotonic behaviour with $\alpha$.

\subsection{Separability transition for the cross-entropy loss}\label{sec:examples:multi}

We now  consider the problem of classifying points of $K$ Gaussian clusters using a cross-entropy loss
\begin{equation}
\ell(\by,\bx)=-\sum_{k=1}^K y_k\ln\frac{e^{x_k}}{\sum_{\kappa=1}^{K} e^{x_\kappa}}.
\end{equation}
Using the results of Theorem \ref{th:2}, we estimate the dependence of the generalisation error $\epsilon_g$ on the sample complexity $\alpha$ and on the regularisation $\lambda$. We assume Gaussian means $\bmu_k\sim\mathcal N(\mathbf 0,\bI_d/d)$ and diagonal covariances $\bSigma_k\equiv\bSigma=\Delta\bI_d$. Finally, we adopt a ridge penalty, $r(\bW)\equiv\|\bW\|^2_{\rm F}/2$, and we focus on the case of balanced clusters, i.e., $\rho_k=\nicefrac{1}{K}$ for the sake of simplicity.

\begin{figure}
\centering
    \includegraphics[scale=0.41]{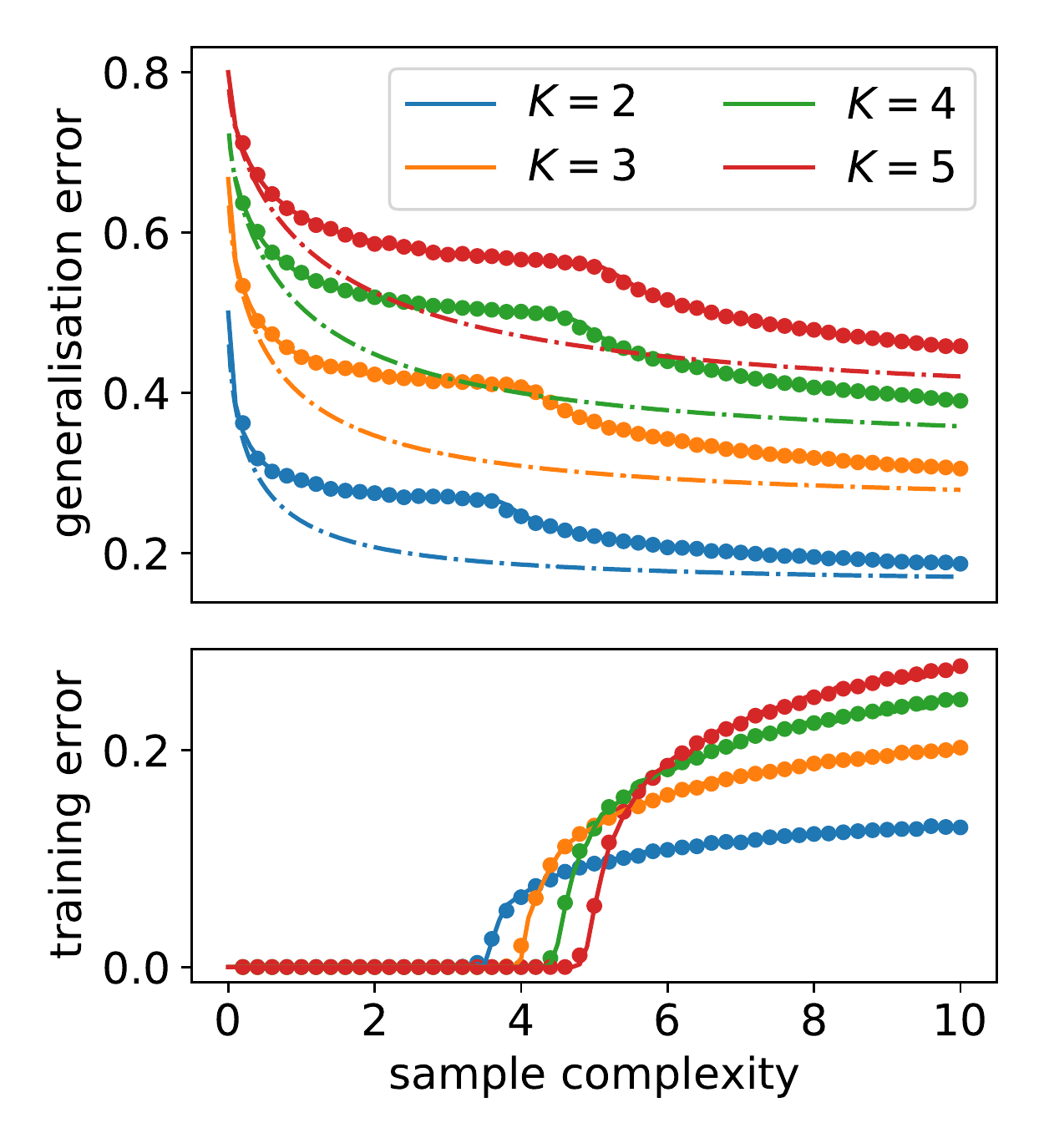}
    \includegraphics[scale=0.41]{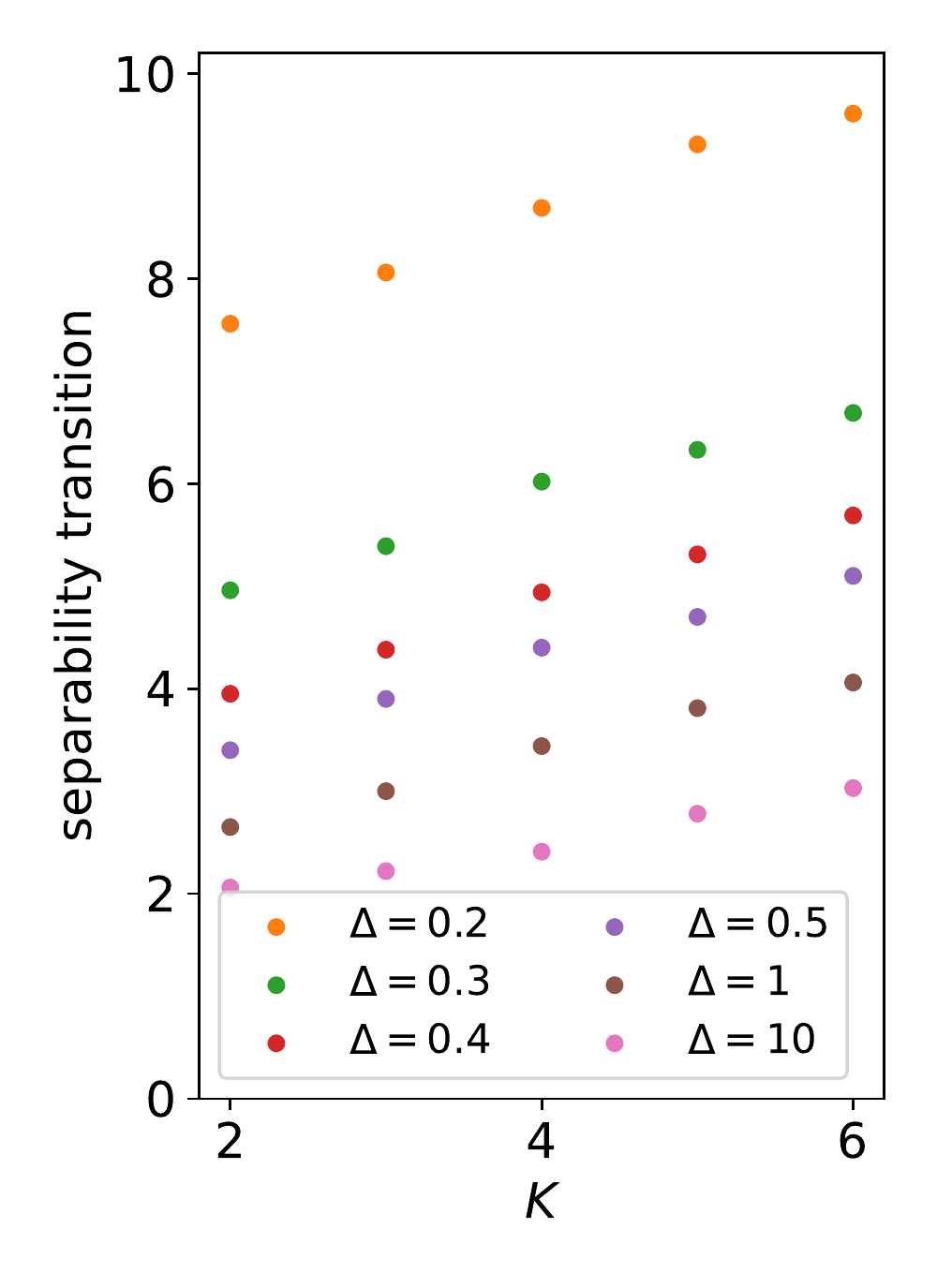}
    \includegraphics[scale=0.41]{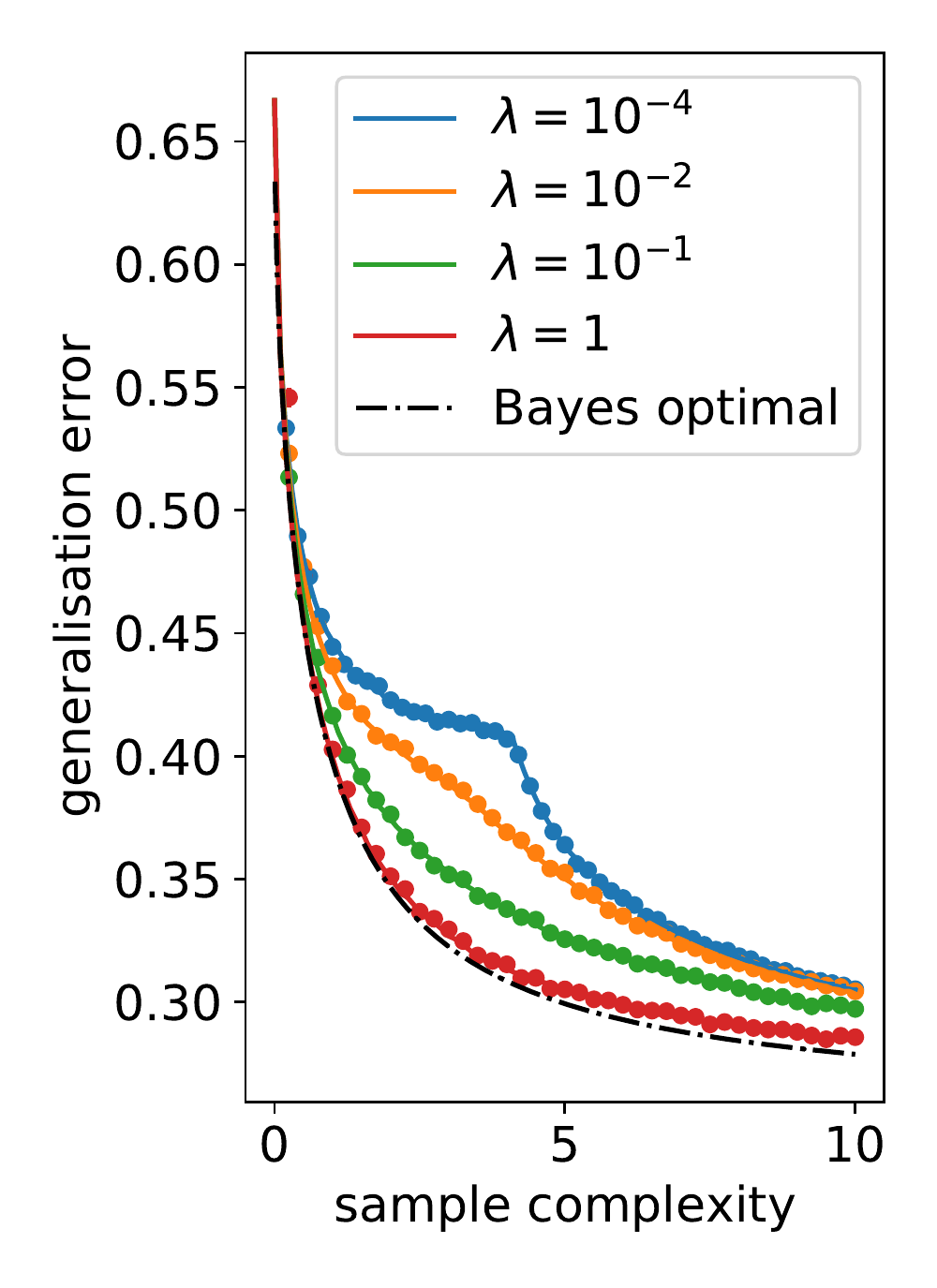}
    \caption{Classification of $K$ Gaussian clusters in $d$ dimensions, having Gaussian means and $\bSigma_k\equiv \bSigma=\Delta\bI_d$ with $\Delta=\nicefrac{1}{2}$. In all presented cases, a quadratic regularisation has been adopted. Numerical experiments have been performed using $d=10^3$. (\textbf{Left}) Generalisation error $\epsilon_g$ (\textit{top}) and training error $\epsilon_t$ (\textit{bottom}) as function of $\alpha$ at $\lambda=10^{-4}$. Theoretical predictions (full lines) are compared with the results of numerical experiments (dots). Dash-dotted lines of the corresponding color represent, for comparison, the Bayes-optimal error. The results of numerical experiments are in agreement with the theoretical predictions in all cases. (\textbf{Center}) Separability transition $\alpha_K^\star$ as a function of $K$ in the same setting for different values of $\Delta$. (\textbf{Right}) Dependence of the generalisation error on the regularization $\lambda$ for $K=3$ and $\Delta=\nicefrac{1}{2}$ in the balanced case, $\rho_k=\nicefrac{1}{K}$. 
    }
    \label{fig:manyk}
        \vspace{-0.3cm}
\end{figure}

\paragraph{Separability transition} In Fig.~\ref{fig:manyk} (left top) we plot the generalisation error $\epsilon_g$ as function of $\alpha$ for $2\leq K\leq 5$ and $\lambda=10^{-4}$. The smooth curve is obtained solving the fixed point equations in Theorem~\ref{the:1} and plugging the results in the formulas in Theorem \ref{th:2}. The results of numerical experiments are obtained averaging over $10^2$ instances of the problem solved using the \texttt{LogisticRegression} module in the \texttt{Scikit-learn} package \cite{pedregosa2011scikit}. An excellent agreement is observed. For each pair $(K,\Delta)$ and for vanishing regularisation $\lambda\to 0^{+}$ we observe a double-descent-like behaviour in the generalisation error. Indeed, the cusp $\alpha^\star_K(\Delta)$ in the generalisation error corresponds to the point in which the cross-entropy estimator ceases to perfectly interpolate the data, revealing the existence of a separability transition of the type discussed in \cite{Candes2020} for Gaussian i.i.d.~data. As stressed therein, a phase of perfect separability of the data points corresponds to a regime in which the maximum-likelihood estimate does not exist with probability one. This is visible, in the same figure (left bottom), from the training error $\epsilon_t$ that is identically zero for $\alpha<\alpha^\star_K$, and strictly positive otherwise. Our result extends the observations in \cite{Thrampoulidis2019, Mignacco2020}, where an analytic expression for $\alpha^\star_2$ has been given in the case of for $K=2$, $\bmu_1=-\bmu_2$ Gaussian vector, generalising the classical result of Cover \cite{Cover1965}. The separability transition point $\alpha^\star_K$ decreases with $\Delta$ and increases with $K$, showing that for larger $K$ it is easier to separate the different clusters: this intuitively follows from the fact that, at fixed $\alpha$ and $\Delta$, each cluster is given by $\nicefrac{\alpha d}{K}$ points, i.e., fewer for increasing $K$ and therefore easier to classify, see Fig.~\ref{fig:manyk} (center). 
\begin{wrapfigure}{r}{0.47\textwidth}
    \centering
    \includegraphics[scale=0.41]{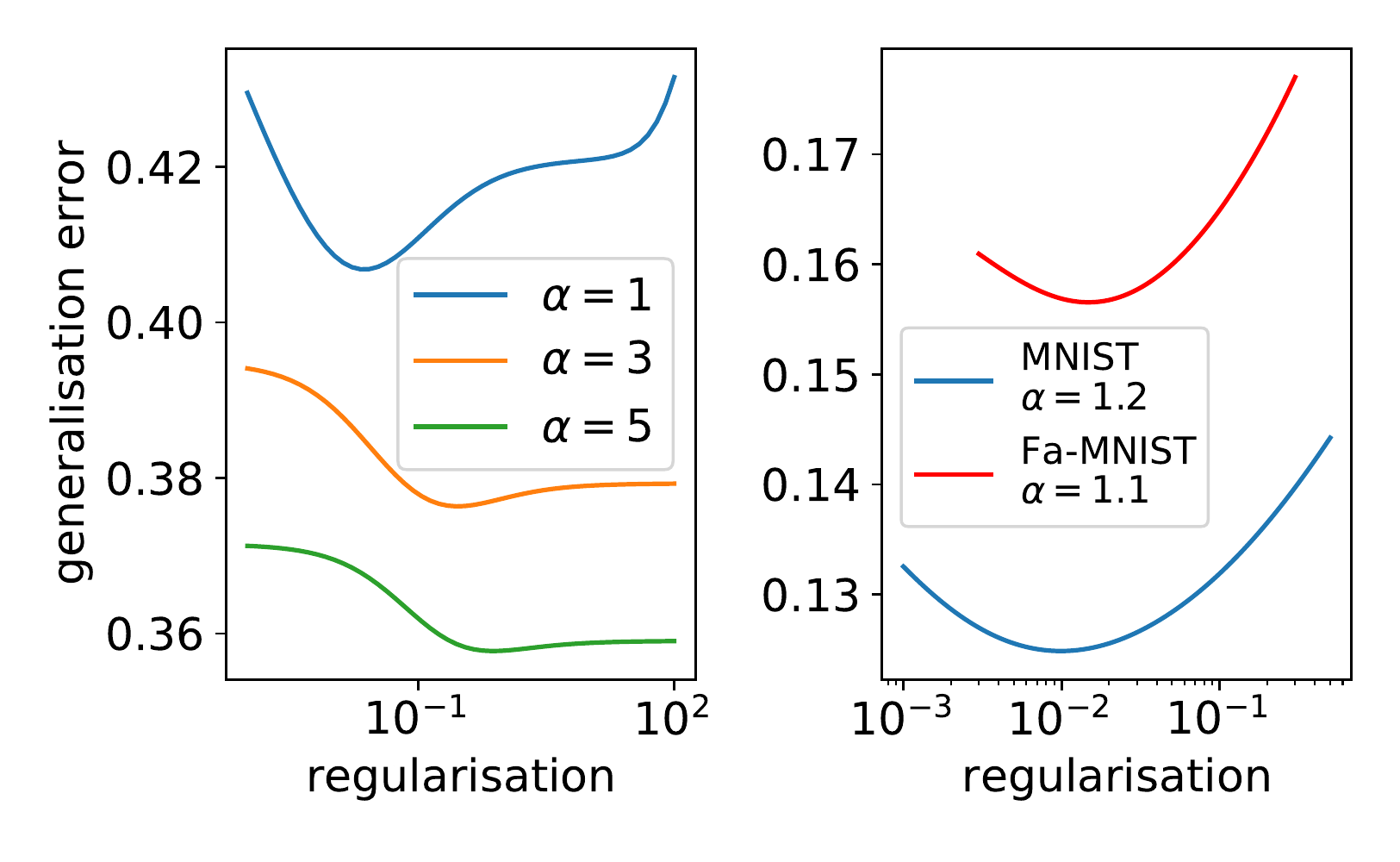}
    \caption{(\textbf{Left.}) Generalisation error obtained using ridge regression in the case of two balanced Gaussian clusters having $\bSigma_1=\frac{1}{10}\bI_d$ and $\bSigma_2=\frac{1}{100}\bI_d$ as function of $\lambda$ for different values of the sample complexity $\alpha$. (\textbf{Right}) Generalisation error $\epsilon_g$ as a function of $\lambda$ at fixed $\alpha$ in the binary classification of MNIST and in the FashionMNIST via logistic regression (see Sec.~\ref{sec:examples:mnist} for details).
    }\vspace{-0.4cm}
    \label{fig:EvsLambda}
\end{wrapfigure} 

\paragraph{The role of regularisation} In Fig.~\ref{fig:manyk} (right) we compare the performances of the cross-entropy loss with respect to the Bayes-optimal error (detailed in Appendix \ref{sec:app:bayes}) for different strength $\lambda$ of the regularisation, assuming all identical diagonal covariances $\bSigma_k\equiv\bSigma=\Delta\bI_d$. In the case of balanced clusters (i.e., $\rho_k=\nicefrac{1}{K}$ for all $k$) it is observed that the generalisation error approaches the Bayes-optimal error for $\lambda\to+\infty$. The same phenomenology has been observed in \cite{Dobriban2018, Mignacco2020} in the $K=2$ case with opposite means and generic loss, and in \cite{Thrampoulidis2020} for $K>2$ for the square loss. 
Using the concentration results of Section~\ref{sec:main}, we investigated the robustness of this result in the case of balanced clusters but with different covariances and various losses. First, we considered two opposite \textit{balanced} clusters with $\bSigma_1=\Delta_1\bI_d$ and $\bSigma_2=\Delta_2\bI_2$, $\Delta_1\neq\Delta_2$, and we estimated the generalisation error at fixed sample complexity as function of $\lambda\in[10^{-4},10^2]$ using ridge regression. As shown in Fig.~\ref{fig:EvsLambda} (left), the regularisation strength optimising the error is finite, and in particular depends on the sample complexity. This situation is closer to what is observed in real problems with balanced data analysed using logistic regression. Indeed, using the covariances from real data sets such as MNIST or Fashion-MNIST yields a similar behaviour, see Fig.~\ref{fig:EvsLambda} (right), with an optimal $\lambda$ that is found to be finite.

\subsection{Binary classification with real data}
\label{sec:examples:mnist}

A recent line of works has reported that the asymptotic learning curves of simple regression tasks on real data sets can be well approximated by a surrogate Gaussian model matching the first two moments of the data \cite{bordelon2020, jacot2020kernel, Loureiro2021}. However, this analysis was fundamentally restricted to least-squares regression, and considerable deviation from the Gaussian model was observed for classification tasks~\cite{Loureiro2021}. 
Authors of \cite{Couillet2020} have shown that realistic-looking data from trained generative adversarial networks behave like Gaussian mixtures. Here, we pursue these observations and investigate whether Theorem \ref{th:2} can be used to capture the learning curves of classification tasks on two popular data sets: MNIST \cite{lecunMNIST} and Fashion-MNIST \cite{xiao2017fashion}. Our goal is to compare the performances of some classification tasks on them with the predictions provided by the theory for the Gaussian mixture model. 
\begin{figure}
    \centering
    \includegraphics[scale=0.41]{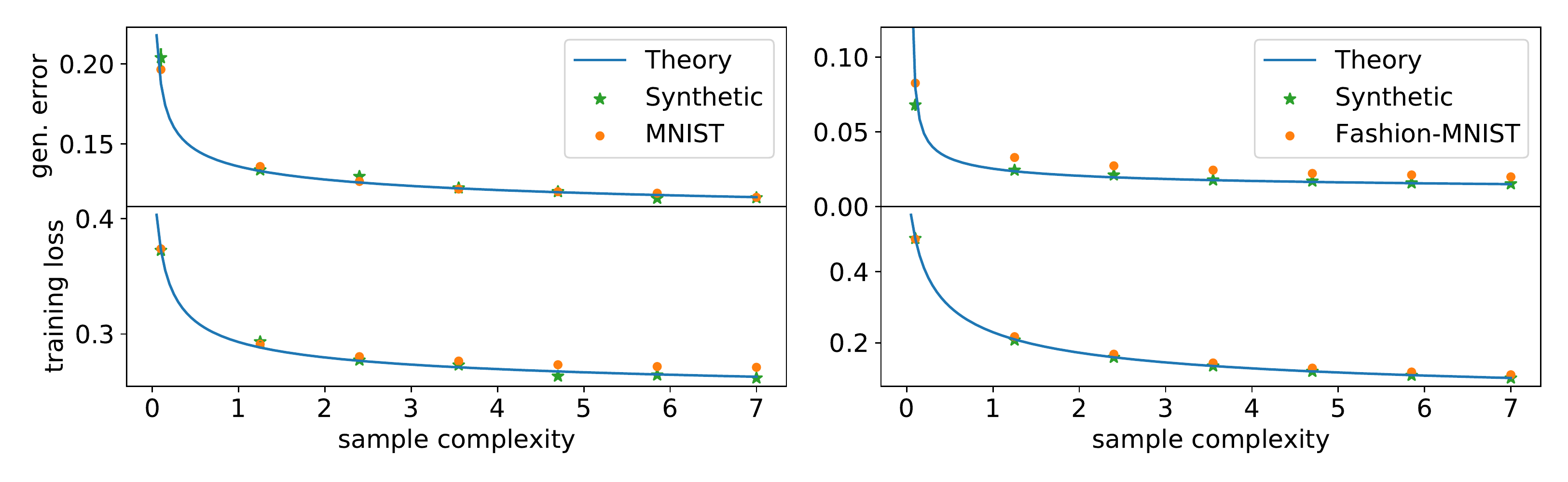}
    \caption{Generalisation error and training loss for the binary classification using the logistic loss on MNIST with $\lambda=0.05$ (\textbf{left}) and on Fashion-MNIST with $\lambda=1$ (\textbf{right}). The results are compared with synthetic data produced from the corresponding Gaussian mixture, and the theoretical prediction.}
    \label{fig:mnist}
        \vspace{-0.2cm}
\end{figure}

Both data sets consist of $n_{\rm tot}=7\times 10^4$ images $\hat\bx^\mu\in\mathbb R^d$, $d=784$. Each image $\hat\bx^\mu$ is associated to a label $\hat y^\mu=\{0,1,\dots,9\}$ specifying the type of represented digit (in the case of MNIST) or item (in the case of Fashion-MNIST). In both cases, we divided the database into two balanced classes (even vs odd digits for MNIST, clothes vs accessories for Fashion-MNIST), relabelling the elements $\hat\bx^\mu$ with $y^\mu\in\{-1,1\}$ depending on their class, and we selected $n<n_{\rm tot}$ elements to perform the training, leaving the others for the test of the performances. We adopted a logistic loss with $\ell_2$ regularisation. First, we performed logistic regression on the training real data set, then we tested the learned estimators on the remaining $n_{\rm tot}-n$ images. At the same time, for each class $k$ of the training set, we empirically estimated the corresponding mean $\bmu_k\in\mathbb R^{d}$ and covariance matrix $\bSigma_k\in\mathbb R^{d\times d}$. We then assumed that the classification problem on the real database corresponds to a Gaussian mixture model of $K=2$ clusters with means $\{\bmu_k\}_{k\in[2]}$ and covariances $\{\bSigma_k\}_{k\in[2]}$. Under this assumption, we computed the generalisation error and the training loss predicted by the theory inserting the empirical means and covariances in our general formulas. The results are given in Fig.~\ref{fig:mnist}, showing a good agreement between the theoretical prediction and the results obtained on MNIST and Fashion-MNIST. In Fig.~\ref{fig:mnist} we also plot, as reference, the results of a classification task performed on synthetic data, obtained generating a genuine Gaussian mixture with the means and covariances of the real data set.

\begin{figure}[ht]
    \centering
    \includegraphics[scale=0.41]{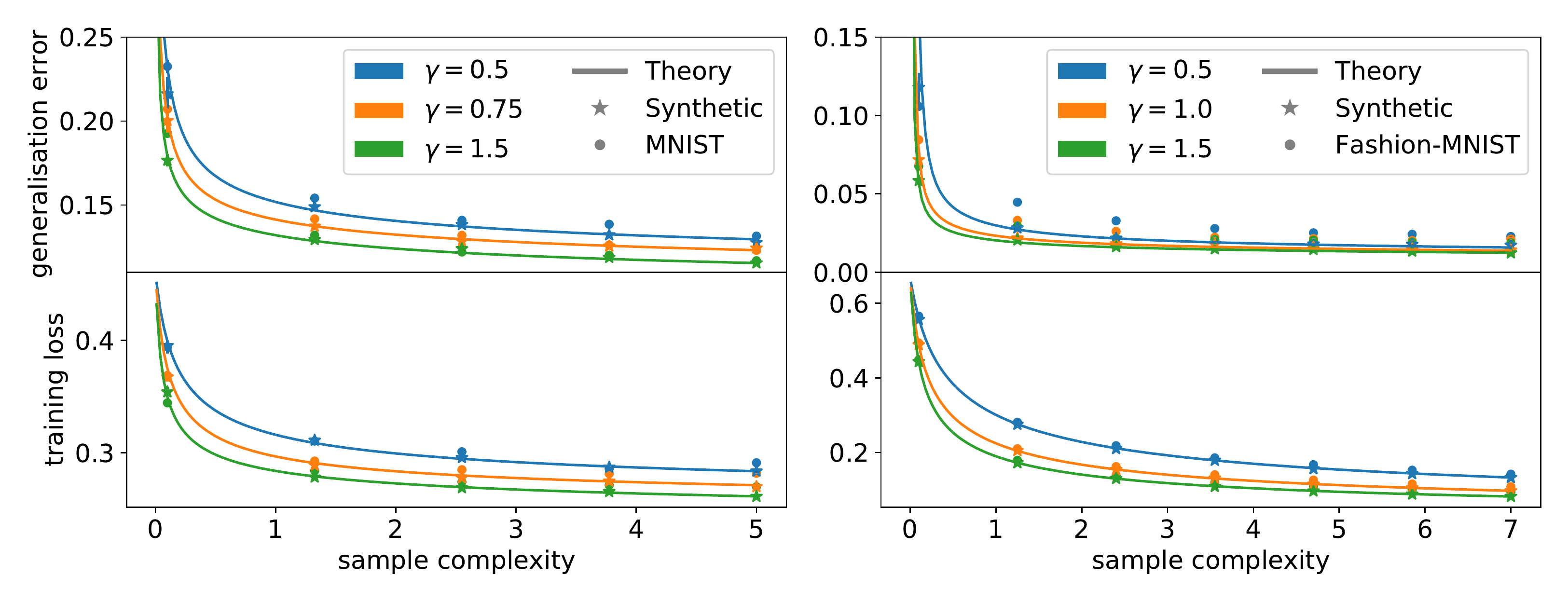}
    \caption{Generalisation error and training loss for the binary classification using the logistic on MNIST at $\lambda=0.05$ (\textbf{left}) and on Fashion-MNIST at $\lambda=1$ (\textbf{right}) in the random feature setting, for different values of $\gamma$, ratio between the number of parameters and the dimensionality of the data. The results are compared with synthetic data produced with the same $\gamma$, and the theoretical prediction.}
    \label{fig:mnistrf}
    \vspace{-0.2cm}
\end{figure}

Interestingly, this construction can also be used to analyse the learning curves of classification problems with non-linear feature maps \cite{Loureiro2021}, e.g. random features \cite{rahimi2007random}. In this case, we first apply to our data set a feature map $\bx^\mu=\mathrm{erf}(\bF\hat\bx^\mu)$, where $\bF\in\mathbb R^{p\times d}$ has i.i.d.~Gaussian entries and the $\mathrm{erf}$ function is applied component wise. The classification task is then performed on the new data set $\{(\bx^\nu, y^\nu)\}_{\nu\in[n]}$, the new data points $\bx^\nu$ living in a $p$-dimensional space. We denote $\gamma=\nicefrac{p}{d}$. We repeat the analysis described above in this new setting. Our results are in Fig.~\ref{fig:mnistrf} for different values of $\gamma$. Once again, the generalisation error and the training loss are shown to be in a good agreement with both the theoretical prediction and the synthetic data sets obtained plugging in our formulas the real data means and the real data covariance matrices. 
\section*{Acknowledgements}
We thank Rapha\"{e}l Berthier and Francesca Mignacco for discussions. We acknowledge funding from the ERC under the European Union’s Horizon 2020 Research and Innovation Program Grant Agreement 714608- SMiLe, and from the French National Research Agency grants ANR-17-CE23-0023-01 PAIL. GS is grateful to EPFL for its generous hospitality during the finalization of the project.

\appendix
\part*{Appendix}
\section{Proof}
\label{sec:app:proof}
This appendix presents the proof of the main technical result, Theorem \ref{the:1}. Throughout the whole proof, we assume that the set of conditions from Sec.~\ref{set:assump} is verified.

\subsection{Required background}
\label{sec:req_back}
In this Section, we give an overview of the main concepts and tools on approximate message passing algorithms which will be required for the proof.\vspace{0.1cm}

We start with some definitions that commonly appear in the approximate message-passing literature, see e.g. \cite{bayati2011dynamics,javanmard2013state,berthier2020state}.
The main regularity class of functions we will use is that of pseudo-Lipschitz functions, which roughly amounts to functions with polynomially bounded first derivatives. We include the required scaling w.r.t.~the dimensions in the definition for convenience.
\begin{definition}[Pseudo-Lipschitz function]
For $k,K \in \mathbb{N}^{*}$ and any $n,m \in \mathbb{N}^{*}$, a function $\bphi \colon \mathbb{R}^{n \times K} \to \mathbb{R}^{m \times K}$ is called a \emph{pseudo-Lipschitz of order $k$} if there exists a constant $L(k,K)$ such that for any $\bx,\by \in \mathbb{R}^{n \times K}$, 
\begin{equation}
    \frac{\norm{\bphi(\bx)-\bphi(\by)}_{\rm F}}{\sqrt{m}} \leqslant L(k,K) \left(1+\left(\frac{\norm{\bx}_{\rm F}}{\sqrt{n}}\right)^{k-1}+\left(\frac{\norm{\by}_{\rm F}}{\sqrt{n}}\right)^{k-1}\right)\frac{\norm{\bx-\by}_{\rm F}}{\sqrt{n}}
\end{equation}
where $\norm{\bullet }_{\rm F}$ denotes the Frobenius norm.
Since $K$ will be kept finite, it can be absorbed in any of the constants.
\end{definition}
For example, the function $f:\mathbb{R}^{n}\to \mathbb{R}, \bx \mapsto \frac{1}{n}\norm{\bx}_{2}^{2}$ is pseudo-Lipshitz of order 2.

\paragraph{Moreau envelopes and Bregman proximal operators} In our proof, we will also frequently use the notions of Moreau envelopes and proximal operators, see e.g. \cite{parikh2014proximal,bauschke2011convex}. These elements of convex analysis are often encountered in recent works on high-dimensional asymptotics of convex problems, and more detailed analysis of their properties can be found for example in \cite{thrampoulidis2018precise,Loureiro2021}. For the sake of brevity, we will only sketch the main properties of such mathematical objects, referring to the cited literature for further details. In this proof, we will mainly use proximal operators acting on sets of real matrices endowed with their canonical scalar product. Furthermore, proximals will be defined with matrix valued parameters in the following way: for a given convex function $f \colon\mathbb{R}^{d \times K}\to\mathbb R$, a given matrix $\bX \in \mathbb{R}^{d \times K}$ and a given symmetric positive definite matrix $\bV \in \mathbb{R}^{K \times K}$ with bounded spectral norm, we will consider operators of the type
\begin{equation}
    \Argmin_{\bT \in \mathbb{R}^{d \times K}} \left\{f(\bT)+\frac{1}{2}\mathrm{tr}\left((\bT-\bX)\bV^{-1}(\bT-\bX)^{\top}\right)\right\}
\end{equation}
This operator can either be written as a standard proximal operator by factoring the matrix $\bV^{-1}$ in the arguments of the trace:
\begin{equation}
    \Prox_{f(\bullet\bV^{1/2})}(\bX\bV^{-1/2})\bV^{1/2} \in \mathbb{R}^{d \times K}
\end{equation}
or as a Bregman proximal operator \cite{bauschke2003bregman} defined with the Bregman distance induced by the strictly convex, coercive function (for positive definite $\bV$)
\begin{equation}
\label{breg_dist}
    \bX \mapsto \frac{1}{2}\mathrm{tr}(\bX\bV^{-1}\bX^\top)
\end{equation}
which justifies the use of the Bregman resolvent 
\begin{equation}
\label{bregman_res}
   \Argmin_{\bT \in \mathbb{R}^{d \times K}} \left\{f(\bT)+\frac{1}{2}\mathrm{tr}\left((\bT-\bX)\bV^{-1}(\bT-\bX)^{\top}\right)\right\}=\left(\mathrm{Id}+\partial f(\bullet)\bV\right)^{-1}(\bX)
\end{equation}
Many of the usual or similar properties to that of standard proximal operators (i.e.~firm non-expansiveness, link with Moreau/Bregman envelopes,\dots) hold for Bregman proximal operators defined with the function \eqref{breg_dist}, see e.g. \cite{bauschke2003bregman,bauschke2018regularizing}. In particular, we will be using the equivalent notion to firmly nonexpansive operators for Bregman proximity operators, called $\emph{D-firm}$ operators. Consider the Bregman proximal defined with a differentiable, strictly convex, coercive function $g : \mathcal{X} \to \mathbb{R}$, where $\mathcal{X}$ is a given input Hilbert space. Let $T$ be the associated Bregman proximal of a given convex function $f : \mathcal{X} \to \mathbb{R}$, i.e., for any $\mathbf{x} \in \mathcal{X}$
\begin{equation}
    T(\mathbf{x}) = \Argmin_{\mathbf{y} \in \mathcal{X}} \left\{f(\mathbf{x})+D_{g}(\mathbf{x},\mathbf{y})\right\}
\end{equation}
Then $T$ is \emph{D-firm}, meaning it verifies
\begin{equation}
\label{eq:D-firm}
    \langle T\bx-T\by, \nabla g(T\bx)-\nabla g(T\by) \rangle \leqslant  \langle T\bx-T\by, \nabla g(\bx)-\nabla g(\by) \rangle
\end{equation}
for any $\mathbf{x},\mathbf{y}$ in $\mathcal{X}$.
\paragraph{Gaussian concentration}
Gaussian concentration properties are at the root of this proof. Such properties are reviewed in more detail, for example, in \cite{berthier2020state,Loureiro2021}.

\paragraph{Notations}
For any set of matrices $\{\bA_{k}\in \mathbb{R}^{n_{k}\times d_{k}}\}_{k\in[K]}$ we will use the following notation:
\begin{equation}
	\begin{bmatrix}\bA_{1}&&&\\ &\bA_{2}&(*)&\\ &(*)&\ddots& \\ &&&\bA_{K}\end{bmatrix} \equiv \left[\bA_{k}\right]_{k=1}^{K} \in \mathbb{R}^{(\sum_{k=1}^{K}n_{k})\times (\sum_{k=1}^{K}d_{k})}
\end{equation}
where the terms denoted by $(*)$ will be zero most of the time. \\
For a given function $\bphi\colon \mathbb{R}^{d \times K}\to \mathbb{R}^{d \times K}$, we write :
\begin{equation}
    \bphi(\bX) = \begin{bmatrix}
    \bphi^{1}(\bX) \\
    \vdots \\
    \bphi^{d}(\bX)\end{bmatrix} \in \mathbb{R}^{d \times K}
\end{equation}
where each $\bphi^{i} \colon \mathbb{R}^{d \times K} \to \mathbb{R}^{K}$. We then write the $K \times K$ Jacobian 
\begin{equation}
\label{eq:jacob}
    \frac{\partial \bphi^{i}}{\partial \bX_{j}}(\bX) = \begin{bmatrix}\frac{\partial \phi^{i}_1(\bX)}{\partial X_{j1}} & \cdots & \frac{\partial \phi^{i}_1(\bX)}{\partial X_{jK}} \\
    \vdots&\ddots&\vdots \\
    \frac{\partial \phi^{i}_K(\bX)}{\partial X_{j1}} & \cdots & \frac{\partial \phi^{i}_K(\bX)}{\partial X_{jK}}
    \end{bmatrix} \in \mathbb{R}^{K \times K}
\end{equation}
For a given matrix $\bQ\in \mathbb{R}^{K \times K}$, we write $\bZ \in \mathbb{R}^{n \times K} \sim \mathcal{N}(\boldsymbol 0,\bQ\otimes \bI_{n})$ to denote that the lines of $\bZ$ are sampled i.i.d.~from $\mathcal{N}(\boldsymbol 0,\bQ)$. Note that this is equivalent to saying that $\bZ = \tilde{\bZ}\bQ^{1/2}$ where $\tilde{\bZ} \in \mathbb{R}^{n \times K}$ is an i.i.d.~standard normal random matrix. The notation $\stackrel{\rm P}\simeq$ denotes convergence in probability.

\paragraph{Approximate message-passing}
Approximate message-passing algorithms are a statistical physics inspired family of iterations which can be used to solve high dimensional inference problems \cite{zdeborova2016statistical}. One of the central objects in such algorithms are the so called \emph{state evolution equations}, a low-dimensional recursion equations which allow to exactly compute the high dimensional distribution of the iterates of the sequence.
In this proof we will use a specific form of matrix-valued approximate message-passing iteration with non-separable non-linearities. In its full generality, the validity of the state evolution equations in this case is an extension of the works of \cite{javanmard2013state,berthier2020state} included in \cite{gerbelot2021graph}. Consider a sequence Gaussian matrices $\bA(n) \in \mathbb{R}^{n\times d}$ with i.i.d.~Gaussian entries, $A_{ij}(n)\sim\mathcal{N}(0,\nicefrac{1}{d})$. For each $n,d \in \mathbb{N}$, consider two sequences of pseudo-Lipschitz functions
\begin{equation}
\{\bh_{t} : \mathbb{R}^{n \times K}\to \mathbb{R}^{n \times K}\}_{t \in \mathbb{N}}\qquad \{\be_{t} : \mathbb{R}^{d \times K}\to \mathbb{R}^{d \times K}\}_{t \in \mathbb{N}}
\end{equation}
initialized on $\bu^{0} \in \mathbb{R}^{d\times K}$ in such a way that the limit
\begin{equation}
    \lim_{d \to \infty}\frac{1}{d}\norm{\be_{0}(\bu^{0})^\top\be_{0}(\bu^{0})}_{\rm F}
\end{equation}
exists and it is finite, and recursively define:
\begin{align}
\label{canon_AMP}
	&\hspace{1cm} \bu^{t+1} = \bA^{\top}\bh_{t}(\bv^{t})-\be_{t}(\bu^{t})\langle \bh_{t}'\rangle^\top \\
	&\hspace{1cm} \bv^{t} = \bA\be_{t}(\bu^{t})-\bh_{t-1}(\bv^{t-1})\langle \be_{t}'\rangle^\top
\end{align}
where the dimension of the iterates are $\bu^{t} \in \mathbb{R}^{d \times K}$ and $\bv^{t} \in \mathbb{R}^{n \times K}$. The terms in brackets are defined as:
\begin{equation}
\label{eq:mat_ons}
 \langle \bh_{t}' \rangle = \frac{1}{d}\sum_{i=1}^{n} \frac{\partial \bh_{t}^{i}}{\partial \bv_{i}}(\bv^{t})\in \mathbb{R}^{K \times K} \quad \langle \be_{t}' \rangle = \frac{1}{d}\sum_{i=1}^{d} \frac{\partial \be_{t}^{i}}{\partial \bu_{i}}(\bu^{t}) \in \mathbb{R}^{K \times K}
\end{equation}
We define now the \emph{state evolution recursion} on two sequences of matrices $\{\bQ_{r,s}\}_{s,r\geqslant0}$ and $\{\hat{\bQ}_{r,s}\}_{s,r\geqslant1}$ initialized with $\bQ_{0,0} = \lim_{d \to \infty}\frac{1}{d}\be_{0}(\bu^{0})^\top \be_{0}(\bu^{0})$:
\begin{align}
   &\bQ_{t+1,s} = \bQ_{s,t+1} = \lim_{d \to \infty} \frac{1}{d}\mathbb{E}\left[\be_{s}(\hat{\bZ}^{s})^{\top}\be_{t+1}(\hat{\bZ}^{t+1})\right] \in \mathbb{R}^{K \times K} \\
    &\hat{\bQ}_{t+1,s+1} = \hat{\bQ}_{s+1,t+1} = \lim_{d \to \infty} \frac{1}{d}\mathbb{E}\left[\bh_{s}(\bZ^{s})^{\top}\bh_{t}(\bZ^{t})\right] \in \mathbb{R}^{K \times K}
\end{align}
where $(\bZ^{0},\dots,\bZ^{t-1}) \sim \mathcal{N}(\boldsymbol 0,\{\bQ_{r,s}\}_{0\leqslant r,s \leqslant t-1} \otimes \bI_{n}),(\hat{\bZ}^{1},\dots,\hat{\bZ}^{t}) \sim \mathcal{N}(\boldsymbol 0,\{\bhQ_{r,s}\}_{1\leqslant r,s \leqslant t} \otimes \bI_{d})$. Then the following holds
\begin{theorem1}
\label{th:SE}
In the setting of the previous paragraph, for any sequence of pseudo-Lipschitz functions $\phi_{n}:(\mathbb{R}^{n \times K}\times \mathbb{R}^{d \times K})^{t} \to \mathbb{R}$, for $n,d \to +\infty$:
\begin{equation}
\phi_{n}(\bu^{0}, \bv^{0},\bu^{1},\bv^{1},\dots,\bv^{t-1},\bu^{t}) \stackrel{\rm P}\simeq \mathbb{E}\left[\phi_{n}\left(\bu^{0},\bZ^{0},\hat{\bZ}^{1},\bZ^{1},\dots,\bZ^{t-1},\hat{\bZ}^{t}\right)\right]
\end{equation}
where $(\bZ^{0},\dots,\bZ^{t-1}) \sim \mathcal{N}(\boldsymbol 0,\{\bQ_{r,s}\}_{0\leqslant r,s \leqslant t-1} \otimes \bI_{n}),(\hat{\bZ}^{1},\dots,\hat{\bZ}^{t}) \sim \mathcal{N}(\boldsymbol 0,\{\bhQ_{r,s}\}_{1\leqslant r,s \leqslant t} \otimes \bI_{n})$.
\end{theorem1}

\paragraph{Spatial coupling} As a final premise to our proof, we give the intuition on how to handle a specific form of block random matrix in an AMP sequence.
Consider the iteration \eqref{canon_AMP}, but this time with a Gaussian matrix defined as:
\begin{equation}
    \bA = \begin{bmatrix}\bA_{1}&&&\\ & \bA_{2}&(0)&\\ &(0)&\ddots & \\ &&&\bA_{K}\end{bmatrix} \in \mathbb{R}^{n \times Kd}
\end{equation}
where $\bA_{k} \in \mathbb{R}^{n_{k}\times d}$ and $\sum_{k=1}^{K}n_{k} = n$, which leads to the following form for the products between matrices and non-linearities:
\begin{equation}
    \bA^{\top}\bh_{t}(\bv^{t}) = \begin{bmatrix}\bA_{1}^{\top}\bh_{1,t}(\bv^{t}) \\ \bA_{2}^{\top}\bh_{2,t}(\bv^{t}) \\ \vdots\\\bA_{K}^{\top}\bh_{K,t}(\bv^{t})\end{bmatrix} \in \mathbb{R}^{Kd \times K} \quad \bA\be_{t}(\bu_{t}) = \begin{bmatrix}\bA_{1}\be_{1,t}(\bu^{t}) \\ \bA_{2}\be_{2,t}(\bu^{t}) \\ \vdots\\\bA_{K}\be_{K,t}(\bu^{t})\end{bmatrix} \in \mathbb{R}^{n \times K}
\end{equation}
where the blocks $\bh_{k,t}(\bv^{t}) \in \mathbb{R}^{n_{k} \times K}, \be_{k,t}(\bu_{t}) \in \mathbb{R}^{d \times K}$ may depend on their full arguments or only the corresponding blocks depending on their separability. This iteration can be embedded as a subset of the iterates of a larger sequence defined with the full version of the matrix $\bA$ and non-linearities defined as:
\begin{align}
	&\be_{t} : \mathbb{R}^{Kd\times K^{2}} \to \mathbb{R}^{Kd\times K^{2}} \notag \\
	&\mbox{generates} \quad \begin{bmatrix}\be_{1,t}\left(\bullet\right)&&&\\ & \be_{2,t}\left(\bullet\right)&(0)&\\ &(0)&\ddots& \\ &&& \be_{K,t}\left(\bullet\right)\end{bmatrix} \in \mathbb{R}^{Kd \times K^{2}} \\
	&\bh_{t} : \mathbb{R}^{n\times K^{2}} \to \mathbb{R}^{n \times K^{2}} \notag \\
	&\mbox{generates} \quad \begin{bmatrix}\bh_{1,t}\left(\bullet\right)&&&\\ & \bh_{2,t}\left(\bullet\right)&(0)&\\ &(0)&\ddots& \\ &&& \bh_{K,t}\left(\bullet\right)\end{bmatrix} \in \mathbb{R}^{n \times K^{2}} 
\end{align} 
The original iteration is recovered on the block diagonal of the variables of the iteration. This new setting, however, introduces a richer correlation structure, since each block will be described by a different $K\times K$ covariance according to the state evolution equations. Formally, the new covariance will be a $K^{2}\times K^{2}$ block diagonal matrix. Also, the shape of the Onsager term changes from a matrix of size $K \times K$ to one of size $K^{2} \times K^{2}$ with a $K\times (K\times K)$ block diagonal structure. 

\subsection{Reformulation of the problem}
We start by reformulating problem \eqref{ERM} in a way that can be treated efficiently using an AMP iteration. With respect to the main part of this paper, we will consider the estimator $\bW \in \mathbb{R}^{d \times K}$ instead of $\mathbb{R}^{K \times d}$. The normalized (so that the cost does not diverge with the dimension) problem \eqref{ERM} then reads:
\begin{equation}
\label{ERM_bis}
    \min_{\bW\in \mathbb{R}^{d \times K},\bbb\in \mathbb{R}^{K}} \frac{1}{d}\left(L\left(\bY,\frac{1}{\sqrt{d}}\bX\bW+\bbb\right)+r(\bW)\right)
\end{equation}
where we have introduced the function $L: \mathbb{R}^{n\times K}\times\mathbb{R}^{n\times K}\to \mathbb{R}$ acting as
\begin{equation}
\left(\bY,\frac{1}{\sqrt{d}}\bX\bW+\bbb\right) \mapsto \sum_{\nu=1}^n\ell\left(\by^\nu,\frac{\bW\bx^\nu}{\sqrt d}+\bbb\right),
\end{equation}
the matrix $\bY \in \mathbb{R}^{n \times K}$ of concatenated one-hot encoded labels, and the matrix of concatenated means $\bMM \in \mathbb{R}^{K\times d}$ (in the main we took the transpose $\bMM \in \mathbb{R}^{d \times K}$). Until further notice, we will drop the scaling $\frac{1}{d}$ for convenience and study the problem
\begin{equation}
    \min_{\bW\in \mathbb{R}^{d \times K},\bbb\in \mathbb{R}^{K}} L\left(\bY,\frac{1}{\sqrt{d}}\bX\bW+\bbb\right)+r(\bW)
\end{equation}
We will write $L_{k}$ the application of $\ell$ on each row of a sub-block in $\mathbb{R}^{n_{k}\times K}$.
Without loss of generality, we can assume that the samples are grouped by clusters in the data matrix, giving the following form for $\bX \in \mathbb{R}^{n \times d}$,
separating the mean part $\bY\bMM$ and centered Gaussian part :
\begin{equation}
	\bX = 
	\bY\bMM+\tilde{\bZ}\bSigma
	\in \mathbb{R}^{n\times d}
\end{equation}
where we have introduced the block-diagonal matrix $\tilde{\bZ}$ and the $Kd\times d$ full-column-rank matrix $\bSigma$
\begin{equation}\label{Ztilde}
\tilde{\bZ}=\begin{bmatrix}\bZ_{1}&&&\\ & \bZ_{2}&(0)&\\ &(0)&\ddots& \\ &&&\bZ_{K}\end{bmatrix}\in \mathbb R^{n\times Kd}\qquad \bSigma=\begin{bmatrix} \bSigma_{1}^{1/2}\\\bSigma^{1/2}_{2}\\\vdots\\\bSigma_{K}^{1/2}\end{bmatrix}\in\mathbb R^{Kd\times d}.
\end{equation}
Here $(\bZ_{1},\dots,\bZ_{K}) \in \mathbb{R}^{n_{1}\times d}\times\cdots\times \mathbb{R}^{n_{K}\times d}$ are independent, i.i.d. standard normal matrices.

The product between the data matrix and the weights $\bW \in \mathbb{R}^{d \times K}$ then reads:
\begin{equation}
	\bX\bW =\bY\bMM\bW+\tilde{\bZ}\bSigma\bW =\begin{bmatrix}\bY_1\bMM\bW+\bZ_{1}\bSigma_{1}^{1/2}\bW\\\vdots\\\bY_K\bMM\bW+\bZ_{K}\bSigma_{K}^{1/2}\bW\end{bmatrix} \in \mathbb{R}^{n\times K}
\end{equation}
where each $\bY_{k}\in \mathbb{R}^{n_{k}\times d}$ is a $n_{k}$ copy of the same label vector. 
Defining now $\tilde{\bW}=\bSigma\bW$, observe that
\begin{equation}
	\tilde{\bW} = \bSigma\bW \quad\implies \quad  \bW = \bSigma^{+}\tilde{\bW},
\end{equation}
where
\begin{equation}
\bSigma^{+}\equiv  \left(\sum_{k=1}^{K}\bSigma_{k}\right)^{-1}\bSigma^\top
\end{equation}
is the pseudo-inverse of the matrix $\bSigma$. The optimization problem \eqref{ERM} is thus equivalent to
\begin{equation}
\label{inter-ERM}
	\inf_{\substack{\tilde{\bW}\in \mathbb{R}^{Kd\times K}\\ \bbb\in \mathbb{R}^{K}}} \sum_{k=1}^{K} L_{k}\left(\frac{1}{\sqrt{d}}\bY_{k}\bMM\bW+\frac{1}{\sqrt{d}}\bZ_{k}\tilde{\bW}_{k},\bbb\right)+r\left(\bSigma^{+}\tilde{\bW}\right)
\end{equation}
Introducing the order parameter $\bM = \frac{1}{\sqrt{d}}\bMM\bW \in \mathbb{R}^{K \times K}$, we reformulate Eq.(\ref{inter-ERM}) as a constrained optimization problem :
\begin{align}
	\inf_{\bM,\tilde{\bW},\bbb} &\sum_{k=1}^{K} L_{k}\left(\frac{1}{\sqrt{d}}\bY_{k}\bM+\frac{1}{\sqrt{d}}\bZ_{k}\tilde{\bW}_{k}\right)+r\left(\bSigma^{+} \tilde{\bW}\right) \\ 
	&\mbox{s.t.} \quad \frac{1}{\sqrt{d}}\bMM\bSigma^{+} \tilde{\bW} = \bM \notag
\end{align}
whose Lagrangian form, with dual parameters $\hat{\bM}\in\mathbb{R}^{K \times K}$, reads
\begin{equation}\comprimi
	\inf_{\bM,\tilde{\bW},\bbb}\sup_{\hat{\bM}} \sum_{k=1}^{K} L_{k}\left(\bY_{k}\bM+\frac{1}{\sqrt{d}}\bZ_{k}\tilde{\bW}_{k}\right)+r\left(\bSigma^{+} \tilde{\bW}\right)
	+\mathrm{tr}\left(\hat{\bM}^{\top}\left(\bM-\frac{1}{\sqrt{d}}\bMM\bSigma^{+} \tilde{\bW}\right)\right).
\end{equation}
This is a proper, closed, convex, strictly feasible optimization problem, thus strong duality holds and we can invert the order of the inf-sup to focus on the minimization problem in $\tilde{\bW}$ for fixed $\bM,\hat{\bM},\bbb$:
\begin{equation}
	\label{student_1}
	\inf_{\tilde{\bW}\in \mathbb{R}^{Kd\times K}} \tilde{L}\left(\frac{1}{\sqrt{d}}\tilde{\bZ}\tilde{\bW}\right)+\tilde{r}(\tilde{\bW})
\end{equation}
where we defined the loss term
\begin{subequations}
\label{new_functions}
\begin{equation}
\begin{split}
	\tilde{L}: \mathbb{R}^{n\times K} &\to \mathbb{R} \\
	\frac{1}{\sqrt{d}}\tilde{\bZ}\tilde{\bW} &\mapsto \sum_{k=1}^{K} L_{k}\left(\bY_{k}\bM+\frac{1}{\sqrt{d}}\bZ_{k}\tilde{\bW}_{k}\right) = \sum_{k=1}^{K}\sum_{i=1}^{n_{k}}\ell\left(\left[\bY_{k}\bM+\frac{1}{\sqrt{d}}\bZ_{k}\tilde{\bW}_{k}\right]_{i}\right)
\end{split}
\end{equation} 
and the regularisation term
\begin{equation}
\begin{split}
	\tilde{r}: \mathbb{R}^{Kd\times K} &\to \mathbb{R} \\
	\tilde{\bW} &\mapsto r\left(\bSigma^{+} \tilde{\bW}\right)+\mathrm{tr}\left(\hat{\bM}^{\top}\left(\bM-\frac{1}{\sqrt{d}}\bMM\bSigma^{+} \tilde{\bW}\right)\right)
\end{split}
\end{equation}
\end{subequations}
where $\bSigma^\top \tilde{\bW} = \sum_{k=1}^{K}\bSigma_{k}^{1/2}\bW_{k}$ and $\tilde{\bZ} = \left[\bZ_{k}\right]_{k=1}^{K} \in \mathbb{R}^{n \times Kd}$ is an i.i.d. standard normal block diagonal matrix as in Eq.~\eqref{Ztilde}.
\subsection{Finding the AMP sequence}
\label{build_AMP}
We now need to find an AMP iteration relating to $\tilde{\bW}$ that solve the optimization problem in Eq.~\eqref{student_1}. Although this section is not written as a formal proof, all steps are rigorous. The aim is to give the reader the core intuition on how the AMP iteration is found, otherwise the solution may feel ``parachuted''. The reader uninterested in the underlying intuition may directly skip to the next section.
In order to find the appropriate sequence two key points must be considered :
\begin{itemize}
	\item the fixed point of the sequence has to match the optimality condition of Eq.~\eqref{student_1};
	\item the update rule of the sequence should have the form Eq.~\eqref{canon_AMP} for the state evolution equations to hold.
\end{itemize}
These two points completely determine the form of the iteration. In the subsequent derivation, we absorb the scaling $\frac{1}{\sqrt{d}}$ in the matrix $\tilde{\bZ}$, such that the $\bZ_{k} \in \mathbb{R}^{n_{k}\times d}$ have i.i.d. $\mathcal{N}(0,\nicefrac{1}{d})$ elements.
\paragraph{Resolvent of the loss term}
Going back to problem Eq.~\eqref{student_1}, its optimality condition will look like :
\begin{align}
	&\tilde{\bZ}^\top \partial \tilde{L}(\bZ\tilde{\bW})+\partial\tilde{r}(\tilde{\bW}) = 0 \iff \begin{bmatrix}\bZ_{1}^\top &&&\\ & \bZ_{2}^\top &(0)&\\ &(0)&\ddots& \\ &&&\bZ_{K}^\top \end{bmatrix}\begin{bmatrix}\partial \tilde{L}_{1}(\bZ_{1}\tilde{\bW}_{1})\\\partial \tilde{L}_{2}(\bZ_{2}\tilde{\bW}_{2}))\\\vdots\\\partial \tilde{L}_{K}(\bZ_{K}\tilde{\bW}_{K}))\end{bmatrix}+\partial \tilde{r}(\tilde{\bW}) = 0
\end{align}
where each $\bZ_{k} \in \mathbb{R}^{n_{k}\times d}$, and the subdifferential of $\tilde{L}$ is separable across blocks of size $n_{k}\times d$, and $\partial \tilde{r}(\tilde{\bW}) \in \mathbb{R}^{Kd \times K}$. Following the intuition of spatial coupling, we introduce the \textit{full} matrix $\bZ \in \mathbb{R}^{n \times Kd}$, with i.i.d.~$\mathcal{N}(0,\nicefrac{1}{d})$ entries. The optimality condition can then be written on the diagonal of a $Kd \times K^{2}$ matrix:
\begin{multline}\bZ^\top \begin{bmatrix}\partial \tilde{L}_{1}(\bZ_{1}\tilde{\bW}_{1})&&&\\ & \partial \tilde{L}_{2}(\bZ_{2}\tilde{\bW}_{2})&(0)&\\ &(0)&\ddots& \\ &&&\partial \tilde{L}_{K}(\bZ_{K}\tilde{\bW}_{K})\end{bmatrix}\\
+\begin{bmatrix}\partial \tilde{r}(\tilde{\bW})_{1}&&&\\ & \partial \tilde{r}(\tilde{\bW})_{2}&(0)&\\ &(0)&\ddots& \\ &&&\partial \tilde{r}(\tilde{\bW})_{K}\end{bmatrix} = \boldsymbol 0
\end{multline}
where $\partial \tilde{r}(\tilde{\bW})_{k}$ represents the $k$-th block of the subdifferential of $\tilde{r}$ which is non-separable across the blocks of $\tilde{\bW}$. To make the resolvents/proximals appear, we add the argument of the subdifferentials on both sides weighted by a (symmetric) positive definite matrix $\bS_{k} \in \mathbb{R}^{K \times K}$ which will be used to allow for Onsager correction while respecting the fixed point condition. Using the notation defined in section \ref{sec:req_back}
\begin{align}
	&\left[\bZ_{k}^\top \partial \tilde{L}_{k}(\bZ_{k}\tilde{\bW}_{k})\right]_{k=1}^{K}+\left[\partial \tilde{r}(\tilde{\bW})\right]_{k=1}^{K} = 0  \notag\\
	& \iff \left[\bZ_{k}^\top \partial \tilde{L}_{k}(\bZ_{k}\tilde{\bW}_{k})+\bZ_{k}^\top \bZ_{k}\tilde{\bW}_{k}\bS_{k}^{-1}\right]_{k=1}^{K}+\left[\partial \tilde{r}(\tilde{\bW})\right]_{k=1}^{K} = \left[\bZ_{k}^\top \bZ_{k}\tilde{\bW}_{k}\bS_{k}^{-1}\right]_{k=1}^{K} 
\end{align}
for a given set of positive definite matrices $\{\bS_{k}\}_{k\in[K]}$. Again, the reason for introducing different $\bS_{k}$ on each block is to match the expected structure of the Onsager term. We can introduce the resolvent, formally Bregman resolvent/proximal operator: 
\begin{equation}
	\bU_{k} \equiv \partial \tilde{L}_{k}(\bZ_{k}\tilde{\bW}_{k})\bS_{k}+\bZ_{k}\tilde{\bW}_{k} \iff \bZ_{k}\tilde{\bW}_{k} = \bR_{\tilde{L}_{k},\bS_{k}}(\bU_{k})
\end{equation}
where
\begin{equation}
\comprimi
\label{block_res_loss}
\begin{split}
	\bR_{\tilde{L}_{k},\bS_{k}}(\bU_{k}) &= (\mathrm{Id}+\partial \tilde{L}_{k}(\bullet)\bS_{k})^{-1}(\bU_{k}) \\
	&=\Argmin_{\bT \in \mathbb{R}^{n_{k}\times K}}\left\{\tilde{L}_{k}(\bT)+\frac{1}{2}\mathrm{tr}\left((\bT-\bU_{k})\bS_{k}^{-1}(\bT-\bU_{k})^\top \right)\right\} \\
	&= \Argmin_{\bT \in \mathbb{R}^{n_{k}\times K}}\left\{L_{k}(\bT)+\frac{1}{2}\mathrm{tr}\left((\bT-(\bY_{k}\bM+\bU_{k}))\bS_{k}^{-1}(\bT-(\bY_{k}\bM+\bU_{k}))^\top \right)\right\}-\bY_{k}\bM.
\end{split}
\end{equation}
In the previous expressions $\partial \tilde{L}_{k} \in \mathbb{R}^{n_{k}\times K}$ and $\bV_{k} \in \mathbb{R}^{K \times K}$. 
The following formulation of the optimality condition is reached:
\begin{align}
	&\left[\bZ_{k}^\top \bU_{k}\bS_{k}^{-1}\right]_{k=1}^{K}+\left[\partial\tilde{r}(\tilde{\bW})_{k}\right]_{k=1}^{K} = \left[\bZ_{k}^\top \bR_{\tilde{L}_{k},\bS_{k}}(\bU_{k})\bS_{k}^{-1}\right]_{k=1}^{K} \notag\\
	&\iff\left[\bZ_{k}^\top \left(\bU_{k}-\bR_{\tilde{L}_{k},\bS_{k}}(\bU_{k})\right)\bS_{k}^{-1}\right]_{k=1}^{K}+\left[\partial \tilde{r}(\tilde{\bW})_{k}\right]_{k=1}^{K} = 0
\end{align}
\paragraph{Resolvent of the regularization term}
Determining the block decomposition of the subdifferential of the regularization term is 
less simple. We would like a block expression in the flavour of:
\begin{equation}
	\left[\partial \tilde{r}(\tilde{\bW})_{k}\right]_{k=1}^{K}+\left[\tilde{\bW}_{k}\bhS^{-1}_{k}\right]_{k=1}^{K} = \left[\tilde{\bW}_{k}\bhS^{-1}_{k}\right]_{k=1}^{K}
\end{equation}
At this point it becomes clear that we cannot consider the resolvent as acting on $\tilde{\bW}\in \mathbb{R}^{Kd \times K}$ otherwise there could be only one $\bhS \in \mathbb{R}^{K \times K}$ and there would be a mismatch with the expected form of the Onsager terms. As specified by the definitions Eq.(\ref{new_functions}),
the subdifferential of $\tilde{r}$ is acting on the whole block diagonal matrix $[\tilde{\bW}_{k}]_{k=1}^{K}$, by way of summation due to the action of the pseudo-inverse $\bSigma^{+}$. We can thus consider its proximal acting on $\mathbb{R}^{d \times K^{2}}$ as $[\tilde{\bW}_{1}\tilde{\bW}_{2}...\tilde{\bW}_{K}]$ (note that we could have also worked directly with a block diagonal matrix in $\mathbb{R}^{Kd \times K^{2}}$).
Proceeding in this way, we can directly write our expression as an application parametrized by another set of positive definite matrices $\{\bhS_{k}\}_{k\in[K]}$.
\begin{equation}
\label{eq:res_reg}
\bhU = \left(\mathrm{Id}+\partial \tilde{r}(\bullet)\bhS\right)(\tilde{\bW}) \qquad \tilde{\bW} = \bR_{\tilde{r},\bhS}(\bhU)
\end{equation}
where
\begin{equation}
\begin{split}
\bR_{\tilde{r},\bhS}(\bhU) &= \left(\mathrm{Id}+\partial \tilde{r}(\bullet)\bhS\right)^{-1}(\bhU) \\
	&= \Argmin_{\bT \in \mathbb{R}^{d \times K^{2}}}\left\{\tilde{r}(\bT)+\frac{1}{2}\mathrm{tr}\left((
	\bT-\bhU)\bhS^{-1}(\bT-\bhU)^\top \right)\right\}
\end{split}
\end{equation}
where $\bhS \in \mathbb{R}^{K^{2}\times K^{2}}$ block diagonal, and $\bhU\in \mathbb{R}^{d\times K^{2}}$.
This would lead to the equivalent optimality condition for the regularization part:
\begin{equation}
	\bhU\bhS^{-1} = \bR_{\tilde{r},\bhS}(\bhU)\bhS^{-1} \iff \left[\bhU_{k}\bhS_{k}^{-1}\right]_{k=1}^{K} = \left[\bR_{\tilde{r},\bhS,k}(\bhU)\bhS_{k}^{-1}\right]_{k=1}^{K}
\end{equation}
We now need to figure out the block structure of this resolvent since we want to spread it across a block diagonal matrix. Let $\bC = \sum_{k=1}^{K}\bSigma_{k}$, so that $\bSigma^{+} = \bC^{-1}\bSigma^{\top}$, and the blocks $\bT_{k}\in\mathbb{R}^{d \times K}$ are the solution to the minimization problem
\begin{align}
	\min_{\{\bT_{k}\}_{{k\in [K]}}\in (\mathbb{R}^{d \times K})^{K}} r(\bC^{-1}\sum_{k=1}^{K}\bSigma^{1/2}_{k}\bT_{k})&+\frac{1}{2}\mathrm{tr}\left((\bT-\bhU)\bhS^{-1}(\bT-\bhU^\top )\right) \notag \\
	&+\mathrm{tr}\left(\hat{\bM}^{\top}\left(\bM-\frac{1}{\sqrt{d}}\bMM\bSigma^+\bT\right)\right)
\end{align}
Let $\tilde{\bT} = \bC^{-1}\sum_{k=1}^{K}\bSigma_{k}^{1/2}\bT_{k} \in \mathbb{R}^{d \times K}$, and the equivalent reformulation as a constraint optimization problem:
\begin{align}
	\min_{\substack{\bT_{k\in [K]}\in \mathbb{R}^{d \times K}\\\tilde{\bT}\in \mathbb{R}^{d \times K}}} &r(\tilde{\bT})+\frac{1}{2}\mathrm{tr}\left((\bT-\bhU)\bhS^{-1}(\bT-\bhU^\top )\right)+\mathrm{tr}\left(\hat{\bM}^{\top}\left(\bM-\frac{1}{\sqrt{d}}\bMM\tilde{\bT}\right)\right) \\
	&\mbox{s.t.} \quad \tilde{\bT} = \bC^{-1}\sum_{k=1}^{K}\bSigma_{k}^{1/2}\bT_{k} \notag
\end{align}
This is a feasible convex problem under convex constraint with a strongly convex term, it thus has a unique solution and strong duality holds.
Introducing the Lagrange multiplier $\blambda \in \mathbb{R}^{d \times K}$, we get the equivalent representation:
\begin{align}
	\min_{\substack{\bT_{k\in [K]}\in \mathbb{R}^{d \times K}\\\tilde{\bT}\in \mathbb{R}^{d \times K}}}\max_{\blambda \in \mathbb{R}^{d\times K}} r(\tilde{\bT})&+\sum_{k=1}^{K}\mathrm{tr}\left((\bT_{k}-\bhU_{k})\bhS_{k}^{-1}(\bT_{k}-\bhU_{k})^{\top}\right) \notag
	\\&+\mathrm{tr}\left(\blambda^\top \left(\tilde{\bT}- \bC^{-1}\sum_{k=1}^{K}\bSigma_{k}^{1/2}\bT_{k}\right)\right)
	+\mathrm{tr}\left(\hat{\bM}^{\top}\left(\bM-\frac{1}{\sqrt{d}}\bMM\tilde{\bT}\right)\right).
\end{align}
The optimality condition for this problem reads:
\begin{align}
&\partial_{\tilde{\bT}}: \quad \partial r(\tilde{\bT})+\blambda-\frac{1}{\sqrt{d}}\bMM^\top \bhM = 0 \\
&\partial_{\bT}: \quad (\bT_{k}-\bU_{k})\bhS_{k}^{-1} = \bSigma_{k}^{1/2}\bC^{-1}\blambda \qquad \forall k \in [K] \\
&\partial_{\blambda}: \quad \tilde{\bT}=\bC^{-1}\sum_{k=1}^{K}\bSigma_{k}^{1/2}\bT_{k}
\end{align}
Using the 
gradient condition on $\bT$, we get 
\begin{equation}
	\sum_{k=1}^{K}\bSigma^{1/2}_{k}(\bT_{k}-\bhU_{k})\bhS_{k}^{-1} = \blambda
\end{equation}
The constraint $\tilde{\bT}=\bC^{-1}\sum_{k=1}^{K}\bSigma_{k}^{1/2}\bT_{k}$ is solved by $\bT_{k} = \bSigma_{k}^{1/2}\tilde{\bT}$ which gives the solution for $\blambda$
\begin{equation}
	\blambda = \sum_{k=1}^{K}\bSigma_{k}^{1/2}(\bSigma^{1/2}_{k}\tilde{\bT}-\bhU_{k})\bhS_{k}^{-1} = \sum_{k=1}^{K}\bSigma_{k}\tilde{\bT}\bhS_{k}^{-1}-\sum_{k=1}^{K}\bSigma_{k}^{1/2}\bhU_{k}\bhS_{k}^{-1}
\end{equation}
and prescribes the following form for $\tilde{\bT}$, as solution to the problem
\begin{align}
\label{block_res_reg}
	&\partial r(\tilde{\bT})+\sum_{k=1}^{K}\bSigma_{k}\tilde{\bT}\bhS_{k}^{-1}-\sum_{k=1}^{K}\bSigma_{k}^{1/2}\bhU_{k}\bhS_{k}^{-1}-\frac{1}{\sqrt{d}}\bMM^\top \hat{\bM} = 0 \notag \\
	&\iff \Argmin_{\tilde{\bT}} r(\tilde{\bT})+\frac{1}{2}\sum_{k=1}^{K}\bSigma_{k}\tilde{\bT}\bhS_{k}^{-1}\tilde{\bT}-\left(\sum_{k=1}^{K}\bSigma_{k}^{1/2}\bhU_{k}\bhS_{k}^{-1}+\frac{1}{\sqrt{d}}\bMM^\top \hat{\bM}\right)\tilde{\bT}
\end{align}
We then recover $\bT$ from $\bT = \boldsymbol{\Sigma}\tilde{\bT}$. Thus, defining the function
\begin{align}
\label{def_eta}
    &\beeta:\mathbb{R}^{d\times K^{2}}\to \mathbb{R}^{d \times K} \notag\\
    &\bhU \mapsto \Argmin_{\tilde{\bT}} r(\tilde{\bT})+\frac{1}{2}\sum_{k=1}^{K}\bSigma_{k}\tilde{\bT}\bhS^{-1}_{k}\tilde{\bT}-\left(\sum_{k=1}^{K}\bSigma_{k}^{1/2}\bhU_{k}\bhS_{k}^{-1}+\frac{1}{\sqrt{d}}\bMM^\top \hat{\bM}\right)\tilde{\bT}
\end{align}
the block decomposition of the resolvent for the regularizer reads:
\begin{equation}
    \bR_{\tilde{r},\bhS,k}(\bhU) = \bSigma_{k}^{1/2}\beeta(\bhU)
\end{equation}

\paragraph{Matching the optimality condition with the AMP fixed point}
The global optimality condition then reads:
\begin{align}
\label{prox-opti}
	&\left[\bZ_{k}^\top \left(\bR_{\tilde{L}_{k},\bS_{k}}(\bU_{k})-\bU_{k}\right)\bS_{k}^{-1}\right]_{k=1}^{K} = \left[(\bhU_{k}-\bR_{\tilde{r},\bhS,k}(\bhU))\bhS_{k}^{-1}\right]_{k=1}^{K} \\
	&\left[\bZ_{k}\bR_{\tilde{r},\bhS,k}(\bhU)\right]_{k=1}^{K} = \left[\bR_{\tilde{L}_k,\bS_{k}}(\bU_{k})\right]_{k=1}^{K}
\end{align}
where both equations should be satisfied. We can now define update functions based on the previously obtained block decomposition. The fixed point of the matrix-valued AMP Eq.(\ref{canon_AMP}) reads:
\begin{align}
	\mathrm{Id}+\be(\bu)\langle \bh' \rangle^\top  &= \bZ^\top \bh(\bv) \\
	\mathrm{Id}+\bh(\bv)\langle \be' \rangle^\top  &= \bZ\be(\bu) 
\end{align}
Matching this fixed point with the optimality condition Eq.(\ref{prox-opti}) suggests the following mapping:
\begin{equation}
\begin{split}
\bh_{k}(\bU_{k}) &= \left(\bR_{\tilde{L}_{k},\bS_{k}}(\bU_{k})-\bU_{k}\right)\bS_{k}^{-1}, \\
\be_{k}(\bhU) &= \bR_{\tilde{r},\bhS,k}(\bhU\bhS),
\end{split}\qquad
\begin{split}
\bS_{k} &= \langle \be'_{k} \rangle,\\
\bhS_{k} &= -\langle \bh_{k}'\rangle^{-1},
\end{split}
\end{equation}
where we redefined $\bhU\equiv \bhU\bhS$ in \eqref{eq:res_reg}, and the subscripts on the non-linearities are block indexes.
\subsection{Proof of Theorem \ref{the:1} using the AMP sequence}
Following the analysis carried out in the previous section, define the following two sequences of non-linearities, for fixed values of the parameters $\hat{\bM},\bM,\bbb$ and any $\bu \in \mathbb{R}^{d\times K^{2}}, \bv \in \mathbb{R}^{n \times K}$ :
\begin{align}
	&\be_{t} : \mathbb{R}^{Kd\times K^{2}} \to \mathbb{R}^{Kd\times K^{2}} \notag \\
	&\bu\mapsto \quad \begin{bmatrix}\be_{1,t}\left(\bu\right)&&&\\ & \be_{2,t}\left(\bu\right)&(0)&\\ &(0)&\ddots& \\ &&& \be_{K,t}\left(\bu\right)\end{bmatrix} \in \mathbb{R}^{Kd \times K^{2}} \\
	&\bh_{t} : \mathbb{R}^{n\times K^{2}} \to \mathbb{R}^{n \times K^{2}} \notag \\
	&\bv \mapsto \quad \begin{bmatrix}\bh_{1,t}\left(\bv_{1}\right)&&&\\ & \bh_{2,t}\left(\bv_{2}\right)&(0)&\\ &(0)&\ddots& \\ &&& \bh_{K,t}{t}\left(\bv_{K}\right)\end{bmatrix} \in \mathbb{R}^{n \times K^{2}} 
\end{align} 
where $\bY_{k} \in \mathbb{R}^{n_{k} \times K}$ and 
\begin{align}
    \bh_{k,t} &: \mathbb{R}^{n_{k}\times K}\to \mathbb{R}^{n_{k}\times K}\notag\\ &\bv_{k} \mapsto \left(\bR_{\tilde{L}_{k},\bV^{k,t}}(\bv_{k})-\bv_{k}\right)(\bV^{k,t})^{-1} \notag \\
    &= \left(\Argmin_{\bT \in \mathbb{R}^{n_{k}\times K}}\left\{\tilde{L}_{k}(\bT)+\frac{1}{2}\mathrm{tr}\left((\bT-\bv_{k})(\bV_{k,t})^{-1}(\bT-\bv_{k})^\top \right)\right\}-\bv_{k}\right)(\bV_{k,t})^{-1} \notag \\
    &=\left(\mbox{Prox}_{L_{k}(\bullet(\bV_{k,t})^{1/2})}((\bY_{k}\bM+\bv_{k})(\bV_{k,t})^{-1/2})(\bV_{k,t})^{1/2}-(\bY_{k}\bM+\bv_{k})\right)(\bV_{k,t})^{-1} \\
    \be_{k,t} &: \mathbb{R}^{d\times K^{2}}\to \mathbb{R}^{d\times K}\notag\\
    \bu &\mapsto \bSigma^{1/2}_{k}\Argmin_{\tilde{\bT}\in \mathbb{R}^{d \times K}} r(\tilde{\bT})+\frac{1}{2}\sum_{k=1}^{K}\bSigma_{k}\tilde{\bT}\hat{\bV}_{k,t}\tilde{\bT}-\left(\sum_{k=1}^{K}\bSigma_{k}^{1/2}\bu_{k}+\frac{1}{\sqrt{d}}\bMM^\top \hat{\bM}\right)\tilde{\bT} \notag \\
    &=\bSigma^{1/2}_{k}\beeta(\bu(\hat{\bV}^{t})^{-1})
\end{align}
where $(\bV_{t},\hat{\bV}_{t}) \in \mathbb{R}^{K^{2} \times K^{2}}$, are defined as the block diagonal matrices $\left[\bV_{k,t}\right]_{k\in [K]},\left[\hat{\bV}_{k,t}\right]_{k\in [K]}$ such that
\begin{align}
    \bV_{k,t} = \langle (\be_{k,t-1})'\rangle \quad \hat{\bV}_{k,t} = -\langle (\bh_{k,t})'\rangle
\end{align}
using the notation from Eq.~\eqref{eq:mat_ons}. Now define the following sequence, initialized with 
\begin{align}
    &\bu^{0},\bh^{-1}\equiv 0, \hat{\bV}_{0} \\
    &\mbox{such that} \lim_{d \to \infty} \frac{1}{d}\norm{\be_{0}(\bu^{0})^\top \be_{0}(\bu^{0})}_{\rm F}<+\infty \thickspace \mbox{and} \thickspace \hat{\bV}_{0} \in \mathbb{S}_{K}^{++} \notag\\
&\mbox{and recursively define} \notag \\
\label{final_AMP}
	&\hspace{1cm} \bu^{t+1} = \boldsymbol{Z}^{\top}\bh_{t}(\bv^{t})-\be_{t}(\bu^{t})\langle \bh_{t}'\rangle^\top \\
	&\hspace{1cm} \bv^{t} = \boldsymbol{Z}\be_{t}(\bu^{t})-\bh_{t-1}(\bv^{t-1})\langle \be_{t}'\rangle^\top 
\end{align}
where $\boldsymbol{Z} \in \mathbb{R}^{n \times Kd}$ has i.i.d. $\mathcal{N}(0,\nicefrac{1}{d})$ elements, and in the Jacobians defining $\hat{\bV},\bV$, we used the notation from Eq.~\eqref{eq:jacob}.
\paragraph{State evolution equations}
The results from section \ref{build_AMP} show that the functions $\be^{t},\bh^{t}$ are proximals operators, and thus are Lipschitz continuous for all $t \in \mathbb{N}$, along with their block restrictions. Therefore the conditions of Theorem \ref{th:SE} are verified and we have the following lemma:
\begin{Lemma}
\label{block_SE}
Consider the sequence defined by Eq.(\ref{final_AMP}), for any fixed $\bM,\hat{\bM},\bbb$. For any sequences of pseudo-Lipschitz functions $\phi_{1,n}:\mathbb{R}^{d \times K^{2}}\to \mathbb{R},\phi_{2,n}:\mathbb{R}^{n\times K^{2}} \to \mathbb{R}$, for any $t \in \mathbb{N}^{*}$:
\begin{align}
    \phi_{1,n}(\bu_{1}^{t},\dots,\bu_{K}^{t})\stackrel{\rm P}\simeq\mathbb{E}\left[\phi_{1,n}(\bH_{1}(\hat{\bQ}_{1,t})^{1/2},\dots,\bH_{K}(\hat{\bQ}_{K,t})^{1/2})\right] \\
    \phi_{2,n}(\bv^{t}_{1},\dots,\bv^{t}_{K})\stackrel{\rm P}\simeq\mathbb{E}\left[\phi_{1,n}(\bG_{1}(\bQ_{1,t})^{1/2},\dots,\bG_{K}(\bQ_{K,t})^{1/2})\right]
\end{align}
where the matrices $\bH_{k} \in \mathbb{R}^{d \times K},\bG_{k} \in \mathbb{R}^{n_{k} \times K}$ are independent matrices with i.i.d. standard normal elements, and at each time step $t\geqslant 1$
\begin{align}
    \bQ_{k,t} &= \lim_{d \to  +\infty}\frac{1}{d}\mathbb{E}\left[\be_{k,t}(\{\bH_{k}(\hat{\bQ}_{k,t})^{1/2}(\hat{\bV}_{k,t})^{-1}\}_{k\in[K]})^{\top}\be_{k,t}(\{\bH_{k}(\hat{\bQ}_{k,t})^{1/2}(\hat{\bV}_{k,t})^{-1}\}_{k\in[K]})\right] \\ &\in \mathbb{R}^{K \times K} \notag\\
    \hat{\bQ}_{k,t} &= \lim_{d \to +\infty} \frac{1}{d}\mathbb{E}\left[\bh_{k,t-1}(\bG_{k}(\bQ_{k,t-1})^{1/2})^{\top}\bh_{k,t-1}(\bG_{k}(\bQ_{k,t-1})^{1/2})\right] \in \mathbb{R}^{K \times K} \\
    \bV_{k,t} &= \lim_{d\to +\infty}\frac{1}{d} \sum_{i=1}^{d}\frac{\partial \be_{k,t-1}(\{\bH_{k}(\hat{\bQ}_{k,t-1})^{1/2}\}_{k\in[K]})}{\partial (\bH_{k}(\hat{\bQ}_{k,t-1})^{1/2})_{i} } \in \mathbb{R}^{K \times K} \\
    \hat{\bV}_{k,t} &= -\lim_{d\to +\infty}\frac{1}{d} \sum_{i=1}^{n_{k}}\frac{\partial \bh_{k,t}(\bG_{k}(\bQ_{k,t})^{1/2})}{\partial (\bG_{k}(\bQ_{k,t})^{1/2})_{i}} \in \mathbb{R}^{K \times K}
\end{align}
where the sequence is initialized with $\hat{\bV}_{0},\be_{0},\bQ_{0,0} = \lim_{d \to \infty} \frac{1}{d}\norm{\be_{0}(\bu^{0})^\top \be_{0}(\bu^{0})}_{\rm F}$.
\end{Lemma}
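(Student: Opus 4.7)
}
The plan is to recognize the iteration in Eq.~\eqref{final_AMP} as a spatially coupled, matrix-valued AMP sequence with non-separable updates, and to apply the generic state-evolution statement of Theorem~\ref{th:SE}. First, I would verify that the hypotheses of Theorem~\ref{th:SE} are satisfied for the non-linearities $\be_t$ and $\bh_t$. By the derivation of Section~\ref{build_AMP}, the block components $\bh_{k,t}$ and $\be_{k,t}$ are, up to constant affine terms and multiplication by the fixed matrices $\bV_{k,t},\hat{\bV}_{k,t},\bSigma_k^{1/2}$, (Bregman) proximal operators of proper closed convex functions. These are firmly nonexpansive (in the $D$-firm sense recalled in Eq.~\eqref{eq:D-firm}), hence Lipschitz with a constant depending only on the spectra of $\bV_{k,t}, \hat{\bV}_{k,t}$ and $\bSigma_k$. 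Under assumption (A2) these spectra are bounded, so $\bh_t$ and $\be_t$ are pseudo-Lipschitz uniformly in $d$, and the initialization condition $\lim_{d\to\infty} \tfrac{1}{d}\|\be_0(\bu^0)^\top \be_0(\bu^0)\|_{\rm F}<\infty$ is guaranteed by our choice of $\bu^0$ and $\hat{\bV}_0$.

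Second, I would account for the block-diagonal structure of $\tilde{\bZ}$ in Eq.~\eqref{Ztilde}, which is not itself an i.i.d.~Gaussian matrix. Following the spatial-coupling construction recalled at the end of Section~\ref{sec:req_back}, I would embed the iteration as a subset of iterates of a larger AMP defined with the \emph{full} Gaussian matrix $\bZ \in \mathbb{R}^{n\times Kd}$ (which has i.i.d.~$\mathcal N(0,1/d)$ entries), and with block-diagonal non-linearities that act only on the diagonal blocks of their matrix argument. The original iterates are then recovered on the $K$ diagonal blocks of the lifted sequence. Because only the block-diagonal entries of the matrix argument enter the non-linearities, the state-evolution covariances and Onsager terms factorize into $K$ decoupled $K\times K$ blocks, indexed by $k\in[K]$, producing the block-diagonal covariance matrices $\{\bQ_{k,t}\}$, $\{\hat\bQ_{k,t}\}$ and Onsager corrections $\{\bV_{k,t}\}$, $\{\hat\bV_{k,t}\}$ that appear in the statement.

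Third, applying Theorem~\ref{th:SE} to this lifted iteration yields the joint Gaussian distribution of the iterates, with covariance given by the recursion of Theorem~\ref{th:SE}. Restricting back to the diagonal blocks and computing the (matrix) expectations of $\be_{k,t}^\top \be_{k,t}$ and $\bh_{k,t-1}^\top \bh_{k,t-1}$ gives the expressions stated for $\bQ_{k,t}$ and $\hat\bQ_{k,t}$. The Onsager matrices $\bV_{k,t}, \hat\bV_{k,t}$ are read off from the definitions $\bV_{k,t}=\langle (\be_{k,t-1})'\rangle$, $\hat\bV_{k,t}=-\langle(\bh_{k,t})'\rangle$, which by the sum-to-integral concentration of pseudo-Lipschitz averages converge to the expectations of the Jacobians displayed in the Lemma. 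Concentration of pseudo-Lipschitz test functions $\phi_{1,n}, \phi_{2,n}$ at a fixed step $t$ is then a direct application of the conclusion of Theorem~\ref{th:SE} after the block restriction.

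I expect the main technical obstacle to be the rigorous justification of the spatial-coupling embedding, i.e.~ensuring that the lifted AMP sequence genuinely satisfies the hypotheses of Theorem~\ref{th:SE}, with non-separable block-diagonal non-linearities whose dependence on the off-diagonal parts of the matrix argument is trivial, yet whose Jacobians and Gaussian averages must be interpreted with the correct $K^2\times K^2$ block structure. In particular, care is required to match the Onsager correction of the lifted iteration (which is $K^2\times K^2$ block diagonal) with the $K$ separate $K\times K$ Onsager terms $\{\bV_{k,t}\}$, $\{\hat\bV_{k,t}\}$ appearing in the original sequence; this is the step where the spatial-coupling framework of \cite{javanmard2013state} combined with the non-separable extension of \cite{berthier2020state,gerbelot2021graph} is invoked. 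Once this embedding is established, all remaining steps reduce to computing Gaussian expectations and Jacobians of the explicit proximal non-linearities defined in Section~\ref{build_AMP}.
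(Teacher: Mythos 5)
Your proposal follows essentially the same route as the paper: the paper's proof consists of noting that the non-linearities $\be_t,\bh_t$ are (Bregman) proximal operators, hence Lipschitz, so that the hypotheses of Theorem~\ref{th:SE} hold, and then reading off the block-decomposed state evolution through the spatial-coupling structure already built into the iteration Eq.~\eqref{final_AMP} (full i.i.d.\ Gaussian matrix with block-diagonal non-linearities). Your additional attention to the $K^{2}\times K^{2}$ block structure of the covariances and Onsager terms matches the discussion in Section~\ref{sec:req_back} and is consistent with the paper's argument.
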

\begin{proof}
Lemma \ref{block_SE} is a consequence of Theorem \ref{th:SE} whose assumptions have been verified in the paragraph.
\end{proof}
Note that in Lemma \ref{block_SE}, we have directly written the block decomposition of the state evolution corresponding to the iteration Eq.~\eqref{final_AMP}, which involves the block diagonal matrices $\bQ_{t},\hat{\bQ}_{t},\bV_{t},\hat{\bV}_{t}$ which are all in $\mathbb{R}^{K^{2} \times K^{2}}$. Using the notations introduced in section \ref{sec:req_back}
\begin{equation}
    \bV = \left[\bV_{k}\right]_{k=1}^{K} \thickspace  \hat{\bV} = \left[\hat{\bV}_{k}\right]_{k=1}^{K} \thickspace \bQ = \left[\bQ_{k}\right]_{k=1}^{K} \thickspace \hat{\bQ} = \left[\hat{\bQ}_{k}\right]_{k=1}^{K}
\end{equation}
Also note that we do not use the full state evolution giving the correlations across all time steps, but only use those at equal times $t$.
\paragraph{Trajectories and fixed point of the AMP sequence}
Now that we have a sequence with state evolution equations, the following two lemmas link the fixed points of this iteration to any optimal solution of problem Eq.(\ref{student_1}).
\begin{Lemma}
\label{lemma:fixed_point}
Consider any fixed point $\bV,\hat{\bV},\bQ, \hat{\bQ}$ of the state evolution equations from Lemma \ref{block_SE}. For any fixed point $\bu^{*},\bv^{*}$ of iteration Eq.(\ref{final_AMP}), the quantity
\begin{align}
    \bR_{\tilde{r},\hat{\bV}}(\bu^{*}\hat{\bV}^{-1}) = \left(\mathrm{Id}+\partial \tilde{r}(\bullet)\hat{\bV}^{—1}\right)(\bu^{*}\hat{\bV}^{-1})
\end{align}
is an optimal solution $\tilde{\bW}^{\star}$ of problem Eq.( \ref{student_1}). Furthermore 
\begin{equation}
\bR_{\tilde{L},\bV}(\bv^{*}) = (\mathrm{Id}+\partial\tilde{L}(\bullet)\bV)(\bv^{*}) = \bZ\tilde{\bW}^{\star}
\end{equation}
where the block decompositions of each resolvents have been explicitly calculated in section \ref{build_AMP}.
\end{Lemma}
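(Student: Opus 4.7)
The plan is to verify directly that the fixed-point relations of \eqref{final_AMP}, once the definitions of $\bh_t$ and $\be_t$ are unpacked, reduce to the KKT conditions of \eqref{student_1}. This equivalence is essentially guaranteed by the design carried out in Sec.~\ref{build_AMP}, so the proof is a bookkeeping exercise with Bregman-proximal identities rather than anything conceptually new. The strict convexity stipulated by (A1) then guarantees uniqueness of the minimizer, so matching the KKT conditions suffices to identify the optimum.

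First I would instantiate the fixed-point equations. Setting $\bu^{t+1}=\bu^t=\bu^*$, $\bv^t=\bv^{t-1}=\bv^*$ in \eqref{final_AMP}, and identifying the empirical Onsager coefficients with their state-evolution limits ($\langle \bh'\rangle^\top = -\hat{\bV}$, $\langle \be'\rangle^\top = \bV$), the iteration collapses to
\begin{align*}
\bu^* &= \bZ^\top \bh(\bv^*) + \be(\bu^*)\hat{\bV}, \\
\bv^* &= \bZ\be(\bu^*) - \bh(\bv^*)\bV.
\end{align*}
Second, I would unpack the proximal structure. The Bregman-proximal form of $\bh_k$ gives $\bh_k(\bv_k^*) = -\partial \tilde L_k(\bA_k^*)$ with $\bA_k^* := \bR_{\tilde L_k,\bV_k}(\bv_k^*) = \bv_k^* + \bh_k(\bv_k^*)\bV_k$, while the block decomposition derived in Eq.~\eqref{def_eta} yields $\be_k(\bu^*) = \bSigma_k^{1/2}\tilde{\bW}^\star$ with $\tilde{\bW}^\star := \beeta(\bu^*\hat{\bV}^{-1})$. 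Inserting $\be(\bu^*)$ into the second fixed-point relation and using the identity above gives $\bA_k^* = \bZ_k\bSigma_k^{1/2}\tilde{\bW}^\star = \bZ_k\tilde{\bW}_k^\star$, which, reassembled in block form, is precisely the second claim $\bR_{\tilde L,\bV}(\bv^*) = \bZ\tilde{\bW}^\star$.

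Third, I would substitute $\bh_k(\bv_k^*) = -\partial \tilde L_k(\bZ_k\tilde{\bW}_k^\star)$ into the first fixed-point equation. Multiplying on the left by $\bSigma_k^{1/2}$, summing over $k$, and using $\sum_k \bSigma_k = \bC$ together with the optimality condition of $\beeta$ from Eq.~\eqref{block_res_reg}, the resulting identity rearranges into $\tilde{\bZ}^\top\partial \tilde L(\tilde{\bZ}\tilde{\bW}^\star) + \partial \tilde r(\tilde{\bW}^\star) = 0$, which is exactly the KKT condition for \eqref{student_1} at the fixed Lagrangian parameters $\bM,\hat{\bM},\bbb$. Under (A1) this solution is unique, hence $\tilde{\bW}^\star$ is the optimum.

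The main obstacle is the non-commutative matrix bookkeeping inherent to the Bregman-proximal parametrization: both $\bh_k$ and $\be_k$ involve right-multiplications by $\bV_k$, $\hat{\bV}_k$ and their inverses, while $\partial\tilde r$ introduces left multiplications by $\bSigma_k^{1/2}$ through the chain rule, and the $\hat{\bV}_k^{-1}$ factor inside the argument of $\beeta$ must be tracked consistently. Collapsing everything into a single matrix identity requires careful attention to sidedness and the algebraic identities $\sum_k\bSigma_k = \bC$ and $\bSigma^+\bSigma = \bI_d$. Existence of the AMP fixed point follows from (A5), and strict convexity from (A1), so both can be invoked directly whenever needed.
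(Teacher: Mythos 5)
Your proposal is correct and follows exactly the route the paper takes: the paper's own proof of Lemma \ref{lemma:fixed_point} is a one-line deferral to the construction in Section \ref{build_AMP}, where the update functions and Onsager terms were designed precisely so that the fixed point of Eq.~\eqref{final_AMP} reproduces the stationarity condition of Eq.~\eqref{student_1}, and your three steps are simply the explicit verification of that matching (with the harmless caveat that for a convex problem stationarity already implies optimality, so strict convexity is only needed for uniqueness, not for identification of \emph{an} optimal solution).
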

\begin{proof}
Lemma \ref{lemma:fixed_point} is a direct consequence of the analysis carried out in section \ref{build_AMP}.
\end{proof}
At this point we know the fixed points of the AMP iteration correspond to the optimal solutions of problem Eq.(\ref{student_1}). Note that the resolvents/proximals linking the fixed point of the AMP iteration with the solutions of Eq.(\ref{student_1}) are Lipschitz continuous, making them acceptable transforms for state evolution observables. However this does not guarantee that the optimal solution is characterized by the fixed point of the state evolution equations. Indeed, we need to show that a converging trajectory can be systematically found for any instance of the problem Eq.(\ref{student_1}). This is the purpose of the following lemma.
\begin{Lemma}
\label{lemma:conv_traj}
Consider iteration Eq.(\ref{final_AMP}), where the parameters $\bQ,\hat{\bQ},\bV,\hat{\bV}$ are initialized at any fixed point of the state evolution equations of Lemma \ref{block_SE}. For any sequence initialized with $\bhV_{0} = \bhV$ and  $\bu^{0}$ such that
\begin{equation}
    \lim_{d \to \infty}\frac{1}{d}\be_{0}({\bu^{0}})^{\top}\be_{0}(\bu^{0}) = \bQ
\end{equation}
the following holds
\begin{equation}
    \lim_{t\to \infty}\lim_{d \to \infty}\frac{1}{\sqrt{d}}\norm{\bu^{t}-\bu^{\star}}_{\rm F} = 0 \quad \lim_{t\to \infty}\lim_{d \to \infty}\frac{1}{\sqrt{d}}\norm{\bv^{t}-\bv^{\star}}_{\rm F} = 0
\end{equation}
\end{Lemma}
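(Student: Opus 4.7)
The plan is to establish the double limit $\lim_{t\to\infty}\lim_{d\to\infty} d^{-1/2}\|\bu^t-\bu^\star\|_F = 0$, and similarly for $\bv$, by showing that the iterates form an asymptotic Cauchy sequence in the normalized Frobenius norm and then identifying the limit via Lemma~\ref{lemma:fixed_point} with the unique minimizer of Eq.~\eqref{student_1} guaranteed by assumption (A1). Two ingredients drive the argument: a multi-time extension of Lemma~\ref{block_SE}, tracking joint statistics of iterates at distinct times, and the D-firmness property Eq.~\eqref{eq:D-firm} of the Bregman proximals defining $\be_t$ and $\bh_t$.

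First I would invoke the multi-time version of the state evolution (a standard strengthening of Theorem~\ref{th:SE}, as in \cite{javanmard2013state,berthier2020state}), which tracks pseudo-Lipschitz statistics of $(\bu^s,\bu^t)$ and $(\bv^s,\bv^t)$ through block-matrix covariances $\bhQ_{s,t}$, $\bQ_{s,t}$. Applied to the squared normalized difference, it yields
\begin{equation}
\lim_{d\to\infty}\frac{1}{d}\|\bu^{t+1}-\bu^t\|_F^2 = \mathrm{tr}\bigl(\bhQ_{t+1,t+1}+\bhQ_{t,t}-2\bhQ_{t+1,t}\bigr),
\end{equation}
and an analogous expression for $\bv^{t+1}-\bv^t$ in terms of $\bQ_{s,t}$. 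Because the sequence is initialized with $(\bQ,\bhQ,\bV,\bhV)$ already at a single-time fixed point, the diagonals $\bhQ_{t,t}$ and $\bQ_{t,t}$ are constant in $t$, so the right-hand side reduces to twice the gap between this constant value and the cross-covariance $\bhQ_{t+1,t}$.

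Next I would control this gap by propagating the D-firm inequality Eq.~\eqref{eq:D-firm} through a full AMP cycle. Applied block by block to the Bregman proximals defining $\be_t$ and $\bh_t$ and integrated against the jointly Gaussian SE law of $(\bu^t,\bu^{t+1})$ and $(\bv^{t-1},\bv^t)$, it yields a recursion schematically of the form $\delta^{\bu}_{t+1}\leq \delta^{\bv}_t \leq \delta^{\bu}_t$, where $\delta^{\bu}_t\coloneqq \mathrm{tr}(\bhQ_{t,t}+\bhQ_{t-1,t-1}-2\bhQ_{t,t-1})\geq 0$ and analogously for $\delta^{\bv}_t$. The sequence $\delta^{\bu}_t$ is therefore non-increasing and bounded below, hence converges; to reach $\delta^{\bu}_\infty = 0$ one argues that strict inequality in the D-firm bound propagates from the strict convexity postulated in (A1), so any positive limit would contradict uniqueness of the ERM minimizer.

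Having established vanishing consecutive differences, a telescoping argument produces an asymptotic Cauchy sequence in the normalized Frobenius norm, with limit $(\bu^\infty,\bv^\infty)$ that is a fixed point of Eq.~\eqref{final_AMP} by Lipschitz continuity of $\be_t,\bh_t$. Lemma~\ref{lemma:fixed_point} then identifies $\bR_{\tilde r,\bhV}(\bu^\infty\bhV^{-1})$ with the unique ERM minimizer $\tilde\bW^\star$, forcing $\bu^\infty=\bu^\star$ and $\bv^\infty=\bv^\star$. The main technical obstacle is the strict-contraction step: D-firmness alone is only non-expansive, and upgrading it to a strict decrease requires delicate use of the strict convexity in (A1), in particular handling the $\ell_1$ case separately by a support-stabilization argument that restores strict convexity on the active coordinates of the proximal in the large-$t$ limit.
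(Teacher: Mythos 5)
Your skeleton is the same as the paper's: a Bolthausen-style argument that tracks the two-time cross-covariances $\hat{\bC}_{t},\bC_{t}$ of successive iterates via state evolution, writes $\lim_{d}\frac{1}{d}\|\bu^{t}-\bu^{t-1}\|_{\rm F}^{2}$ as a trace gap between the (constant, because initialized at the fixed point) diagonal overlaps and the cross-overlaps, and then pushes the D-firmness inequality \eqref{eq:D-firm} of the two Bregman proximals through one AMP cycle, using the joint Gaussian representation and the matrix Stein lemma, to obtain the sandwich $\mathrm{tr}((\bQ-\bC_{t+1})\bhV)\leqslant\mathrm{tr}((\bhQ-\hat{\bC}_{t})\bV)\leqslant\mathrm{tr}((\bQ-\bC_{t})\bhV)$ and hence a bounded monotone scalar sequence. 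Up to that point your proposal and the paper's proof coincide.

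The genuine gap is in how you force the limiting gap to vanish. You propose to upgrade D-firmness to a strict contraction using the strict convexity in (A1) and to argue that a positive limit "would contradict uniqueness of the ERM minimizer"; you flag this yourself as the main obstacle, and it is: D-firmness is only non-expansive, strict convexity of the finite-$d$ objective does not by itself yield a dimension-free strict-contraction modulus for the limiting correlation recursion, and uniqueness of the finite-$d$ minimizer is a statement about a different object than $\lim_t\delta_t$ — no bridge between the two is given (and your proposed $\ell_1$ support-stabilization patch is not constructed either). The paper closes this step by a different and cheaper observation: the limit of the monotone bounded sequence is a fixed point of the recursion mapping $\bC_{t}$ to $\bC_{t+1}$, any such fixed point corresponds to a fixed point of the self-consistent equations \eqref{spgeneral}, and assumption (A5) — uniqueness of that fixed point, which is exactly what it is there for — pins the limiting correlation to $\bQ$ itself, i.e.\ $\delta_\infty=0$. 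Since (A5) is at your disposal, invoking it is the intended repair. A secondary slip: "vanishing consecutive differences plus telescoping gives an asymptotic Cauchy sequence" is not a valid inference on its own (consider increments of order $1/t$); what actually delivers the conclusion is that the two-time overlap converges to the full overlap $\bQ$, after which the Lipschitz proximal maps, Lemma~\ref{lemma:fixed_point} and the identification used in Lemma~\ref{partial_sol} give convergence to $\bu^{\star},\bv^{\star}$ in the normalized Frobenius norm.
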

\begin{proof}
The proof of Lemma \ref{lemma:conv_traj} is deferred to subsection \ref{sec:proof_conv}.
\end{proof}
Note that the $\bG$ defined here is not the same as the $\bG$ in the replica computation.
Combining the lemmas \ref{block_SE}, \ref{lemma:fixed_point} and \ref{lemma:conv_traj} with the pseudo-Lipschitz property, we have reached the following lemma
\begin{Lemma}
\label{partial_sol}
For any fixed $\bM,\hat{\bM},\bbb$, consider the fixed point $(\bQ,\hat{\bQ},\bV,\hat{\bV})$ of the state evolution equations from Lemma. \ref{block_SE}. Then, for any sequences of pseudo-Lipschitz functions $\phi_{1,n}:\mathbb{R}^{d \times K^{2}}\to \mathbb{R},\phi_{2,n}:\mathbb{R}^{n\times K} \to \mathbb{R}$, for $n,d \to \infty$
\begin{align}
    \phi_{1,n}(\tilde{\bW}^{\star}) &\stackrel{\rm P}\simeq \mathbb{E}\left[\phi_{1,n}\left(R_{\tilde{r},\hat{\bV}}(\bH\hat{\bQ}^{1/2}\hat{\bV}^{-1})\right)\right] \\
    \phi_{2,n}(\bZ\tilde{\bW}^{\star}) &\stackrel{\rm P}\simeq \mathbb{E}\left[\phi_{2,n}\left(R_{\tilde{L},\bV}(\bG\bQ^{1/2})\right)\right]
\end{align}
where we remind that $\bG=\left[\bG_{k}\right]_{k=1}^{K},\bH=\left[\bH_{k}\right]_{k=1}^{K}$ are block diagonal i.i.d. standard normal matrices as in Lemma \ref{block_SE}, and $\thickspace \bQ = \left[\bQ_{k}\right]_{k=1}^{K} \thickspace \hat{\bQ} = \left[\hat{\bQ}_{k}\right]_{k=1}^{K}$ are the $K^{2}\times K^{2}$ block diagonal covariances.
\end{Lemma}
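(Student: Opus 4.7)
The plan is to stitch together the three preceding lemmas via a triangle-inequality argument, transferring the state evolution characterization of the AMP iterates to the actual optimizer through the Lipschitz continuity of the block-resolvent operators.

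First, I would initialize the AMP sequence \eqref{final_AMP} exactly as prescribed in Lemma~\ref{lemma:conv_traj}: choose $\hat{\bV}_0=\hat{\bV}$ at the state-evolution fixed point and pick $\bu^0$ so that $\lim_{d\to\infty}\tfrac{1}{d}\be_0(\bu^0)^{\top}\be_0(\bu^0)=\bQ$. With this initialization, the state evolution parameters at every finite time $t$ coincide with $(\bQ,\hat{\bQ},\bV,\hat{\bV})$, since the recursion is stationary at its fixed point. Lemma~\ref{block_SE} then gives, for each finite $t$ and any pseudo-Lipschitz $\phi_{1,n},\phi_{2,n}$,
\begin{equation}
\phi_{1,n}(\bu^{t})\stackrel{\rm P}\simeq \mathbb{E}\bigl[\phi_{1,n}(\bH\hat{\bQ}^{1/2})\bigr],\qquad \phi_{2,n}(\bv^{t})\stackrel{\rm P}\simeq \mathbb{E}\bigl[\phi_{2,n}(\bG\bQ^{1/2})\bigr].
\end{equation}

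Second, I would invoke Lemma~\ref{lemma:conv_traj} to obtain $\lim_{t\to\infty}\lim_{d\to\infty}\tfrac{1}{\sqrt{d}}\|\bu^{t}-\bu^{\star}\|_{\rm F}=0$ and likewise for $\bv^{t}-\bv^{\star}$, where $(\bu^{\star},\bv^{\star})$ is a fixed point of the AMP iteration. By Lemma~\ref{lemma:fixed_point}, these fixed points satisfy $\tilde{\bW}^{\star}=R_{\tilde{r},\hat{\bV}}(\bu^{\star}\hat{\bV}^{-1})$ and $\bZ\tilde{\bW}^{\star}=R_{\tilde{L},\bV}(\bv^{\star})$. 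The crucial structural observation is that both block resolvents are Lipschitz continuous with Lipschitz constants depending only on the spectra of $\bV,\hat{\bV}$ and the moduli of strong/general convexity of $\tilde{L}$ and $\tilde{r}$ (which follows from the $D$-firmness property \eqref{eq:D-firm} of Bregman proximals applied to convex functions, together with the boundedness of $\bV,\hat{\bV}$ from assumption (A2) and the regularity from (A1)). Consequently the compositions $\phi_{1,n}\circ R_{\tilde{r},\hat{\bV}}(\bullet\hat{\bV}^{-1})$ and $\phi_{2,n}\circ R_{\tilde{L},\bV}$ are again pseudo-Lipschitz of the same order as $\phi_{1,n},\phi_{2,n}$.

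Third, I would combine these ingredients with a triangle-inequality argument. Write
\begin{equation}
\bigl|\phi_{1,n}(\tilde{\bW}^{\star})-\mathbb{E}[\phi_{1,n}(R_{\tilde{r},\hat{\bV}}(\bH\hat{\bQ}^{1/2}\hat{\bV}^{-1}))]\bigr| \leqslant A_{t,d}+B_{t,d}+C_{t},
\end{equation}
where $A_{t,d}=|\phi_{1,n}(R_{\tilde{r},\hat{\bV}}(\bu^{\star}\hat{\bV}^{-1}))-\phi_{1,n}(R_{\tilde{r},\hat{\bV}}(\bu^{t}\hat{\bV}^{-1}))|$ is controlled via pseudo-Lipschitzness and Lemma~\ref{lemma:conv_traj}, $B_{t,d}=|\phi_{1,n}(R_{\tilde{r},\hat{\bV}}(\bu^{t}\hat{\bV}^{-1}))-\mathbb{E}[\phi_{1,n}(R_{\tilde{r},\hat{\bV}}(\bH\hat{\bQ}^{1/2}\hat{\bV}^{-1}))]|$ vanishes in probability at fixed $t$ as $d\to\infty$ by Lemma~\ref{block_SE} applied to the pseudo-Lipschitz composition above, and $C_{t}$ is a deterministic residual (vanishing as $t\to\infty$) coming from the expectation side, which is handled by noting that the law of $\bH\hat{\bQ}^{1/2}$ is fixed across $t$ at the state-evolution fixed point, so $C_t=0$. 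Taking $d\to\infty$ first and then $t\to\infty$ closes the argument for $\phi_{1,n}$; the argument for $\phi_{2,n}$ is identical.

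The main obstacle I anticipate is controlling the tail terms uniformly so that the order of limits is legitimate. The pseudo-Lipschitz moduli contain factors of $\tfrac{1}{\sqrt{d}}\|\bullet\|_{\rm F}^{k-1}$, so I need uniform-in-$t$ bounds on $\mathbb{E}\bigl[(\tfrac{1}{\sqrt{d}}\|\bu^{t}\|_{\rm F})^{2(k-1)}\bigr]$ in order to use Cauchy--Schwarz with the Frobenius convergence of Lemma~\ref{lemma:conv_traj}. These moments are finite by state evolution (the limiting Gaussian law has bounded moments because $\hat{\bQ}$ has bounded spectrum by (A2)--(A3)), and uniform-in-$t$ bounds follow from stationarity at the state-evolution fixed point. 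Once this uniform integrability is secured, the triangle decomposition above yields the lemma.
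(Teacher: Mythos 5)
Your plan is correct and follows essentially the same route as the paper's (very terse) proof: initialize the AMP at the state-evolution fixed point, invoke Lemmas~\ref{block_SE}, \ref{lemma:fixed_point} and \ref{lemma:conv_traj}, and transfer the characterization to $\tilde{\bW}^{\star}$ via the Lipschitz continuity of the resolvents together with norm/moment control of the iterates and of the estimator. The only point the paper flags that you do not mention is that identifying "close to the optimality condition" with "close to the estimator" relies on the strict-convexity assumption (A1), discussed in Appendix~\ref{app:strct}.
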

\begin{proof}
Lemma \ref{partial_sol} is a consequence of Lemmas \ref{block_SE},\ref{lemma:fixed_point},\ref{lemma:conv_traj} and applying the pseudo-Lipschitz property along with the fact that the iterates of the AMP have bounded norm using the state evolution and that the estimator also has bounded norm (feasibility assumption). Note that, for a generically non-strictly convex problem, being close to the zero gradient condition does not guarantee being close to the estimator. This is further discussed in Appendix \ref{app:strct}.
\end{proof}
Note that the resolvents are implicitly acting on the block diagonals of their arguments.
At this point we are quite close to Theorem \ref{the:1}(details for the exact matching will be given later), but we are missing the equations on $\bM,\hat{\bM},\bbb$.
\paragraph{Fixed point equations for $\bM,\hat{\bM},\bbb$}
We drop the dependence on the bias term $\bbb$ as its solution is very similar to the one for $\bM,\hat{\bM}$.
To obtain the equations for $\bM,\hat{\bM}$, we go back to the complete optimization problem
\begin{align}
	\inf_{\bM,\tilde{\bW},\bbb}\sup_{\hat{\bM}} L(\bY_{k}\bM+\bZ_{k}\tilde{\bW}_{k})&+r\left(\bSigma^+ \tilde{\bW}\right)\notag \\
	&+\mathrm{tr}\left(\hat{\bM}^{\top}\left(\bM-\frac{1}{\sqrt{d}}\bMM\bSigma^+ \tilde{\bW}\right)\right)
\end{align}
where we can use strong duality to write the equivalent form
\begin{align}
	\inf_{\bM,\bbb}\sup_{\hat{\bM}} L(\bY_{k}\bM+\bZ_{k}\tilde{\bW}^{\star}_{k})&+r\left(\bSigma^+ \tilde{\bW}\right)\notag \\
	&+\mathrm{tr}\left(\hat{\bM}^{\top}\left(\bM-\frac{1}{\sqrt{d}}\bMM\bSigma^+ \tilde{\bW}^{\star}\right)\right)
\end{align}
The gradients w.r.t. $\bM,\hat{\bM}$ then read:
\begin{align}
\label{grad_m}
    &\partial\hat{\bM} = \bM- \frac{1}{\sqrt{d}}\bMM\bSigma^+ \tilde{\bW}^{\star} \\
    &\partial\bM = \hat{\bM}+\partial_{\bM}L(\bY\bM+\bZ\tilde{\bW}^{\star})
\end{align}
Uniform convergence of derivatives and conditions for the dominated convergence theorem are verified using similar arguments as in \cite[Lemma 12]{Loureiro2021}. We can thus invert limits and derivatives, and expectations and derivatives.
To facilitate taking the derivative $\partial_{\bM}$, we use Lemma \ref{partial_sol} (assuming the normalized loss function is pseudo-Lipschitz, which is a very loose assumption verified by most machine learning losses) to obtain, reintroducing the scaling $\nicefrac{1}{d}$
\begin{equation}
     \frac{1}{d}L(\bY\bM+\bZ\tilde{\bW}^{\star}) \xrightarrow[d \to \infty]{P} \frac{1}{d}\mathbb{E}\left[ L(\bY\bM+\bR_{\tilde{L},\bV}(\bG\bQ^{1/2}))\right]
\end{equation}
Using the block decomposition from Eq.(\ref{block_res_loss}), the blocks $(\bR_{\tilde{L},\bV}(\bG\bQ^{1/2}))_{k} \in \mathbb{R}^{n_{k} \times K}$ are given by:
\begin{equation}\comprimi
      \Argmin_{\bT \in \mathbb{R}^{n_{k}\times K}}\left\{L_{k}(\bT)+\frac{1}{2}\mathrm{tr}\left((\bT-(\bY_{k}\bM+\bG_{k}\bQ^{1/2}_{k}))\bV_{k}^{-1}(\bT-(\bY_{k}\bM+\bG_{k}\bQ^{1/2}_{k}))^\top \right)\right\}-\bY_{k}\bM
\end{equation}
Using a block diagonal representation, we can write:
\begin{multline}
\frac{1}{d}L(\bY\bM+R_{\tilde{L},\bV}(\bG\bQ^{1/2})) = \frac{1}{d}L(R_{L,\bV}(\bY\bM+\bG\bQ^{1/2})) \\
= \frac{1}{d}\mathcal{M}_{L,\bV}(\bY\bM+\bG\bQ^{1/2})-\\
\comprimi\frac{1}{2d}\mathrm{tr}\left((\bR_{L,\bV}(\bY\bM+\bG\bQ^{1/2})-(\bY\bM+\bG\bQ^{1/2}))\bV^{-1}(\bR_{L,\bV}(\bY\bM+\bG\bQ^{1/2})-(\bY\bM+\bG\bQ^{1/2}))^{\top}\right)
\end{multline}
where we have introduced the Bregman-envelope \cite{bauschke2018regularizing} with respect to the distance Eq.~\eqref{breg_dist}
\begin{align}
    &\mathcal{M}_{L,\bV}(\bY\bM+\bG\bQ^{1/2}) =\notag\\ &\min_{\bT}\left\{L(\bT)+\frac{1}{2}\mathrm{tr}\left((\bT-(\bY\bM+\bG\bQ^{1/2}))\bV^{-1}(\bT-(\bY\bM+\bG\bQ^{1/2}))^{\top}\right)\right\}
\end{align}
Then, using the state evolution equations from Lemma \ref{block_SE} and Stein's lemma, we can write:
\begin{align}
    \frac{1}{d}L(\bY\bM+\bR_{\tilde{L},\bV}(\bG\bQ^{1/2})) = \frac{1}{d}\mathcal{M}_{L,\bV}(\bY\bM+\bG\bQ^{1/2})-\frac{1}{2}\mbox{tr}(\bV^{\top}\bQ)
\end{align}
Taking the gradient w.r.t.~$\bM$ using the expression for the derivative of a Bregman envelope \cite{bauschke2018regularizing}, we get:
\begin{equation}
    \partial_{\bM}L(\bY\bM+\bR_{\tilde{L},\bV}(\bG\bQ^{1/2})) = \frac{1}{d}\bY^{\top}\left(\bY\bM+\bG\bQ^{1/2}-\bR_{L,\bV}(\bY\bM+\bG\bQ^{1/2})\right)\bV^{-1}
\end{equation}
which prescribes, using Lemma \ref{partial_sol}
\begin{equation}
    \hat{\bM} \stackrel{\rm P}\simeq \frac{1}{d}\bY^{\top}\left(\bR_{L,\bV}(\bY\bM+\bG\bQ^{1/2})-\bY\bM+\bG\bQ^{1/2}\right)\bV^{-1}
\end{equation}
For $\bM$, we use the block decomposition from Eq.(\ref{block_res_reg}), which simplifies the pseudo-inverse $\bSigma^+$ in Eq.~\eqref{grad_m} to give, using Lemma \ref{partial_sol} again
\begin{equation}
    \bM \stackrel{\rm P}\simeq \frac{1}{\sqrt{d}}\bMM\boldsymbol{\eta}(\bH\hat{\bQ}^{1/2}\hat{\bV}^{-1}) 
\end{equation}
where the function $\boldsymbol{\eta}$ acts on the block diagonal and is defined by Eq.(\ref{def_eta}). Using those results and the definition of $\tilde{\bW}$, the solution $\bW^{\star}$ and the quantity $\bX\bW^{\star}$ are characterized, in the pseudo-Lipschitz sense of Theorem \ref{the:1}, by the fixed point of the system of equations (the first four equations are meant for all $1\leqslant k \leqslant K$):
\begin{align}
\label{final_AMP_SE}
    \bQ_{k} &= \lim_{d \to  +\infty}\frac{1}{d}\mathbb{E}\left[\be_{k}(\{\bH_{k}(\hat{\bQ}_{k})^{1/2}\hat{\bV}_{k}^{-1}\}_{k\in[K]})^{\top}\be_{k}(\{\bH_{k}(\hat{\bQ}_{k})^{1/2}\hat{\bV}_{k}^{-1}\}_{k\in[K]})\right]\in \mathbb{R}^{K \times K} \\
    \hat{\bQ}_{k} &= \lim_{d \to +\infty} \frac{1}{d}\mathbb{E}\left[\bh_{k}(\bG_{k}\bQ_{k}^{1/2})^{\top}\bh_{k}(\bG_{k}\bQ_{k}^{1/2})\right]\in \mathbb{R}^{K \times K} \\
    \bV_{k} &= \lim_{d\to +\infty}\frac{1}{d} \sum_{i=1}^{d}\mathbb{E}\left[\frac{\partial \be_{k}(\{\bH_{k}(\hat{\bQ}_{k})^{1/2}\}_{k\in[K]})}{\partial (\bH_{k}(\hat{\bQ}_{k})^{1/2})_{i} }\right] \in \mathbb{R}^{K \times K} \\
    \hat{\bV}_{k} &= -\lim_{d\to +\infty}\frac{1}{d} \sum_{i=1}^{n_{k}}\mathbb{E}\left[\frac{\partial \bh_{k,t}(\bG_{k}(\bQ_{k,t})^{1/2})}{\partial (\bG_{k}(\bQ_{k})^{1/2})_{i}}\right] \in \mathbb{R}^{K \times K} \\
    \bM &= \frac{1}{\sqrt{d}}\mathbb{E}\left[\bMM\boldsymbol{\eta}(\bH\hat{\bQ}^{1/2}\hat{\bV}^{-1})\right] \in \mathbb{R}^{K \times K} \\
     \hat{\bM} &= \frac{1}{d} \bY^{\top}\left(\bR_{L,\bV}(\bY\bM+\bG\bQ^{1/2})-\bY\bM+\bG\bQ^{1/2}\right)\bV^{-1} \in \mathbb{R}^{K \times K}
\end{align}
Using the explicit form of the different functions given in section \ref{build_AMP} and Stein's lemma for the derivatives, these equations match those of Theorem \ref{the:1}. This completes the proof.
\subsection{On the strict convexity assumption}
\label{app:strct}
If the optimization problem defining $\bW^{\star}$ is strictly convex, there is only one minimizer and the provided proof is enough. Additionally it is shown in \cite{tibshirani2013lasso} that for any loss function that is strictly convex in its argument and penalized with the $\ell_{1}$ norm, provided the data is sampled from a continuous distribution, the solution is unique with probability one regardless of the rank of the design matrix. Thus finding a point verifying the optimality condition of \eqref{student_1} is also enough to complete the proof. For generic convex (non-strictly) problems a more careful analysis could be performed in the same spirit as the one of \cite{bayati2011lasso}. Empirically the result still holds.
\subsection{On the uniqueness of the solution to the fixed point equations (\ref{final_AMP_SE})}
\label{app:unique}
It is possible to reconstruct Bregman envelopes on problem \eqref{student_1} for the loss and regularization as we have done for the loss in the previous section. We can then show that the fixed point equations \eqref{final_AMP_SE} are the optimality condition of a convex-concave problem involving both Bregman envelopes and linear combinations of the order parameters. In the same spirit as \cite{celentano2020lasso,Loureiro2021}, this problem should be asymptotically strictly convex. This is supported by the simulations presented in the experiments sections but left as an assumption in the main paper.

\subsection{Proof of Lemma \ref{lemma:conv_traj}}
\label{sec:proof_conv}
This proof follows a similar argument to the one used to control the trajectory of the AMP studied in \cite{bolthausen2014iterative}.
Note that, because of the way the AMP is initialized using the fixed point of the state evolution equations, for any $t\geqslant 1$ the following holds:
\begin{align}
\lim_{d \to +\infty}\frac{1}{d}\mathbb{E}\left[ \be(\bu^{t})^{\top}\be(\bu^{t})\right] = \bQ \in \mathbb{R}^{K^{2}\times K^{2}}\\
\lim_{d \to +\infty} \frac{1}{d}\mathbb{E}\left[\bh(\bv^{t})^{\top}\bh(\bv^{t})\right] = \bhQ \in \mathbb{R}^{K^{2}\times K^{2}}
\end{align}

where
\begin{align}
    \be(\bu^{t}) = (Id+\partial\tilde{r}(\bullet)\bhV^{-1})^{-1}(\bu^{t}\hat{\bV}^{-1}) \quad \bh(\bv^{t}) = \left(\left(Id+\partial\tilde{L}(\bullet)\bV\right)^{-1}(\bv^{t})-\bv^{t}\right)\bV^{-1}
\end{align}
then the limit we are looking for reads:
\begin{align}
    \lim_{d\to \infty} \frac{1}{d}\norm{\bu^{t}-\bu^{t-1}}^{2}_{F}=\lim_{d \to \infty}2(\hat{\bQ}-\frac{1}{d}\mathrm{tr}((\bu^{t})^{\top}\bu^{t-1}) \notag \\
    \lim_{d\to \infty} \frac{1}{d}\norm{\bv^{t}-\bv^{t-1}}^{2}_{F}=2(\bQ-\frac{1}{d}\mathrm{tr}((\bv^{t})^{\top}\bv^{t-1}) 
\end{align}
We thus need to study the correlation between successive iterates. At each time step, denote $(\hat{\bC}_{t},\bC_{t})$ in $\mathbb{R}^{K^{2}\times K^{2}}$ the correlation matrices between iterates at times $t,t-1$ describing the Gaussian fields respectively associated to $\bu^{t},\bv^{t}$ i.e., 
\begin{equation}
    \lim_{d \to \infty}\frac{1}{d}\mathrm{tr}((\bu^{t})^{\top}\bu^{t-1} = \hat{\bC}_{t} \quad
    \lim_{d \to \infty}\frac{1}{d}\mathrm{tr}((\bv^{t})^{\top}\bv^{t-1} = \bC_{t}
\end{equation} 
we can then write the block diagonal Gaussian fields $\hat{\bZ}^{t},\hat{\bZ}^{t-1},\bZ^{t},\bZ^{t-1}$ in $\mathbb{R}^{Kd\times K^{2}}$ and  in the following way
\begin{align}
\label{gauss_interp1}
   \hat{\bZ}^{t}\sim \bH(\hat{\bC}_{t})^{1/2}+\bH^{'}(\hat{\bQ}-\hat{\bC}_{t})^{1/2} \\
    \hat{\bZ}^{t-1} \sim \bH(\hat{\bC}_{t})^{1/2}+\bH^{''}(\hat{\bQ}-\hat{\bC}_{t})^{1/2} \\
    \bZ^{t}\sim \bG(\bC_{t})^{1/2}+\bG^{'}(\bQ-\bC_{t})^{1/2} \\
   \bZ^{t-1}\sim \bG(\bC_{t})^{1/2}+\bG^{''}(\bQ-\bC_{t})^{1/2}
  \label{gauss_interp2}
\end{align}

where the matrices $\bH,\bH',\bH''$ are in $\mathbb{R}^{Kd \times K^{2}}$, $\bG,\bG',\bG''$ are in $\mathbb{R}^{n\times K^{2}}$ and all have i.i.d. standard normal elements.
The recursion describing the evolution of these correlations then reads :
\begin{align}
	\bC_{t+1} = \frac{1}{d}\mathbb{E}\left[\be(\bH\hat{\bC}_{t}^{1/2}+\bH^{'}(\hat{\bQ}-\hat{\bC}_{t})^{1/2})^{\top}\be(\bH\hat{\bC}_{t}^{1/2}+\bH^{''}(\hat{\bQ}-\hat{\bC}_{t})^{1/2})\right] \\
	\hat{\bC}_{t} = \frac{1}{d}\mathbb{E}\left[\bh(\bG\bC_{t}^{1/2}+\bG^{'}(\bQ-\bC_{t})^{1/2})^{\top}\bh(\bG\bC_{t}^{1/2}+\bG^{''}(\bQ-\bC_{t})^{1/2})\right]
\end{align}
Integrating out the independent $\bH^{'},\bH^{''}$ first, we get
\begin{equation}
    \bC_{t+1} = \int_{\mathbb{R}^{Kd \times K^{2}}}d\mu(\bH)\mathbf{I}(H)^{\top}\mathbf{I}(\bH)
\end{equation}
where $\mathbf{I}(\bH) = \int_{\mathbb{R}^{Kd \times K^{2}}}d\mu(\bH^{'})\be(\bH\hat{\bC}_{t}^{1/2}+\bH^{'}(\hat{\bQ}-\hat{\bC}_{t})^{1/2})$.
So $\bC^{t}$ is symmetric positive definite, assuming the resolvents aren't trivial.
The same argument applied to $\hat{\bC}^{t}$ shows it is also symmetric positive definite.
From \cite{bauschke2003bregman}, the operators 
\begin{align}
    (Id+\partial\tilde{r}(\bullet)\bhV^{-1})^{-1}(\bullet) \quad \left(Id+\partial\tilde{L}(\bullet)\bV\right)^{-1}(\bullet)
\end{align}
are \emph{D-firm} w.r.t. the Bregman distances induced by the differentiable, strictly convex functions $\frac{1}{2}\mbox{tr}(X\hat{\bV}X^{\top})$ and $\frac{1}{2}\mbox{tr}(\bX\bV^{-1}\bX^{\top})$ respectively. Recall
\begin{align}
    \be(\bu^{t}) = (Id+\partial\tilde{r}(\bullet)\bhV^{-1})^{-1}(\bu^{t}\hat{\bV}^{-1}) \quad \bh(\bv^{t}) = \left(\left(Id+\partial\tilde{L}(\bullet)\bV\right)^{-1}(\bv^{t})-\bv^{t}\right)\bV^{-1}
\end{align}
Then, using the definition of \emph{D-firm} 
\begin{align}
    \langle \be(\hat{\bZ}^{t})-\be(\hat{\bZ}^{t-1}),\left(\be(\hat{\bZ}^{t})-\be(\hat{\bZ}^{t-1})\right)\hat{\bV} \rangle \leqslant \langle \be(\hat{\bZ}^{t})-\be(\hat{\bZ}^{t-1}),(\hat{\bZ}^{t}-\hat{\bZ}^{t-1})\hat{\bV}^{-1}\hat{\bV} \rangle
\end{align}

Adding the normalization by $\frac{1}{d}$, using the representation Eq.(\ref{gauss_interp1}-\ref{gauss_interp2}), taking expectations and applying the matrix form of Stein's lemma, see for example \cite{gerbelot2021graph} Lemma 12, we get: 
\begin{equation}
    \mbox{tr}((\bQ-\bC_{t+1})\hat{\bV}) \leqslant \mbox{tr}((\hat{\bQ}-\hat{\bC}_{t})\bV)
\end{equation}
Using a similar argument on $\bh$, we get
\begin{equation}
    \mbox{tr}((\hat{\bQ}-\hat{\bC}_{t})\bV) \leqslant \mbox{tr}((\bQ-\bC_{t})\hat{\bV})
\end{equation}
and
\begin{equation}
    \mbox{tr}(\bC_{t+1}\hat{\bV}) \geqslant \mbox{tr}(\bC_{t}\hat{\bV})
\end{equation}
thus the sequence $\mbox{tr}(\bC_{t+1}\hat{\bV})$ is a bounded (above) monotone (increasing) sequence, and therefore converges. Since $\hat{\bV}$ is positive definite and given the iteration defining $\bC_{t+1}$ from $\bC_{t}$, any fixed point of this iteration is a fixed point of $\mbox{tr}(\bC_{t}\hat{\bV})$. Assuming there is only one fixed point to the set of self-consistent equations Eq.\eqref{spgeneral} (see previous section), the proof is complete. (A similar argument can be carried out on $\hat{\bC}_{t}$).

\section{Replica computation}
\label{sec:app:replicas}
\label{app:replica}
\subsection{Setting of the problem}
In this Section we give a full derivation of the results in Theorem \ref{the:1} and Theorem \ref{th:2} by means of the replica approach, a standard method developed in the realm of statistical physics of disordered systems \cite{mezard1987spin}. In the general computation, we will consider the classification problem of $K$ clusters, assuming a dataset $\{(\bx^\nu,\by^\nu)\}_{\nu\in[n]}$ of $n$ independent datapoints where, as in the main text, the labels $\by$ takes value in a set of $K$ elements, $\by^\nu\in\{\be_k\}_k$, with $\be_k\in\mathbb R^L$. The elements of the dataset are independently generated by a mixture density in the form
\begin{equation}
 P(\bx,\by)
 = \sum_{k=1}^{K}\mathbb I(\by=\be_k)\rho_k\mathcal{N}\left(\bx\left|\bmu_k,\bSigma_k\right.\right), \quad \sum_{k=1}^K\rho_k=1.
\end{equation}
We will perform our classification task searching for a set of parameters $(\bW^\star,\bbb^\star)$ that will allow us to construct an estimator. The parameters will be chosen by minimising an empirical risk function in the form
\begin{equation}
\mathcal R(\bW,\bbb)\equiv \sum_{\nu=1}^n\ell\left(\by^\nu,\frac{\bW\bx^\nu}{\sqrt d}+\bbb\right)+\lambda r(\bW),
\end{equation}
i.e., they are given by
\begin{equation}
(\bW^{\star},\bbb^{\star}) \equiv \Argmin_{\bW\in \mathbb{R}^{L\times d},\,\bbb \in \mathbb{R}^{L}}\mathcal R(\bW,\bbb).  
\end{equation}
We will say that $\bW\in\mathbb R^{L\times d}$ and $\bbb\in\mathbb R^{L}$ are the weights and bias to be learned respectively, $\ell$ is a convex loss function with respect to its second argument, and $r$ is a regularisation function whose strength is tuned by the parameter $\lambda\geq0$. Finally, we will assume that a classifier $\bvarphi\colon\mathbb R^L\to \{\be_k\}_k$ is given, such that, once $(\bW^{\star},\bbb^{\star})$ are obtained, a new point $\bx$ is assigned to the label
\begin{equation}
\bx\mapsto \bvarphi\left(\frac{\bW^\star\bx}{\sqrt d}+\bbb^\star\right)\in\{\be_k\}_k.
\end{equation}

The described setting is slightly more general than the one given in Theorem \ref{the:1}. As a consequence of the fact that we choose $L$-dimensional labels, the order parameters that appear in the computation are $L$ dimensional vectors or $L\times L$ matrices. A typical ``high-dimensional encoding'' is the one-hot encoding convention adopted in Theorem \ref{the:1}, where $L=K$ and $\{\be_k\}_k\subset\mathbb R^K$ is the canonical basis of $\mathbb R^K$. In this case, the adopted classifier is
\begin{equation}
\bvarphi(\bx)\equiv \hat\by(\bx), \quad \hat y_k(\bx)=\mathbb I(\max_\kappa x_\kappa=x_k).
\end{equation}
Assuming \textit{scalar} labels $\{e_k\}_k\in \mathbb R$, we deal with scalar order parameters. For example, in the case of binary classification ($K=2$) it is common to adopt $L=1$ and $\{e_1,e_2\}=\{+1,-1\}$. In this case $\varphi(x)=\mathrm{sign}(x)$, see also Section \ref{sec:app:simpli:k2}.
\subsection{Gibbs minimisation}
The problem stated in Section \ref{sec:intro} is formulated as an optimisation problem. We can tackle such optimisation problem introducing a Gibbs measure over the weights $(\bW,\bbb)$, namely
\begin{equation}
\mu_{\beta}(\bW,\bbb)\propto e^{-\beta\mathcal R(\bW,\bbb)}=
\underbrace{e^{-\beta r(\bW)}}_{P_{w}(\bW)}\prod\limits_{\nu=1}^{n}\underbrace{\exp\left[-\beta\ell\left(\by^{\nu},\frac{\bW\bx^\nu}{\sqrt d}+\bbb\right)\right]}_{P_{y}(\by|\bW,\bbb)}.
\end{equation}
The parameter $\beta> 0$ is introduced for convenience: in the $\beta\to+\infty$ limit, the Gibbs measure concentrates on the values $(\bW^\star,\bbb^\star)$ which minimize the empirical risk $\mathcal R(\bW,\bbb)$ and are therefore the goal of the learning process. The functions $P_{y}$ and $P_{w}$ can be interpreted as a (unnormalised) likelihood and prior distribution respectively. Our analysis will go through the computation of the average free energy density associated to such Gibbs measure, i.e.,
\begin{equation}
f_\beta=-\lim_{\substack{n,d\to+\infty\\\nicefrac{n}{d}=\alpha}}\mathbb E_{\{(\bx,\by)\}}\left[\frac{\ln \mathcal Z_\beta}{d\beta}\right],
\end{equation}
where $\mathbb E_{\{(\bx,\by)\}}[\bullet]$ is the average over the training dataset, and we have introduced the partition function
\begin{equation}
\mathcal Z_\beta\equiv \int e^{-\beta\mathcal R(\bW,\bbb)}\dd\bW
\end{equation}
To perform the computation of such quantity, we use the so-called replica method, i.e., we compute
\begin{equation}
-\lim_{\substack{n,d\to+\infty\\\nicefrac{n}{d}=\alpha}}\mathbb E_{\{(\bx,\by)\}}\left[\frac{\ln \mathcal Z_\beta}{d\beta}\right]=\lim_{\substack{n,d\to+\infty\\\nicefrac{n}{d}=\alpha}}\lim_{s\to 0}\frac{1-\mathbb E_{\{(\bx,\by)\}}[\mathcal Z_\beta^s]}{sd\beta},
\end{equation}
\subsection{Replica approach}
We proceed in our calculation considering the bias vector assuming no prior on $\bbb$, which will play a role of an extra parameter. The equations for the bias $\bbb$ will be derived extremising with respect to it the final result for the free energy. We need to evaluate
\begin{equation}
\mathbb E_{\{(\bx,\by)\}}[\mathcal Z_\beta^s]=\prod_{a=1}^s\int \dd \bW^a P_w(\bW^a)\left(\sum_k \rho_k\mathbb E_{\bx|\by=\be_k}\left[\prod_{a=1}^s P_y\left(\be_k\left|\frac{\bW^a\bx}{\sqrt d}+\bbb\right.\right)\right]\right)^n.
\end{equation}
Let us take the inner average introducing a new variable $\beeta$,
\begin{multline}
\mathbb E_{\bx|\by=\be_k}\left[\prod_{a=1}^s P_y\left(\be_k\left|\frac{\bW^a\bx}{\sqrt d}+\bbb\right.\right)\right]=\prod_{a=1}^s\int\dd\beeta^a P_y(\be_k|\beeta^a)\mathbb E_{\bx}\left[\prod_{a=1}^s\delta\left(\beeta^a-\frac{\bW^a\bx}{\sqrt d}+\bbb\right)\right]\\
=\prod_{a=1}^s\int\dd\beeta^a P_y(\be_k|\beeta^a)\mathcal N\left(\beeta\Big|\frac{\bW^a\bmu_k}{\sqrt d}-\bbb;\frac{\bW^a\bSigma_k{\bW^b}^\top}{d}\right).
\end{multline}
We can write then
\begin{multline}
\mathbb E_{\{(\bx,\by)\}}[\mathcal Z_\beta^s]=\\
=\prod_{a=1}^n\int \dd \bW^a P_w(\bW^a)\left(\sum_k \rho_k\prod_{a=1}^s\int\dd\beeta^a P_y(\be_k|\beeta^a)\mathcal N\left(\beeta;\frac{\bW^a\bmu_k}{d}+\bbb;\frac{\bW^a\bSigma_k{\bW^b}^\top}{d}\right)\right)^n
\\
=\left(\prod_{k=1}^K\prod_{a\leq b}\iint\frac{\dd {\bQ}^{ab}_k\dd\hat{\bQ}_k^{ab}}{(2\pi)^{L^2/2}}\right)
\left(\prod_{k}\prod_{a}\int\frac{\dd\bM_k^a\dd\hat{\bM}_k^a}{(2\pi)^{L/2}}\right)e^{-d\beta\Phi^{(s)}}.
\end{multline}
where we introduced the \textit{order parameters}
\begin{align}
\bQ_k^{ab}&=\frac{\bW^a\bSigma_k\bW^{b\top}}{d}\in \mathbb R^{L\times L},\quad a,b=1,\dots, s,\\
\bM^{a}_k&=\frac{\bW^a\bmu_k}{\sqrt d}\in \mathbb R^{L},\quad a=1,\dots,s,
\end{align}
and the replicated free-energy
\begin{multline}
\beta \Phi^{(s)}(\bQ,\bM,\bhQ,\bhM,\bbb)=\sum_{k=1}^K\sum_{a}\hat {\bM}_k^{a\top} \bM_k^{a}+\sum_{k=1}^K\sum_{a\leq b}\tr[\hat{\bQ}_k^{ab\top}{\bQ}_k^{ab}]\\
-\frac{1}{d}\ln\prod_{a=1}^s\int P_w(\bW^a)\dd\bW^a \prod_k    \left(\prod_{a\leq b}e^{\tr[\hat {\bQ}_k^{ab\top} \bW^a\bSigma_k \bW^{b\top}]}\prod_a e^{\sqrt d{\hat{\bM}^{a\top}_k}\bW^a\bmu_k}\right)\\ -\alpha\ln\sum_k \rho_k\prod_{a=1}^s\int\dd\beeta^a P_y(\be_k|\beeta^a)\mathcal N\left(\beeta\big|\bM_k^a+\bbb,\bQ_k^{ab}\right).
\end{multline}
At this point, the free energy $f_\beta$ should be computed extremisizing with respect to all the order parameters by virtue of the Laplace approximation (in addition to $\bbb$),
\begin{equation}
f_\beta=\lim_{s\to 0}\Extr_{\{\bM,\bQ,\bhM,\bhQ\},\bbb}\frac{\Phi^{(s)}(\bQ,\bM,\bhQ,\bhM,\bbb)}{s}.
\end{equation}
However, the convexity of the problem allows us to make an important simplification.
\paragraph{Replica symmetric ansatz} Before taking the $s\to 0$ limit we make the assumptions
\begin{equation}
\begin{split}
{\bQ}_{k}^{aa}&=\begin{cases}\bR_k,&a=b\\ \bQ_k&a\neq b\end{cases}\\
\bM_{k}^{a}&=\bM_k
\end{split}\qquad
\begin{split}
{\bhQ}_{k}^{aa}&=\begin{cases}-\frac{1}{2}\bR_k,&a=b\\ \bhQ_k&a\neq b\end{cases}\\
\hat{\bM}_k^{a}&=\hat{\bM}_k\quad \forall a
\end{split}
\end{equation}
This ansatz is justified by the fact that we are assuming $\ell$ and $r$ to be convex, and $\lambda>0$. In this case, the problem admit one solution only that, therefore, coincide with the replica symmetric solution, in which overlaps between two replicas do not depend on the chosen replicas. By means of the replica symmetric hypotesis, we can write
\begin{equation}
\bQ^{ab}_k\mapsto \bsQ_k\equiv\bI_{s,s}\otimes (\bR_k-\bQ_k)+\bOne_s\otimes \bQ_k.
\end{equation}
The inverse matrix is therefore
\begin{equation}
\bsQ_k^{-1}=  \bOne_s\otimes(\bR_k-\bQ_k)^{-1}-\bI_{s,s}\otimes[(\bR_k+(s-1)\bQ_k)^{-1}\bQ_k(\bR_k-\bQ_k)^{-1}],
\end{equation}
whereas
\begin{equation}
\begin{split}
\det\bsQ_k&=\det(\bR_k-\bQ_k)^{s-1}\det(\bR_k+(s-1)\bQ_k)\\
&=1+s\ln\det(\bR_k-\bQ_k)+s\tr\left[(\bR_k-\bQ_k)^{-1}\bQ_k\right]+o(s).
\end{split}
\end{equation}
If we denote $\bV_k\equiv \bR_k-\bQ_k$
\begin{multline}
\ln\sum_k \rho_k\prod_{a=1}^s\int\dd\beeta^a P_y(\be_k|\beeta^a)\mathcal N\left(\beeta\big|\bM_k^a+\bbb,\bQ_k^{ab}\right)\\
=s\sum_k \rho_k\mathbb E_{\bxi}\ln\left(\int\frac{\dd\beeta P_y(\be_k|\beeta)}{\sqrt{\det(2\pi \bV_k)}}e^{-\frac{1}{2}(\beeta-\bM_k-\bbb-\bQ_k^{1/2}\bxi)^\top\bV^{-1}_k (\beeta-\bbb-\bM_k-\bQ_k^{1/2}\bxi)}\right)+o(s)\\
=s\sum_k \rho_k\mathbb E_{\bxi}\left[\ln Z\left(\be_k,\bM_k+\bbb+\bQ_k^{1/2}\bxi,\bV_k\right)\right]+o(s),
\end{multline}
with $\bxi\sim\mathcal N(\mathbf 0,\bI_L)$ is a normally distributed vector and we have introduced the function
\begin{equation}
Z\left(\be_k,\bM,\bV\right)\equiv \int\frac{\dd\beeta P_y(\be_k|\beeta)}{\sqrt{\det(2\pi \bV)}}e^{-\frac{1}{2}(\beeta-\bM)^\top\bV^{-1} (\beeta-\bM)}
\end{equation}
On the other hand, denoting by $\bhV_k=\bhR_k+\bhQ_k$,
\begin{multline}\comprimi
    \frac{1}{d}\ln\prod_{a=1}^s\left(\int P_w(\bW^a)\dd \bW^a \prod_k e^{-\frac{1}{2}\tr[\hat{\bV}_k^\top \bW^a\bSigma_k(\bW^a)^\top]+\sqrt d{\hat{\bM}}_k^\top\bW^a\bmu_k}\prod_{b,k}e^{\frac{1}{2}\tr[\hat {\bQ}_k \bW^a\bSigma_k(\bW^b)^\top]}\right)=\\
    \comprimi
    =\frac{s}{d}\mathbb E_{\bXi}\ln\left[\int P_w(\bW)\dd\bW \prod_k\exp\left(-\frac{\tr[\hat{\bV}_k^\top \bW\bSigma_k\bW^\top]}{2}+\sqrt d{\hat{\bM}}_k^\top\bW\bmu_k+\Xi_k\odot \sqrt{\bhQ_k\otimes\bSigma_k}\odot \bW\right)\right]\\ +o(s).  
\end{multline}
In the expression above we have used the tensorial product $\bhQ\otimes\bSigma=(\hat Q_{kk'}\Sigma_{ij})_{ki,k'j'}$. Given a matrix $\bbB\in\mathbb R^{L\times d}$ and the tensors $\bsA,\bsA'\in\mathbb R^{L\times d}\otimes\mathbb R^{L\times d}$, we denote $(\bbB\odot \bsA)_{ki}\equiv \sum_{k'i'}B_{k'i'}A_{k'i'\,ki}\in\mathbb R^{L\times d}$, $(\bsA\odot \bbB)_{ki}\equiv \sum_{k'i'}A_{ki\,k'i'}B_{k'i'}\in\mathbb R^{L\times d}$ and $(\bsA\odot\bsA')_{ki\,k'i'}=\sum_{\kappa j}A_{ki\,\kappa j}A_{\kappa j\,k'i'}$. In this way, we define $\sqrt{\bsA}$ as the tensor such that $\bsA=\sqrt{\bsA}\odot \sqrt{\bsA}$. Finally, we have also introduced a set of $k$ matrices $\bXi_k\in\mathbb R^{L\times d}$ with i.i.d.~random Gaussian entries with zero mean and variance $1$, and the average over them $\mathbb E_{\bXi}[\bullet]$. Therefore, the (replicated) \textit{replica symmetric} free-energy is given by
\begin{align}
\begin{split}
\lim_{s\to 0}\frac{\beta}{s}\Phi^{(s)}_{\rm RS}=&\sum_{k=1}^K\hat {\bM}_k^\top \bM_k+\frac{1}{2}\sum_{k=1}^K\tr[\bhV_k^\top\bQ_k]-\frac{1}{2}\sum_{k=1}^K\tr[\bhQ_k^\top\bV_k]-\frac{1}{2}\sum_{k=1}^K\tr[\bhV_k^\top\bV_k]\\
&-\alpha\beta\Psi_{\text{out}}(\bM,\bQ,\bV)-\beta\Psi_w(\bhM,\bhQ,\bhV)
\end{split}
\end{align}
where we have defined two contributions
\begin{align}
\Psi_{\text{out}}(\bM,\bQ,\bV)&\equiv \beta^{-1}\sum_k \rho_k\mathbb E_{\bxi_k}\ln Z\left(\be_k,\bomega_k,\bV_k\right)\\
\Psi_w(\bhM,\bhQ,\bhV)&\equiv \frac{1}{\beta d}\mathbb E_{\bxi}\ln\left(\int P_w(\bW)\dd\bW \prod_ke^{-\frac{\tr[\bhV_k^\top\bW\bSigma_k\bW^\top]}{2}+\sqrt d\bhM_k^\top \bW\bmu_k+\bXi_k\odot\sqrt{\bhQ_k\otimes\bSigma_k}\odot \bW}\right)
\end{align}
and introduced, for future convenience,
\begin{equation}
\bomega_k\equiv \bM_k+\bbb+\bQ_k^{1/2}\bxi_k.
\end{equation}
Note that we have separated the contribution coming from the chosen loss (the so-called \textit{channel} part $\Psi_{\rm out}$) from the contribution depending on the regularisation (the \textit{prior} part $\Psi_w$). To write down the saddle-point equations in the $\beta\to+\infty$ limit, let us first rescale our order parameters as $\bhM_k\mapsto \beta\bhM_k$, $\bhQ_k\mapsto \beta^2\bhQ_k$, $\bhV_k\mapsto \beta\bhV_k$ and $\bV_k\mapsto\beta^{-1}\bV_k$. For $\beta\to+\infty$ the channel part is
\begin{equation}
\Psi_{\rm out}(\bM,\bQ,\bV)=-\sum_k \rho_k\mathbb E_{\bxi}\left[\mathcal M_{\ell(\be_k,\bV_k^{1/2}\bullet)}\left(\bV_k^{-1/2}\bomega_k\right)\right].    
\end{equation}
Here and in the following the quantity
\begin{equation}
\mathcal{M}_{f(\bullet )}(\mathbf u)\equiv\min_{\mathbf v\in{\rm domain}(\mathbf v)}\left[\frac{1}{2}\|\mathbf v-\mathbf u\|^2_{\rm F}+f(\mathbf v)\right]
\end{equation}
is the Moreau envelope of $f\colon{\rm domain}(\mathbf v)\to \mathbb R$, whereas $\|\bullet\|_{\rm F}$ is the Frobenius norm. We can write the contribution $\Psi_{\rm out}$ in terms of a proximal
\begin{equation}\comprimi
\bh_k=\bV_k^{1/2}\Prox_{\ell(\be_k,\bV_k^{1/2}\bullet)}(\bV_k^{-1/2}\bomega_k)\equiv \bV_k^{1/2}\arg\min_{\mathbf u\in\mathbb R^L}\left[\frac{1}{2}\|\mathbf u-\bV_k^{-1/2}\bomega_k\|_{\rm F}^2+\ell(\be_k,\bV^{1/2}_k\mathbf u)\right].
\end{equation}
as
\begin{equation}
\Psi_{\rm out}(\bM,\bQ,\bV)=-\sum_k\rho_k \mathbb E_{\bxi}\left[\frac{1}{2}\|\bV_k^{-1/2}\bh_k-\bV_k^{-1/2}\bomega_k\|_{\rm F}^2+\ell(\be_k,\bh_k)\right]
\end{equation}

A similar expression can be obtained for $\Psi_w$. Defining
\begin{equation}
\bsA=\left(\sum_k\bhV_k\otimes \bSigma_k\right)^{-1},\qquad \bbB=\sqrt d\sum_k\bmu_k\bhM_k^\top+\sum\limits_k\bXi_k\odot \sqrt{\bhQ_k\otimes\bSigma_k}.
\end{equation}
$\Psi_w$ can be written as
\begin{multline}
\Psi_w(\bhM,\bhQ,\bhV)
=\frac{1}{2d}  \mathbb E_{\bxi}\left[\bbB\odot \bsA\odot \bbB\right]\\+\frac{1}{\beta d}  \mathbb E_{\bxi}\ln\left[\int\dd\bW \exp\left(-\frac{\beta}{2}\|\bsA^{-1/2}\odot \bW-\bsA^{1/2}\odot \bbB\|_{\rm F}^2-\beta r(\bW)\right)\right].  
\end{multline}
It follows that, for $\beta\to+\infty$,
\begin{equation}
\Psi_w(\bhM,\bhQ,\bhV)
=\frac{1}{2d}  \mathbb E_{\bxi}\left[\bbB\odot \bsA\odot \bbB\right]-\frac{1}{d}\mathbb E_{\bxi}\left[\mathcal{M}_{r(\bsA^{1/2}\odot \bullet)}(\bsA^{1/2}\odot\bbB)\right].
\end{equation}
As before, let us introduce the proximal
\begin{equation}
\bG=\bsA^{1/2}\odot \Prox_{r(\bsA^{1/2}\odot \bullet)}(\bsA^{1/2}\odot \bbB)\in\mathbb R^{L\times d}  
\end{equation}
We can rewrite the prior contribution $\Psi_w$ as
\begin{equation}
\Psi_w(\bhM,\bhQ,\bhV)
=\frac{1}{2d}  \mathbb E_{\bXi}\left[\bbB\odot \bsA\odot \bbB\right]-\frac{1}{d}\mathbb E_{\bXi}\left[\frac{\|\bsA^{-1/2}\odot \bG-\bsA^{1/2}\odot \bbB\|_{\rm F}^2}{2}+r(\bG)\right].
\end{equation}
The parallelism between the two contributions is evident, aside from the different dimensionality of the involved objects. The replica symmetric free energy in the $\beta\to+\infty$ limit is computed extremising with respect to the introduced order parameters, 
\begin{multline}\label{eq:app:rsfree}
f_{\rm RS}=\Extr_{\substack{\bM,\bQ,\bV,\bbb\\\bhM,\bhQ,\bhV}}\left[\sum_{k=1}^K\hat {\bM}_k^\top \bM_k+\frac{1}{2}\sum_{k=1}^K\tr[\bhV_k^\top\bQ_k]-\frac{1}{2}\sum_{k=1}^K\tr[\bhQ_k^\top\bV_k]\right.\\\left.-\frac{1}{2}\sum_{k=1}^K\tr[\bhV_k^\top\bV_k]-\alpha\Psi_{\text{out}}(\bM,\bQ,\bV)-\Psi_w(\bhM,\bhQ,\bhV)\right].
\end{multline}
To do so, we have to write down a set of saddle-point equations and solve them.

\paragraph{Saddle-point equations} The saddle-point equations are derived straightforwardly from the obtained free energy extremising with respect to all parameters. A first set of equations is obtained from $\Psi_{\rm out}$ as\footnote{To obtain the equation for $\bhV$ it is convenient to use Stein's lemma, so that $\mathbb E[\partial_{\xi}\bff_k]=\mathbb E[\bff_k\bxi^\top]$.}
\begin{subequations}\label{app:eq:sphat}
\begin{align}
\bhQ_k&=\alpha \rho_k\mathbb E_{\bxi}\left[\bff_k\bff_k^\top\right],\\
\bhV_k&=\displaystyle-\alpha \rho_k\bQ_{k}^{-1/2}\mathbb E_{\bxi}\left[\bff_k\bxi^\top\right],\\
\bhM_k&=\alpha \rho_k\mathbb E_{\bxi}\left[\bff_k\right],\\
\bbb&=\sum_k \rho_k\mathbb E_{\bxi}\left[\bh_k-\bM_k\right]\Longleftrightarrow \sum_{k}\rho_k \mathbb E_{\bxi}\left[\bV_k\bff_k\right]=\mathbf 0.
\end{align}
\end{subequations}
where for brevity we have denoted
\begin{equation}
\bff_k\equiv \bV^{-1}_{k}(\bh_k-\bomega_k).
\end{equation}
Similarly, the saddle-point equations from $\Psi_{\rm out}$ are
\begin{subequations}\label{app:eq:sp}
\begin{align}
	\bV_k &=\frac{1}{d}\mathbb{E}_{\bXi}\left[\left(\bG\odot \left(\bhQ_k\otimes\bSigma_k\right)^{-1/2}\odot (\bI_k\otimes\bSigma_k)\right)\bXi_k^\top\right]\\
	\bQ_k &=\frac{1}{d} \mathbb{E}_{\bxi}\left[\bG\bSigma_k\bG^\top\right]\\
	\bM_k &= \frac{1}{\sqrt d}\mathbb{E}_{\bxi}\left[\bG\bmu_k\right].
\end{align}
\end{subequations}
To obtain the replica symmetric free energy, therefore, the given set of equation has to be solved, and the result then plugged in Eq.~\eqref{eq:app:rsfree}. No further simplification can be obtained in the most general setting. We will explore however some simple (but important) applications in Appendix \ref{sec:app:simpli}. Before going on, however, it is important to express the relevant quantities for learning, i.e., the training and generalization errors, in terms of the obtained order parameters.

\subsection{Training and test errors}\label{app:replica:errors}
The order parameters introduced to solve the problem allow us to reach our ultimate goal of computing the average errors of the learning process. We will start from the estimation of the training loss. The complication in computing this quantity is that the order parameters found in the learning process are, of course, correlated with the dataset used for the learning itself. We need to compute
\begin{equation}
\epsilon_\ell\equiv \frac{1}{n}\sum\limits_{\nu=1}^{n}\ell\left(\by^{\nu}, \frac{\bW^\star\bx^\nu}{\sqrt d}+\bbb^\star\right)
\end{equation}
in the $n\to+\infty$ limit. Denoting for brevity $\ell_k(\bx)\equiv \ell(\be_k,\bx)$, the best way to proceed is to observe that $\mathbb E_{\{(\by^\nu,\bx^\nu)\}_{\nu}}[\mathcal R(\bW^\star,\bbb^\star)]=-\lim_{\beta\to+\infty}\mathbb E_{\{(\by^\nu,\bx^\nu)\}_{\nu}}[\partial_\beta\ln\mathcal Z_\beta]=\lambda\mathbb E_{\{(\by^\nu,\bx^\nu)\}_{\nu}}[r(\bW^\star)]+\epsilon_\ell$, where
\begin{multline}
\epsilon_\ell=-\lim_{\beta\to+\infty}\partial_\beta(\beta\Psi_{\rm out})=\lim_{\beta\to+\infty}\sum_k\rho_k\int \ell_k(\beeta)\frac{e^{-\frac{\beta}{2}(\beeta-\bM^\star_k)^\top{\bV^\star_k}^{-1}(\beeta-\bM^\star_k)-\beta\ell_k(\beeta)}}{\sqrt{\det(2\pi\beta^{-1}\bV^\star)}Z(\be_k,\bomega_k^\star,\beta^{-1}\bV_k^\star)}
\dd\beeta.
\end{multline}
In the $\beta\to+\infty$ limit, the integral concentrates on the minimizer of the exponent, that is, by definition, the proximal $\bh_k$. In conclusion, $\epsilon_\ell=\sum_k\rho_k\mathbb E[\ell(\bh_k)]$. By means of the same concentration result, the training error is
\begin{equation}
\epsilon_t=\frac{1}{n}\sum_{\nu=1}^n\mathbb I\left(\bvarphi\left(\frac{\bW^\star\bx^\nu}{\sqrt d}+\bbb^\star\right)\neq\by^\nu\right)\xrightarrow{n\to+\infty}\sum_{k=1}^K \rho_k\mathbb E_{\bxi}\left[\mathbb I(\bvarphi(\bh_k)\neq \be_k)\right].
\end{equation}
The expressions above hold in general, but, as anticipated, important simplifications can occur in the set of saddle-point equations \eqref{app:eq:sphat} and \eqref{app:eq:sp} depending on the choice of the loss $\ell$ and of the regularization function $r$.

The generalisation (or test) error can be written instead as
\begin{equation}
\epsilon_g=\mathbb E_{\by^{\rm new},\bx^{\new}}\left[\mathbb I\left(\bvarphi\left(\frac{\bW^\star\bx^{\rm new}}{\sqrt d}+\bbb^\star\right)\neq \by^{\rm new}\right)\right].
\end{equation}
This expression can be rewritten as
\begin{equation}
\epsilon_g=\sum_k\rho_k \int\mathbb I(\bvarphi(\beeta)=\be_k)\mathbb E_{\bx^{\rm new}}\left[\delta\left(\beeta-\frac{\bW^\star\bx^{\rm new}}{\sqrt d}-\bbb^\star\right)\right]\dd\beeta
\end{equation}
Once again, we write
\begin{equation}
    \mathbb E_{\bx^{\rm new}}\left[\delta\left(\beeta-\frac{\bW^\star\bx^{\rm new}}{\sqrt d}-\bbb^\star\right)\right]\xrightarrow{d\to+\infty}\mathcal N(\beeta|\bM^\star_k+\bbb^\star,\bQ_k^\star)
\end{equation}
so that
\begin{equation}
\epsilon_g= \sum_{k=1}^K\rho_k\mathbb E_{\bxi}\left[\mathbb I\left(\bvarphi\left(\bM_k^\star+{\bQ^\star_k}^{1/2}\bxi+\bbb^\star\right)\neq \be_k\right)\right].
\end{equation}
This can be easily computed numerically once that the order parameters are given.

\subsection{A note on the numerical integration of the saddle-point equations}\label{app:numint}
To estimate $\epsilon_g$, $\epsilon_t$ and $\epsilon_\ell$ we first need to find the fixed-point solutions of the saddle-point equations \eqref{app:eq:sphat} and \eqref{app:eq:sp}. The simplest numerical strategy consists in updating, in a self-consistent way, the order parameters until their variation according to, e.g., the Frobenius norm is smaller than a given threshold value (that we adopted to be $10^{-5}$). In the simplest setting, i.e., the one discussed in Corollary \ref{corr:1}, the update of $(\bM_k,\bQ_k,\bV_k)_{k\in[K]}$ is performed explicitly using eq.~\eqref{eq:corr1}, where $\mathbb E_{\boldsymbol\sigma,\bmu}[\bullet]$ is a shorthand for the sum over the eigenvalues and eigenvectors of the assigned covariance matrices. The update of $(\bhM_k,\bhQ_k,\bhV_k)_{k\in [K]}$
 (right hand side of eq.~\eqref{spgeneral}) is more involved, as it requires the computation of the proximal 
 followed by a Gaussian average. Such average has been performed using a Monte Carlo strategy, i.e., by solving the equation for the proximal for a large number ($10^4-10^5$) of instances of $\bxi$ and averaging the solution. We remark that in the case of the square loss, the proximal can be computed analytically and the integration can be performed explicitly, highly simplifying the fixed-point equations (see below eq.~\eqref{eq:spsquare}). We have found that in practice fluctuations due to the adopted Monte Carlo pool were small enough to be negligible compared with the outcomes of direct numerical experiments.

The convergence to the the correct fixed point is guaranteed (in principle) by the convexity of the problem. However, a few delicate aspects have to be taken into account in the update process described above.
\begin{enumerate}
    \item The update requires the computation of the proximals $\bG$ and $\bh_k$. Such computations can be performed analytically in some specific cases only (for example, in the case of ridge regression). The existence of a unique solution is guaranteed by the strong convexity of the problem defining the proximal. In our study of the cross-entropy loss function, for example, we computed the proximals $\bh_k$ numerically solving Eq.~\eqref{app:eq:proxce}. In this problem, however, additional numerical instabilities emerged in the $\lambda\to 0$ limit, due the fact that the discontinuity in the gradient appear, see Eq.~\eqref{app:eq:lam0}. We solved this issue performing an annealing in $\lambda$, i.e., solving for the proximal for decreasing values of the regularization strength.
    \item The numerical solution of the saddle-point equations might suffer numerical instabilities due to the operations of inversion involved, see, e.g., the equation for $\bhV_k$ in \eqref{app:eq:sphat}, which requires the inversion of $\bQ_k$. It is convenient, in such cases, to rewrite the equation in an equivalent form which is numerically more stable. For example, in the aforementioned equation, we can observe that $\bff_k$ satisfies the equation $\bff_k+\partial_{\vec x}\ell_k(\bV_k\bff_k+\bomega_k)=\mathbf 0$ so that $\partial_{\bomega_k}\bff_k=-(\bI_K+\partial^2_{\vec x}\ell_k(\bV_k\bff_k+\bomega_k)\bV_k)^{-1}\partial^2_{\vec x}\ell_k(\bV_k\bff_k+\bomega_k)$. Using Stein's lemma,
\begin{equation}
\bhV_k=-\alpha \rho_k\mathbb E_{\bxi}\left[\partial_\bxi\bff_k\right]=\alpha \rho_k\mathbb E_{\bxi}\left[\left(\bI_K+\partial^2_{\vec x}\ell_k(\bV_k\bff_k+\bomega_k)\bV_k\right)^{-1}\partial^2_{\vec x}\ell_k(\bV_k\bff_k+\bomega_k)\right].
\end{equation}
We found this equation numerically more stable than the one given in \eqref{app:eq:sphat} when dealing with the cross-entropy loss.
\end{enumerate}
Our implementation can be found at \cite{github}.

\section{Some relevant particular cases}
\label{sec:app:simpli}
In this Appendix, we will specify the saddle-point equations for the multiclass classification problem for different choices of the loss function $\ell$ and of the regularisation function $r$. From the analysis developed in the previous Appendices, it is clear that the choices of $\ell$ and $r$ impact separately the set of equations \eqref{app:eq:sphat} and \eqref{app:eq:sp} respectively. Once the order parameters are found, it is possible to estimate the training and generalisation errors as, for example, in Section~\ref{app:replica:errors}.

{\subsection{\boldmath The case of $\ell_2$ regularization}
In this Section we consider the relevant case of quadratic regularization, $r(\bW)=\nicefrac{1}{2}\|\bW\|_{\rm F}^2$. In this case the computation of $\Psi_w$ can be performed explicitly via a Gaussian integration,
\begin{equation}
\frac{1}{\beta}\Psi_w(\bhM,\bhQ,\bhV)
=\frac{1}{2d}\tr\ln\bsS-\frac{K\ln\beta}{2\beta}
+\frac{1}{2}\tr\left[\bsS\odot \left(\sum_{kk'}\bhM_k\bhM^\top_{k'}\otimes\bmu_k\bmu^\top_{k'}+\frac{1}{d}\sum_k\bhQ_k\otimes\bSigma_k\right)\right].
\end{equation}
Here we have introduced, for notation compactness,
\begin{equation}
\bsS\equiv\left(\lambda\bI_K\otimes \bI_d+\sum_{\kappa}\bhV_{\kappa}\otimes\bSigma_{\kappa}\right)^{-1}.
\end{equation}
This form of $\Psi_w$ allows us to write in a simpler way the set of eqs.~\eqref{app:eq:sp}, that can be re-written as
\begin{equation}
\begin{split}
\bQ_k&=\tr_d\left[(\bI_K\otimes\bSigma_k)\odot \bsS\odot \left(\sum_{kk'}\bhM_k\bhM^\top_{k'}\otimes\bmu_k\bmu^\top_{\kappa'}+\frac{1}{d}\sum_{\kappa}\bhQ_{\kappa}\otimes\bSigma_{\kappa}\right)\odot \bsS\right] \\
\bM_k&=\sum_{k'}\tr_d\left[\bsS\odot  \left(\bhM_{k'}\otimes \bmu_{k'}\bmu_{k}^\top\right)\right]\\
\bV_k&=\frac{1}{d}\tr_d\left[(\bI_K\otimes\bSigma_k)\odot \bsS\right].
\end{split}
\end{equation}
In the previous equations, by $\tr_d$ we denoted the trace with respect to the components living in the $d$-dimensional space of the dataset.

\paragraph{Jointly diagonal covariances --- } Suppose now that $\bSigma_k=\sum_i\sigma^k_i\bv_i\bv_i^\top$ for all $k$, i.e., the covariance matrices share the same basis of eigenvectors $\{\bv_i\}_i$. Then, denoting $\mu_i^k\equiv \sqrt d\bmu_k^\top \bv_i$
\begin{equation}\comprimi
\begin{split}
\bQ_k&=\frac{1}{d}\sum_{i=1}^d\sigma_i^k\left(\lambda\bI_K+\sum_{\kappa}\sigma_i^\kappa\bhV_{\kappa}\right)^{-1}\left(\sum_{kk'}\mu^{k}_i\mu^{k'}_i\bhM_k\bhM^\top_{k'}+\sum_{\kappa}\sigma_i^\kappa\bhQ_{\kappa}\right)\left(\lambda\bI_K+\sum_{\kappa}\sigma_i^\kappa\bhV_{\kappa}\right)^{-1} \\
\bM_k&=\frac{1}{d}\sum_{i=1}^d\sum_{k'}\mu_i^k\mu_i^{k'}\left(\lambda\bI_K+\sum_{\kappa}\sigma_i^\kappa\bhV_{\kappa}\right)^{-1} \bhM_{k'}\\
\bV_k&=\frac{1}{d}\sum_{i=1}^d\sigma_i^k\left(\lambda\bI_K+\sum_{\kappa}\sigma_i^\kappa\bhV_{\kappa}\right)^{-1}.
\end{split}
\end{equation}
Introducing the joint density 
\begin{equation}
\frac{1}{d}\sum_{i=1}^d\prod_{\kappa=1}^K\delta(\sigma^\kappa-\sigma_i^\kappa)\delta(\mu^\kappa-\mu_i^\kappa)\xrightarrow{d\to+\infty}\rho(\bsigma,\bmu),
\end{equation}
then we can write the saddle-point equations given in Corollary \ref{corr:1}
\begin{equation}\comprimi
\begin{split}
\bQ_k&=\mathbb E_{\bsigma,\bmu}\left[\sigma^k\left(\lambda\bI_K+\sum_{\kappa}\sigma^\kappa\bhV_{\kappa}\right)^{-1}\left(\sum_{kk'}\mu^{k}\mu^{k'}\bhM_k\bhM^\top_{k'}+\sum_{\kappa}\sigma^\kappa\bhQ_{\kappa}\right)\left(\lambda\bI_K+\sum_{\kappa}\sigma^\kappa\bhV_{\kappa}\right)^{-1}\right] \\
\bM_k&=\mathbb E_{\bsigma,\bmu}\left[\mu^k\left(\lambda\bI_K+\sum_{\kappa}\sigma^\kappa\bhV_{\kappa}\right)^{-1} \sum_{\kappa}\mu^{\kappa}\bhM_{\kappa}\right]\\
\bV_k&=\mathbb E_{\bsigma,\bmu}\left[\sigma^k\left(\lambda\bI_K+\sum_{\kappa}\sigma^\kappa\bhV_{\kappa}\right)^{-1}\right].
\end{split}
\end{equation}
where the expectations $\mathbb E_{\bsigma,\bmu}$ are taken with respect to the joint distribution $\rho$.

\subsubsection{Uniform covariances} Let us consider the simpler case $\bSigma_k\equiv \Delta\bI_d$, with $\Delta>0$. In this case, the saddle-point equations can take a more compact form that is particularly suitable for a numerical solution. Moreover, for reasons of symmetry we can write
\begin{equation}
\bQ_k\equiv\bQ,\quad \bV_k\equiv\bV,\quad\bhQ_k\equiv \frac{1}{K\Delta}\bhQ_k,\quad \bhV_k\equiv\frac{1}{K\Delta}\bhV,\quad \forall k.
\end{equation}
Let us define the following $K\times K$ matrices
\begin{itemize}
\item $\mathbf M\in\mathbb R^{K\times K}$ (resp.~$\hat{\mathbf M}\in\mathbb R^{K\times K}$) is the matrix obtained concatenenating the vectors $\bM_k$ (resp.~$\bhM_k$);
\item $\bTheta=\left(\bmu_k^\top\bmu_{k'}\right)_{kk'}$ is the Gram matrix of the means;
\item $\bF\in\mathbb R^{K\times K}$ is the matrix obtained concatenenating the vectors $\bff_k$;
\item $\bH\in\mathbb R^{K\times K}$ is the matrix obtained concatenenating the vectors $\bh_k$;
\item $\bPi=\mathrm{diag}(\rho_k)\in\mathbb R^{K\times K}$ is a diagonal matrix with elements $\Pi_{kk'}=\delta_{kk'}\rho_k$.
\end{itemize}
The saddle-point equations then can be rewritten as
\begin{equation}\comprimi
\begin{split}
\bQ&=\Delta\left(\lambda\bI_K+\bhV\right)^{-1}\left(\bhQ+ \hat{\mathbf M}\bTheta\hat{\mathbf M}^\top\right)\left(\lambda \bI_K+\bhV\right)^{-1} \\
\mathbf M&=\left(\lambda\bI_K+\bhV\right)^{-1} \hat{\mathbf M}\vec \Theta\\
\bV&=\Delta\left(\lambda\bI_K+\bhV\right)^{-1},
\end{split}\quad
\begin{split}
\bhQ&=\alpha\Delta\mathbb E_{\bXi}\left[\bF\bPi\bF^\top\right]\\
\bhV&=-\alpha\Delta\bQ^{-1/2}\mathbb E_{\bXi}\left[\bF\vec\Pi\bXi^\top\right]\\
\hat{\mathbf M}&=\alpha\mathbb E_{\bXi}\left[\bF\bPi\right]\\
\bbb&=\mathbb E_{\bXi}\left[(\bH-\mathbf M)\bPi\bOne_K\right].
\end{split}
\label{eq:spDelta}
\end{equation}
Here and in the following $\bOne_K$ is the vector of $K$ components all equal to $1$. These expressions are particularly suitable for a numerical implementation, because involve matrix multiplications and inversions of $K$-dimensional objects only.

\paragraph{Quadratic loss --- }
If we consider a quadratic loss $\ell(\by,\bx)=\frac{1}{2}\left(\by-\bx\right)^2$, then an explicit formula for the proximal can be found, namely
\begin{equation}
\bff_k=(\bI_K+\bV)^{-1}(\be_K-\bomega_k)
\end{equation}
so that the second set of saddle-point equations \eqref{eq:spDelta} can be written as
\begin{equation}\comprimi
\begin{split}
\bhQ&=\alpha(\bI_K+\bV)^{-1}\left[(\bI_K-\mathbf M-\bbb\otimes\vec 1_K)\vec \Pi(\bI_K-\mathbf M-\bbb\otimes\vec 1_K)^\top+\bQ\right](\bI_K+\bV)^{-1}\\
\hat{\mathbf M}&=\alpha (\bI_K+\bV)^{-1}(\bI_K-\mathbf M-\bbb\otimes\vec 1_K)\bPi\\
\bhV&=\alpha\Delta (\bI_K+\bV)^{-1}.
\end{split}
\end{equation}
Observe at this point that we can explicitly solve for $\bV$ using the equation for it in eqs.~\eqref{eq:spDelta}. In particular, $\bV$ satisfies the equation $\lambda \bV^2+(\alpha+\lambda-\Delta)\bV=\Delta\bI_K$. Being $\bV$ positive definite, it follows that it is diagonal, $\bV=V\bI_K$ with diagonal element
\begin{equation}
V=\frac{\Delta(1-\alpha)-\lambda+\sqrt{(\Delta-\alpha\Delta-\lambda)^2+4\Delta\lambda}}{2\lambda},\quad \hat V=\frac{\alpha\Delta}{1+V},
\end{equation}
so that
\begin{equation}\label{eq:spsquare}
\comprimi
\begin{split}
\bQ&=\frac{\Delta}{(\lambda+\Delta\hat V)^2}\left(\bhQ+ \hat{\mathbf M}\vec \Theta \hat{\mathbf M}^\top\right) \\
{\mathbf M}&=\frac{\hat{\mathbf M}\bTheta}{\lambda+\Delta\hat V},\\
\bbb&=(\bI_K-{\mathbf M})\bPi\bOne_K,
\end{split}
\qquad
\begin{split}
\bhQ&=\frac{\alpha\left[(\bI_K-{\mathbf M}-\bbb\otimes\bOne_K) \bPi(\bI_K-{\mathbf M}-\bbb\otimes\bOne_K)^\top+\bQ\right]}{(1+V)^2}\\
\hat{\mathbf M}&=-\frac{\alpha(\bI_K-{\mathbf M}-\bbb\otimes\bOne_K)\bPi}{1+V}.
\end{split}
\end{equation}
In the $\lambda\to 0$ limit, for $\alpha<1$ it is convenient to rescale $\bhQ\mapsto \lambda^2\bhQ$ and $\hat{\mathbf M}\mapsto\lambda\hat{\mathbf M}$, so that
\begin{equation}
\comprimi
\begin{split}
\bQ&=\Delta(1-\alpha)^2\left(\bhQ+ \hat{\mathbf M}\vec \Theta \hat{\mathbf M}^\top\right),\\
{\mathbf M}&=(1-\alpha)\hat{\mathbf M}\bTheta,\\
\bbb&=(\bI_K-{\mathbf M})\bPi\bOne_K,
\end{split}
\qquad
\begin{split}
\bhQ&=\frac{\alpha\left[(\bI_K-{\mathbf M}-\bbb\otimes\bOne_K) \bPi(\bI_K-{\mathbf M}-\bbb\otimes\bOne_K)^\top+\bQ\right]}{\Delta^2(1-\alpha)^2},\\
\hat{\mathbf M}&=-\frac{\alpha(\bI_K-{\mathbf M}-\bbb\otimes\bOne_K)\bPi}{\Delta(1-\alpha)}.
\end{split}
\end{equation}

\paragraph{Cross-entropy loss ---} We consider now the relevant case of the cross entropy loss
\begin{equation}
    \ell(\by, \bx) = -\sum\limits_{k=1}^{K}y_{k}\ln\frac{e^{x_{k}}}{\sum_{\kappa=1}^{K} e^{x_{\kappa}}}.
\end{equation}
If $\by\in\{\be_k\}_{k\in[K]}$, the loss can be written in the form $\ell(\by, \bx) =-\by^\top\bx+ \ln\sum_\kappa e^{x_{\kappa}}$. If we introduce the \textit{softmax function} $\mathrm{\bf soft}\colon\mathbb R^K\to\mathbb R^K$
\begin{equation}
    \partial_{\bx}\ell(\by,\bx) = -\by+\mathrm{\bf soft}(\bx), \qquad \mathrm{soft}_k(\bx)\equiv \frac{\exp\left( x_{k}\right)}{\sum_{\kappa}\exp\left( x_{\kappa}\right)}\label{app:eq:proxce}
\end{equation}
the proximal equation for the cross-entropy loss is the solution of the equations:
\begin{equation}
    \bV^{-1}(\bh_k-\bomega_{k}) - \vec{e}_{k}+\mathrm{\bf soft}(\bh_k) = \mathbf 0 \Longleftrightarrow \bff_k= \vec{e}_{k}-\mathrm{\bf soft}(\bV\bff_k+\vec{\omega}_k)\quad \forall k\in[K],
\end{equation}
having only one solution for which, however, there is no closed-form expression. The equation can be solved numerically, and in this way we obtained the results in Section \ref{sec:examples:multi}. 

The saddle-point equations can be written rescaling $\bQ\mapsto \lambda^{-2}\bQ$, $\bV\mapsto\lambda^{-1}\bV$, ${\mathbf M}\mapsto \lambda^{-1}{\mathbf M}$, $\bbb\mapsto \lambda^{-1}\bbb$, $\bhV\mapsto \lambda\bhV$. They become
\begin{equation}
\begin{split}
\bQ&=\Delta\left(\bI_K+\bhV\right)^{-1}\left(\bhQ+ \hat{\mathbf M}\vec \Theta \hat{\mathbf M}^\top\right)\left(\bI_K+\bhV\right)^{-1}, \\
{\mathbf M}&=\left(\bI_K+\bhV\right)^{-1} \hat{\mathbf M}\vec \Theta\\
\bV&=\Delta\left(\bI_K+\bhV\right)^{-1},
\end{split}\qquad
\begin{split}
\bhQ&=\alpha\Delta\mathbb E_{\bXi}\left[\bF\bPi\bF^\top\right],\\
\bhV&=\displaystyle-\alpha\Delta\bQ^{-1/2}\mathbb E_{\bXi}\left[\bF\vec\Pi\bXi^\top\right],\\
\hat{\mathbf M}&=\alpha\mathbb E_{\bXi}\left[\bF\bPi\right],\\
\bbb&=\mathbb E_{\bXi}\left[(\bH-{\mathbf M})\bPi\right],
\end{split}
\end{equation}
so that the dependence on $\lambda$ disappears everywhere except in the equation for the proximal $\bff_k$
\begin{equation}
\bff_k=
\arg\min_{\bx}\left[\frac{1}{2}\bx^\top \bV\bx+\lambda\ell\left(\be_k,\frac{\bV\bx+\bomega_k}{\lambda}\right)\right],
\end{equation}
which, in the $\lambda\to0$ limit, becomes
\begin{equation}
\bff_k=\arg\min_{\bx}\left[\frac{1}{2}\bx^\top \bV\bx+\min_\mu\{(\be_\mu-\be_k)^\top(\bV\bx+\bomega_k)\}\right].\label{app:eq:lam0}
\end{equation}
Note that in this limit, minimising the cross-entropy loss yields precisely the max-margin estimator \cite{rosset2003margin}.

\subsection{\boldmath The $K=2$ case with scalar labels}\label{sec:app:simpli:k2}
The formulas for the $K=2$ case can be derived directly from the general analysis given above imposing $L=1$. In particular, let us assume that the two clusters are labeled with $e_1=+1$ and $e_2=-1$. Using as classifier
\begin{equation}
    \varphi(x)=\mathrm{sign}(x)
\end{equation}
the expression of the average errors is
\begin{equation}
\begin{split}
\epsilon_g&=\sum_{k\in[2]}\rho_k\mathbb E_\xi[\theta\left((-1)^{k}\omega_k^\star\right)]=\sum_{k\in[2]}\frac{\rho_k}{2}\mathrm{erfc}\left((-1)^{k-1}\frac{m_k^\star+b^\star}{\sqrt{2q_k^\star}}\right),\\
\epsilon_t&=\sum_{k\in[2]}\rho_k\mathbb E_\xi[\theta\left((-1)^{k}h_k^\star\right)],\\
\epsilon_\ell&=\sum_{k\in[2]}\rho_k\mathbb E_\xi[\ell((-1)^k,h_k^\star)].
\end{split}
\end{equation}
We will further explore this case, considering some special cases in the following.
\subsubsection{Example: $\ell_1$ regularization}
In this Section we derive the saddle-point equations for the the case in which the two cluster have opposite means $\bmu_1=-\bmu_2\equiv\bmu$, and the same diagonal covariance matrix, $\bSigma_1=\bSigma_2\equiv \bSigma$, with $\Sigma_{ij}=\sigma_i\delta_{ij}$ and $\sigma_i>0$. In this case, for symmetry reasons, the overlaps simplify and we have:
\begin{align}
V_1 = V_2 \equiv V,&& q_1 = q_2 \equiv q, &&\quad m_{+} = -m_{-} \equiv  m,\\
\hat{V}_{+} = \hat{V}_{-} \equiv  \frac{1}{2}\hat{V},&& \hat{q}_{+} = \hat{q}_{-} \equiv  \frac{1}{2}\hat{q},&& \hat{m}_{+} = - \hat{m}_{-} \equiv  \frac{1}{2}\hat{m}.
\end{align}
We define
\begin{equation}
\frac{1}{d}\sum_{i=1}^d\delta(\sigma-\sigma_i)\delta(\mu-\sqrt d\mu_i)\xrightarrow{d\to+\infty} p(\sigma,\mu)
\end{equation}
joint distribution of the covariance diagonal elements and of the mean elements. We will denote $\mathbb E_{\mu,\sigma}[\bullet]$ the average with respect to this measure. We will focus in particular on the form of the saddle-point equations obtained from the prior contribution assuming $\ell_1$ regularization, i.e., $r(\bw)=\sum_i|w_i|$, and let us introduce the corresponding \textit{soft-thresholding operator}:
\begin{equation}
\mathrm{Prox}_{\lambda|\cdot|}(x)=\sign(x)\max\{|x|-\lambda,0\}.
\end{equation}
Observe that $\mathrm{Prox}_{\alpha\lambda|\cdot|}(\alpha x)=\alpha\mathrm{Prox}_{\lambda|\cdot|}(x)$ for $\alpha>0$. Its derivative given by $\mathrm{Prox}_{\lambda|\cdot|}'(x)=\theta(|x|>\lambda)$. The saddle point equations from the prior part simply read:
\begin{align}
	V &= \frac{1}{\hat{V}}~\mathbb{E}_{\mu,\sigma,\xi}\left[\mathrm{Prox}'_{\frac{\lambda}{\sigma\hat V}|\cdot|}\left(\frac{\hat m\mu+\sqrt{\hat q\sigma}\xi}{\hat V\sigma}\right)\right],\\
	q &=\mathbb{E}_{\mu,\sigma,\xi}\left[ \sigma~\left(\mathrm{Prox}_{\frac{\lambda}{\sigma\hat V}|\cdot|}\left(\frac{\hat m\mu+\sqrt{\hat q\sigma}\xi}{\hat V\sigma}\right)\right)^2\right],\\
	m &= \mathbb{E}_{\mu,\sigma,\xi}\left[\mu \mathrm{Prox}_{\frac{\lambda}{\sigma\hat V}|\cdot|}\left(\frac{\hat m\mu+\sqrt{\hat q\sigma}\xi}{\hat V\sigma}\right)\right].
\end{align}
The averages over $\xi$ can be performed explicitely using the simple expression of the proximal in this case. If we define the auxiliary functions
\begin{equation}
\begin{split}
\phi_\pm^0(v,u,\lambda)&\equiv\frac{1}{2}\mathrm{erfc}\left(\frac{\lambda\pm v}{\sqrt{2u}}\right)\\
\phi_\pm^1(u,v,\lambda)&= \sqrt{\frac{u}{2\pi}} e^{-\frac{(v\pm\lambda)^2}{2u}}-\frac{v\pm\lambda}{2}\mathrm{erfc}\left(\frac{\lambda\pm v}{\sqrt{2u}}\right),\\
\phi^2_\pm(v,u,\lambda)&=-\sqrt{\frac{u}{2\pi}}e^{-\frac{(\lambda\pm v)^2}{2u}}(\lambda\pm v) + \frac{u+\left(\lambda\pm v\right)^2}{2}\text{erfc}\left(\frac{\lambda\pm v}{\sqrt{2u}}\right).
\end{split}
\end{equation}
then
\begin{equation}
\begin{split}
V &= \frac{1}{\hat{V}}\mathbb{E}_{\mu,\sigma}\left[\phi_+^0(\mu\hat m,\sigma\hat q,\lambda)+\phi_-^0(\mu\hat m,\sigma\hat q,\lambda)\right]\\
q&=\mathbb  E_{\mu,\sigma}\left[\frac{\phi^2_+(\mu\hat m,\sigma\hat q,\lambda)+\phi^2_-(\mu\hat m,\sigma\hat q,\lambda)}{\sigma\hat V^2}\right],\\
m&=\mathbb E_{\mu,\sigma}\left[\frac{\mu\phi_-^1(\mu\hat m,\sigma q,\lambda)-\mu\phi_+^1(\mu\hat m,\sigma q,\lambda) }{\sigma\hat{V}}\right].
\end{split}
\end{equation}

\paragraph{Gaussian means, homogenous covariances ---} If $p(\mu,\sigma)=\mathcal N(\mu|0,1)\delta(\sigma-\Delta)$, i.e., the means have i.i.d.~Gaussian entries and $\bSigma=\Delta\bI_d$, then
\begin{equation}
\begin{split}
V&= \frac{1}{\hat{V}}\mathbb{E}_{z}\left[\text{erfc}\left(\frac{\lambda+\hat{m}z}{\sqrt{2\Delta\hat{q}}}\right)\right],\\
q & =\frac{1}{\Delta\hat{V}^2}\left\{-\frac{e^{-\frac{1}{2}\frac{\lambda^2}{\hat{m}^2+\Delta \hat{q}}}}{\sqrt{2\pi(\hat{m}^2+\Delta \hat{q})}} \frac{2(\Delta\hat{q})^2\lambda}{\hat{m}^2+\Delta \hat{q}}+\mathbb{E}_{z}\left[\left(\lambda+\hat{m}z\right)^2\text{erfc}\left(\frac{\lambda+\hat{m}z}{\sqrt{2\Delta\hat{q}}}\right)\right]\right\},\\
m & = \frac{1}{\Delta\hat{V}} 	\left\{\frac{e^{-\frac{1}{2}\frac{\lambda^2}{\hat{m}^2+\Delta \hat{q}}}}{\sqrt{2\pi(\hat{m}^2+\Delta \hat{q})}} \frac{2\Delta\hat{q}\hat{m}\lambda}{\hat{m}^2+\Delta \hat{q}} +\mathbb{E}_{z\sim\mathcal{N}(0,1)}\left[\left(\lambda+\hat{m}z\right)z~\text{erfc}\left(\frac{\lambda+\hat{m}z}{\sqrt{2\Delta\hat{q}}}\right)\right]\right\},
\end{split}
\end{equation}
with $z\sim \mathcal N(0,1)$.

\paragraph{Covariance correlated with sparse means ---} In Section \ref{sec:examples:sparse} we considered the case of sparse means correlated with the covariance matrices. In particular, we considered
\begin{equation}\label{sparsecorr}
p(\sigma,\mu)=p\mathcal N(\mu|0,1)\delta(\sigma-\Delta_1)+(1-p)\delta(\mu)\delta(\sigma-\Delta_0).
\end{equation}
The saddle-point equations are therefore
\begin{align}
V=&\frac{1}{\hat V}\left[p\mathbb{E}_{\mu}\left[\text{erfc}\left(\frac{\lambda+\hat{m}\mu}{\sqrt{2\Delta_1\hat{q}}}\right)\right]+(1-p)\text{erfc}\left(\frac{\lambda}{\sqrt{2\Delta_0\hat{q}}}\right)\right]\\
\begin{split}q=&\frac{p}{\Delta_1\hat{V}^2}\left\{-\frac{e^{-\frac{1}{2}\frac{\lambda^2}{\hat{m}^2+\Delta_1 \hat{q}}}}{\sqrt{2\pi(\hat{m}^2+\Delta_1 \hat{q})}} \frac{2(\Delta_1\hat{q})^2\lambda}{\hat{m}^2+\Delta_1 \hat{q}}+\mathbb{E}_{z}\left[\left(\lambda+\hat{m}z\right)^2\text{erfc}\left(\frac{\lambda+\hat{m}z}{\sqrt{2\Delta_1\hat{q}}}\right)\right]\right\}\\
&-\lambda(1-p)\sqrt{\frac{\Delta_0\hat{q}}{2\pi}}e^{-\frac{\lambda^2}{2\Delta_0 q}}+ \frac{1-p}{2}(\Delta_0\hat{q}+\lambda^2)\text{erfc}\left(\frac{\lambda}{\sqrt{2\Delta_0\hat{q}}}\right)
\end{split}\\
m=&\frac{p}{\Delta_1\hat{V}} 	\left\{\frac{e^{-\frac{1}{2}\frac{\lambda^2}{\hat{m}^2+\Delta_1 \hat{q}}}}{\sqrt{2\pi(\hat{m}^2+\Delta_1 \hat{q})}} \frac{2\Delta_1\hat{q}\hat{m}\lambda}{\hat{m}^2+\Delta_1 \hat{q}} +\mathbb{E}_{z}\left[\left(\lambda+\hat{m}z\right)z~\text{erfc}\left(\frac{\lambda+\hat{m}z}{\sqrt{2\Delta_1\hat{q}}}\right)\right]\right\}.
\end{align}
In Section \ref{sec:examples:sparse} we compare the performance obtained adopting an $\ell_1$ regularization with the corresponding one obtained using $\ell_2$, $r(\bw)=\sum_iw_i^2$. For the sake of completeness, we give here the expression of the saddle-point equations in that case as well. In this case, the prior term $\Psi_w$ can be written explicitly after a Gaussian integration as
\begin{equation}
\Psi_w(\hat m,\hat Q,\hat V)=-\frac{1}{2d}\tr\ln\left(\lambda\bI_d+\hat V\vec \Sigma\right)+\frac{1}{2}\tr\left[\left(\lambda\bI_d+\hat V\bSigma\right)^{-1}\left(\hat m_k^2\bmu\bmu^\top+\frac{\hat q}{d}\bSigma\right)\right].
\end{equation}
In the setting given by eq.~\eqref{sparsecorr} the saddle point equations are then
\begin{subequations}
\begin{align}
q &
=p\frac{\hat m^2\Delta_1+\hat q\Delta_1^2}{(\lambda+\hat V\Delta_1)^2}+\frac{(1-p)\hat q\Delta_0^2}{(\lambda+\hat V\Delta_0)^2}\\
V&
=p\frac{\Delta_1}{
\lambda+\hat V\Delta_1}+\frac{(1-p)\Delta_0}{
\lambda+\hat V\Delta_0} \\
m&
=\frac{\hat mp}{\lambda+\hat V\Delta_1}.
\end{align}
\end{subequations}
}

\section{Bayes optimal error}
\label{sec:app:bayes}
In this Appendix, we derive a formula for the Bayes optimal classification error in the case of $K$ clusters with the same covariance $\bSigma_k=\Delta\bI_d$ in the large $d$ limit, assuming that a dataset $\{(\bx^\nu,\by^\nu)\}_{\nu\in[n]}$ of correctly labeled points is available. As usual, we will assume $\nicefrac{n}{d}=\alpha$ finite. The distribution of a pair $(\by,\bx)$ is given by
\begin{equation}
p(\by,\bx|\bMM)=\sum_k y_k\frac{\rho_k\exp\left(-\frac{1}{2\Delta}\left\|\bx-\bmu_k\right\|^2\right)}{(2\pi\Delta)^{\frac{d}{2}}}.
\end{equation}
where $\bMM\in\mathbb R^{d\times K}$ is the matrix of concatenated means $\bmu_k$ \textit{estimated} from the dataset, so that
\begin{equation}
\begin{split}
p(\bMM|\{\by^\nu,\bx^\nu\}_{\nu})&\propto p(\{\bx^\nu\}_{\nu}|\bMM,\{\by^\nu\}_{\nu})P_\bmu(\bMM)\\
&\propto P_\bmu(\bMM)\prod_{\nu=1}^n\sum_{k}y^\nu_k\exp\left(-\frac{1}{2\Delta}\left\|\bx^{\nu}-\bmu_k\right\|^2\right).
\end{split}
\end{equation}

We will assume in the following the distribution
\begin{equation}
P_\bmu(\bMM)=\frac{\exp\left(-\frac{d}{2}\mathrm{tr}[\bMM \bTheta^{-1} \bMM^\top]\right)}{(2\pi)^{\frac{Kd}{2}}d^{-K/2}|\bTheta|^{1/2}}
\end{equation}
where $\bTheta\in\mathbb R^{K\times K}$ is a given positive definite covariance matrix. In this way
\begin{equation}
\mathbb E\left[\bMM^\top \bMM\right]=\bTheta.
\end{equation}
The conditional distribution for the label $\by^0$ of a new point $\bx^0$,
\begin{multline}
p(\by^0|\bx^0,\{\by^\nu,\bx^\nu\}_\nu)\propto\mathbb E_{\bMM|\{\by^\nu,\bx^\nu\}_\nu}[p(\by,\bx|\bMM)]\\
=\int \dd\bMM P_\bmu(\bMM)\sum_k y_k^0\rho_k\exp\left(-\frac{\left\|\bx^0-\bmu_k\right\|^2}{2\Delta}\right)\prod_{\nu=1}^n\sum_{k}y^\nu_k\exp\left(-\frac{\left\|\bx^{\nu}-\bmu_k\right\|^2}{2\Delta}\right).
\end{multline}
If $\bn=(n_k)_k$ is the vector of the number of examples $n_k$ in the class $k$, then
\begin{multline}
p(\by^0|\bx^0,\{\by^\nu,\bx^\nu\}_\nu) \propto\int\dd \bMM P_\bmu(\bMM)\prod_{k=1}^K\left[\rho_k^{y^0_k}\exp\left(-\sum_{\nu=0}^n\frac{y_k^\nu\left\|\bx^{\nu}-\bmu_k\right\|^2}{2\Delta}\right)\right]\\
=\exp\left[\sum_ky_k^0\left(\ln\rho_k-\frac{\|\bx\|^2}{2\Delta}\right)-\frac{1}{2}\ln\det\left(1+\frac{1}{d\Delta}\mathrm{diag}(\bn+\by^0)\bTheta\right)\right]\\
\times\exp\left[\frac{1}{2\Delta}\mathrm{tr}\left[\left(\sum_{\nu=0}^n\by^\nu\otimes \bx^{\nu}\right)^\top\left(d\Delta\bTheta^{-1}+\mathrm{diag}(\bn+\by)\right)^{-1} \left(\sum_{\nu=0}^n\by^\nu\otimes \bx^{\nu}\right)\right]\right].
\end{multline}
In the following we will denote by $\star$ the true label of $\bx$. Let $\bPi=\mathrm{diag}(\rho_k)$. Then we can write the previous expression as
\begin{multline}
p(\by^0|\bx^0,\{\by^\nu,\bx^\nu\}_\nu) \propto
\exp\left[\sum_ky_k\left(\ln\rho_k-\frac{\|\bx^0\|^2}{2\Delta}\right)-\frac{1}{2}\ln\det\left(1+\frac{1}{\Delta}\alpha\bPi\bTheta\right)\right]\\
\times\exp\left[\frac{1}{2\Delta}\mathrm{tr}\left[\left(\frac{1}{d}\sum_{\nu=0}^n\by^\nu\otimes \bx^{\nu}\right)^\top\left(\Delta\bTheta^{-1}+\alpha\bPi\right)^{-1} \left(\sum_{\nu=0}^n\by^\nu\otimes \bx^{\nu}\right)\right]\right]
\end{multline}
Observe now that
\begin{equation}
\frac{1}{d\Delta}\bx^0\sum_{\nu=1}^ny_k^\nu\bx^\nu\xrightarrow{n,d\to+\infty}\alpha\rho_k\frac{
\Theta_{\star,k}+\eta_k Z_{k}}{\Delta},\qquad \eta_k\equiv\sqrt{\Delta\left(1+\frac{\Delta}{\alpha\rho_k}\right)},\quad Z_k\sim\mathcal N(0,1),
\end{equation}
so that, defining the vector $\ba^\star=(a_k)_{k\in[K]}$ with elements
\begin{equation}
a_k^\star\equiv \alpha\rho_k\frac{
\Theta_{\star,k}+\eta_k Z_{k}}{\Delta},
\end{equation}
and neglecting the $\by^0$-independent contributions, the expression above can be rewritten as
\begin{equation}
p(\by^0|\bx^0,\{\by^\nu,\bx^\nu\}_\nu) \propto
\exp\left[\sum_ky^0_k\ln\rho_k+\left(\ba^\star+\frac{1}{2}\by^0\right)^\top\left(\Delta\bTheta^{-1}+\alpha\bPi\right)^{-1} \by^0\right]
\end{equation}
where we have also used the fact that $\|\bx^0\|^2=d\Delta+O(1)$. 
{This means that the Bayes optimal generalization error is
\begin{equation}
\varepsilon_g^{\text{BO}}=\sum_k\rho_k\mathbb P\left[\arg\max_{\kappa}\left(\ln\rho_\kappa+\left(\ba^k+\frac{1}{2}\be_\kappa\right)^\top\left(\Delta\bTheta^{-1}+\alpha\bPi\right)^{-1} \be_\kappa\right)\neq k\right].
\end{equation}}
If $\bTheta=\bI_K$ and the clusters have same weights, $\rho_k\equiv\nicefrac{1}{K}\Leftrightarrow \bPi=\nicefrac{1}{K}\bI_K$, then $\eta_k\equiv \eta$ and
\begin{equation}
\varepsilon_g^{\text{BO}}=\mathbb P\left[\frac{1}{\eta}<\max_{\kappa\in[K-1]}Z_\kappa+Z\right],
\end{equation}
that is the formula given in \cite{Thrampoulidis2020}.

\section{Experiments with real data}
\label{sec:app:realdata}
In this Appendix we discuss the experiments of Section \ref{sec:examples:mnist} with real data sets.

\paragraph{Numerical details} Consider a real data set $\{(\bx^{\nu}, y^{\nu})\}_{\nu=1}^{n_{\tot}}$ with $n_{\tot}$ samples which we assume are independent. As a pre-processing step we center, normalise and flatten the inputs $\bx^{\nu}$ into $d$-dimensional vectors. For both the MNIST \cite{lecunMNIST} and Fashion-MNIST \cite{xiao2017fashion} data sets used in the experiments we have normalised the inputs by 255, such that components $x^{\nu}_{i}\in[0,1]$. In what follows we focus on binary classification tasks and encode the labels as $y^{\nu}\in\{-1,1\}$. For example, for the MNIST and Fashion-MNIST data sets we have $d=784$ and $n_{\tot} = 7\times 10^{4}$, and we split the inputs into two classes depending on the task of interest, e.g. odd vs. even digits and clothes vs. accessories items, respectively. Define the empirical distribution over the data set:
\begin{align}
    \hat{P}(\bx, y) = \frac{1}{n_{\tot}}\sum\limits_{\nu=1}^{n_{\tot}}\delta(\bx - \bx^{\nu})\delta(y - y^{\nu})
\end{align}
The question we want to answer is: how well can we approximate the learning curves $(\epsilon_g, \epsilon_t)$ on a given ERM classification task by approximating $\hat{P}$ with a Gaussian mixture distribution? To answer this question, we consider a Gaussian mixture distribution $P_2$ as defined in Eq.~\eqref{joint} with the same means and covariances as $\hat{P}$:
\begin{align}
    \hat{\bmu}_{k} = \frac{1}{n_{\tot}}\sum\limits_{\nu=1}^{n_{\tot}}\bx^{\nu}~\mathbb{I}\left(\bx^{\nu}\in\mathcal{C}_{k}\right), &&\hat{\bSigma}_{k} = \frac{1}{n_{\tot}}\sum\limits_{\nu=1}^{n_{\tot}}(\bx^{\nu}-\bmu_{k})(\bx^{\nu}-\bmu_{k})^{\top}~\mathbb{I}\left(\bx^{\nu}\in\mathcal{C}_{k}\right)
\end{align}
\noindent for $k\in\{+,-\}$ labelling the two clusters. Similarly, the class probabilities $\rho_{k}$ are also estimated from the full data set:
\begin{align}
    \hat{\rho}_{k} = \frac{1}{n_{\tot}}\sum\limits_{\nu=1}^{n_{\tot}}\mathbb{I}\left(\bx^{\nu}\in\mathcal{C}_{k}\right).
\end{align}
The parameters $(\hat{\bmu}_{k}, \hat{\bSigma}_{k}, \hat{\rho}_{k})$ completely characterise the approximating Gaussian mixture distribution $P_2$, and together with Theorem \ref{the:1} can be used to compute the theoretical learning curves $(\epsilon_g, \epsilon_t)$ as in Fig.~\ref{fig:mnist} of the main. Note that this discussion can be easily generalised to the case in which a non-linear feature map $\bvarphi:\mathbb{R}^{d}\to \mathbb{R}^{p}$ is applied to the data prior to fitting. The only difference is that the empirical distribution $\hat{P}$ is defined over the features $\{(\bv^{\nu}, y^{\nu})\}_{\nu=1}^{n_{\tot}}$ where $\bv^{\nu}=\bvarphi(\bx^{\nu})$, and the Gaussian mixture approximation $P_2$ is defined with respect to the empirical features distribution. Figure \ref{fig:mnistrf} of the main manuscript shows an example where a random feature map $\bv = \text{erf}\left(\bF\bx\right)$ with $\bF\in\mathbb{R}^{p\times d}$ a random Gaussian projection applied to MNIST and fashion MNIST before the fitting with different ratios $\gamma = p/d$. 

The theoretical learning curves are then compared with two sets of finite instance simulations. First, we simulate the learning problem on synthetic data sampled from the approximating Gaussian mixture distribution $P_2$, and the learning curves are computed by averaging over $10$ instances of the problem. Second, we simulate the learning problem on the real data set. The real data set is split into training and test sets, and for a given sample complexity $\alpha = n/d$ we sub-sample $n=\alpha d$ points from the training set. The averaged learning curves are computed over different instances of the sub-sampling, with replacement. 

\paragraph{Discussion} As expected, we find good agreement between theory and simulations with synthetic data drawn from the approximating Gaussian mixture distribution $P_2$, even for relatively small input dimensions (e.g. $d=784$ for MNIST). Surprisingly, we have found that in many cases the Gaussian mixture is a good approximation to the real data curves, see Figs.~\ref{fig:mnist} and \ref{fig:mnistrf} for examples of logistic regression on input space and with random features. Figure \ref{fig:app:trained} shows an example where the feature map $\bvarphi$ is given by removing the last layer of the following fully-connected 2-layer neural network pre-trained on the full MNIST odd vs. even data set: 
\begin{python}
Sequential(
  (0): Linear(in_features=784, out_features=784, bias=False)
  (1): ReLU()
  (2): Linear(in_features=784, out_features=1, bias=False)
  (3): Tanh()
)
\end{python}
\noindent with the training performed by minimising the square loss with the Adam optimiser and random initialisation. However, we have also found cases in which the approximation is not as sharp, see blue curves in Fig.~\ref{fig:app:2vs10}. Understanding the factors determining the quality of the approximation in real data sets is an interesting question we expect to address in future work.

\begin{figure}[ht]
    \centering
    \includegraphics[scale=0.45]{./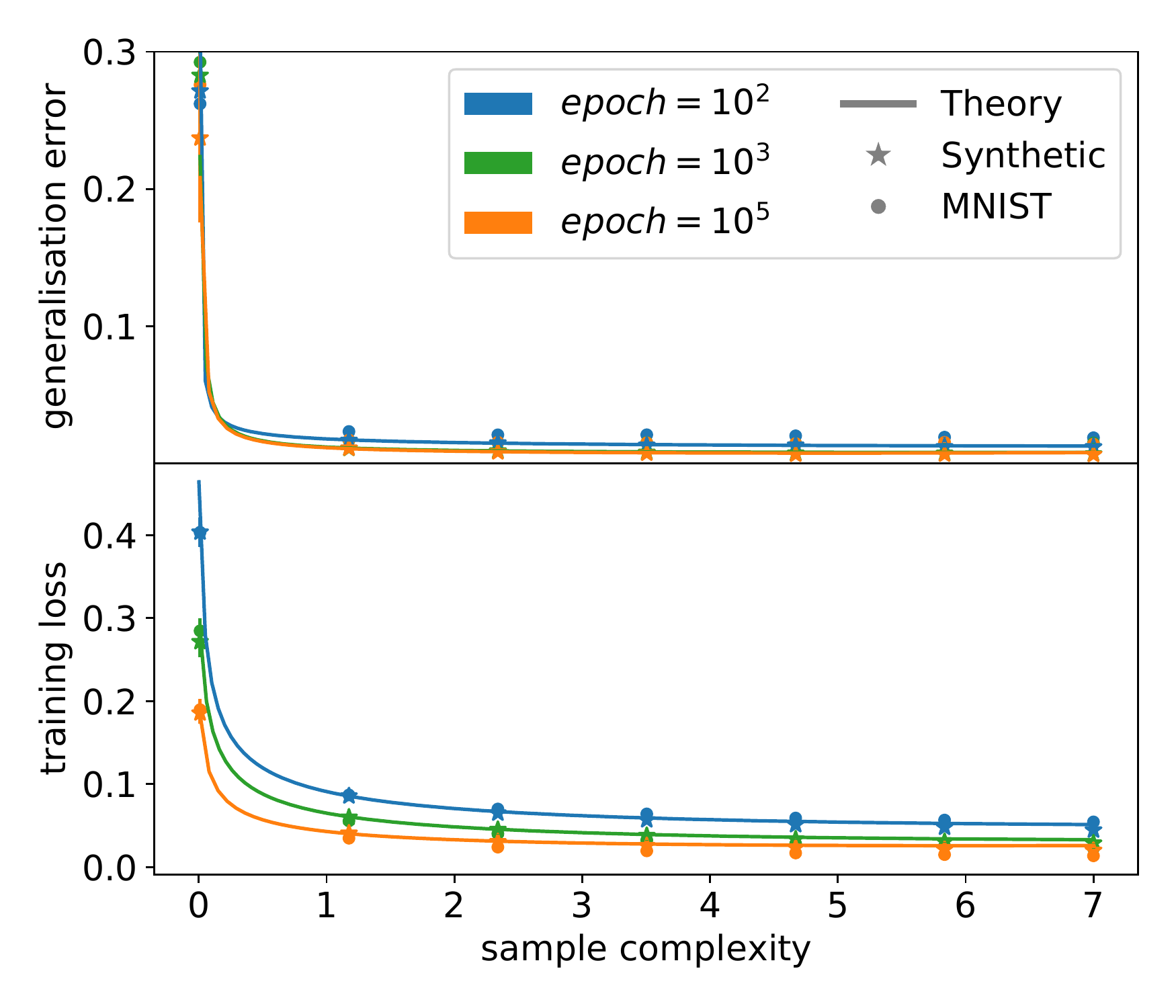}
    \caption{Generalisation error and training loss for logistic regression on MNIST with a feature map $\bvarphi$ obtained by training 2-layer fully connected neural network, with $\ell_2$ penalty and fixed $\lambda=0.05$. The different curves show the performance at different stages of training.}
    \label{fig:app:trained}
\end{figure}


\begin{figure}[t]
    \centering
    \includegraphics[width=0.45\textwidth]{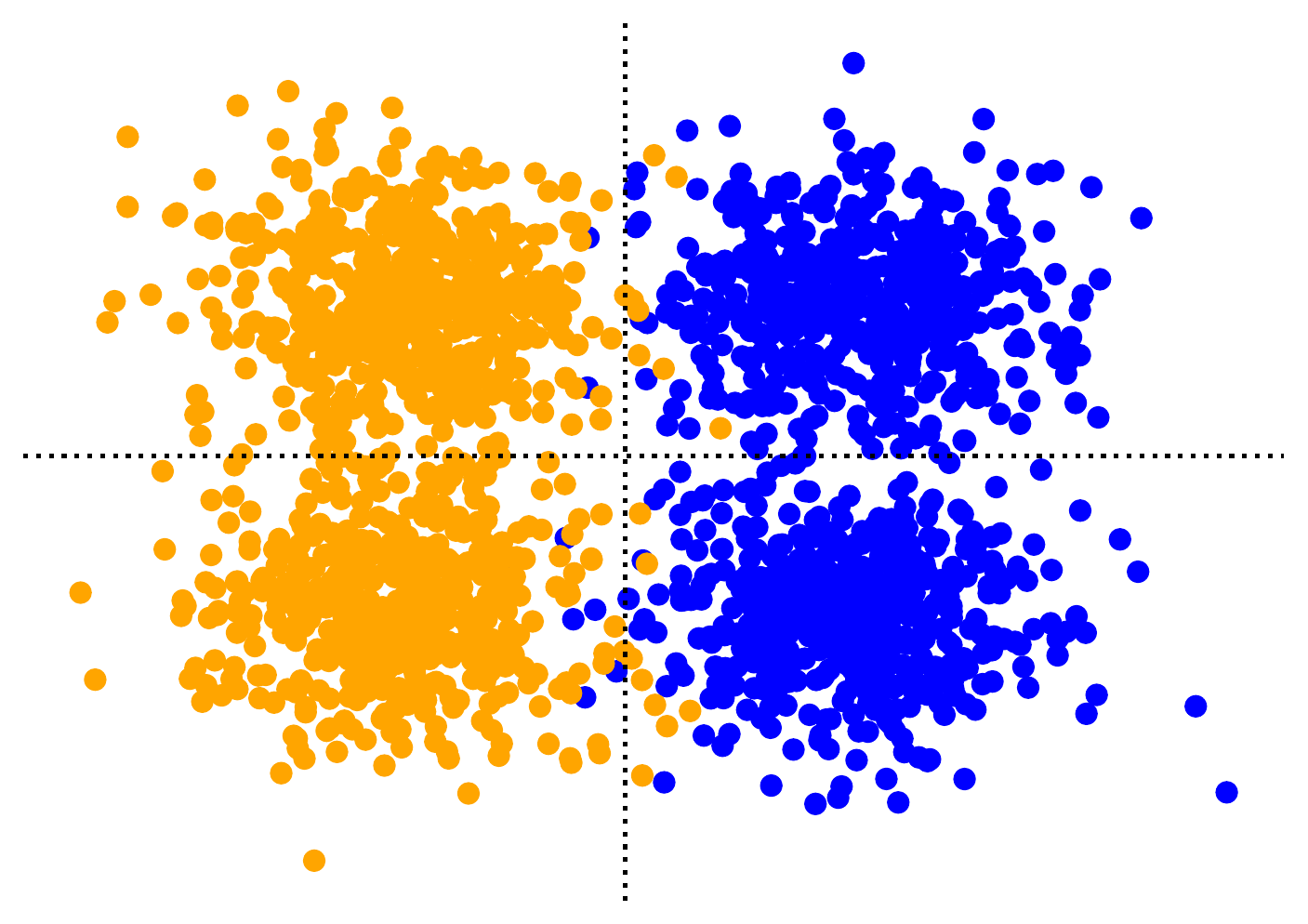}
    \includegraphics[width=0.45\textwidth]{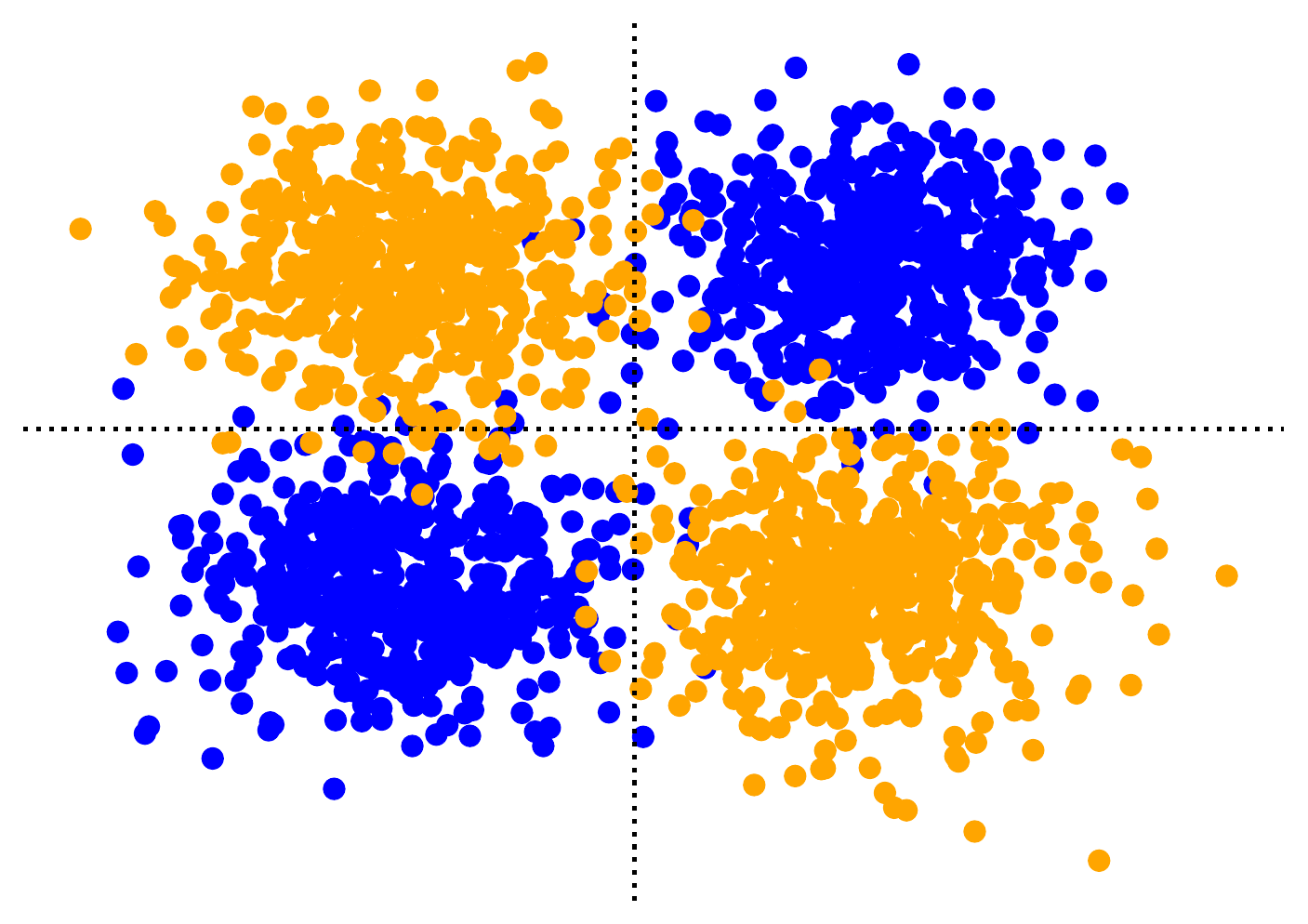}\\
    \includegraphics[width=0.45\textwidth]{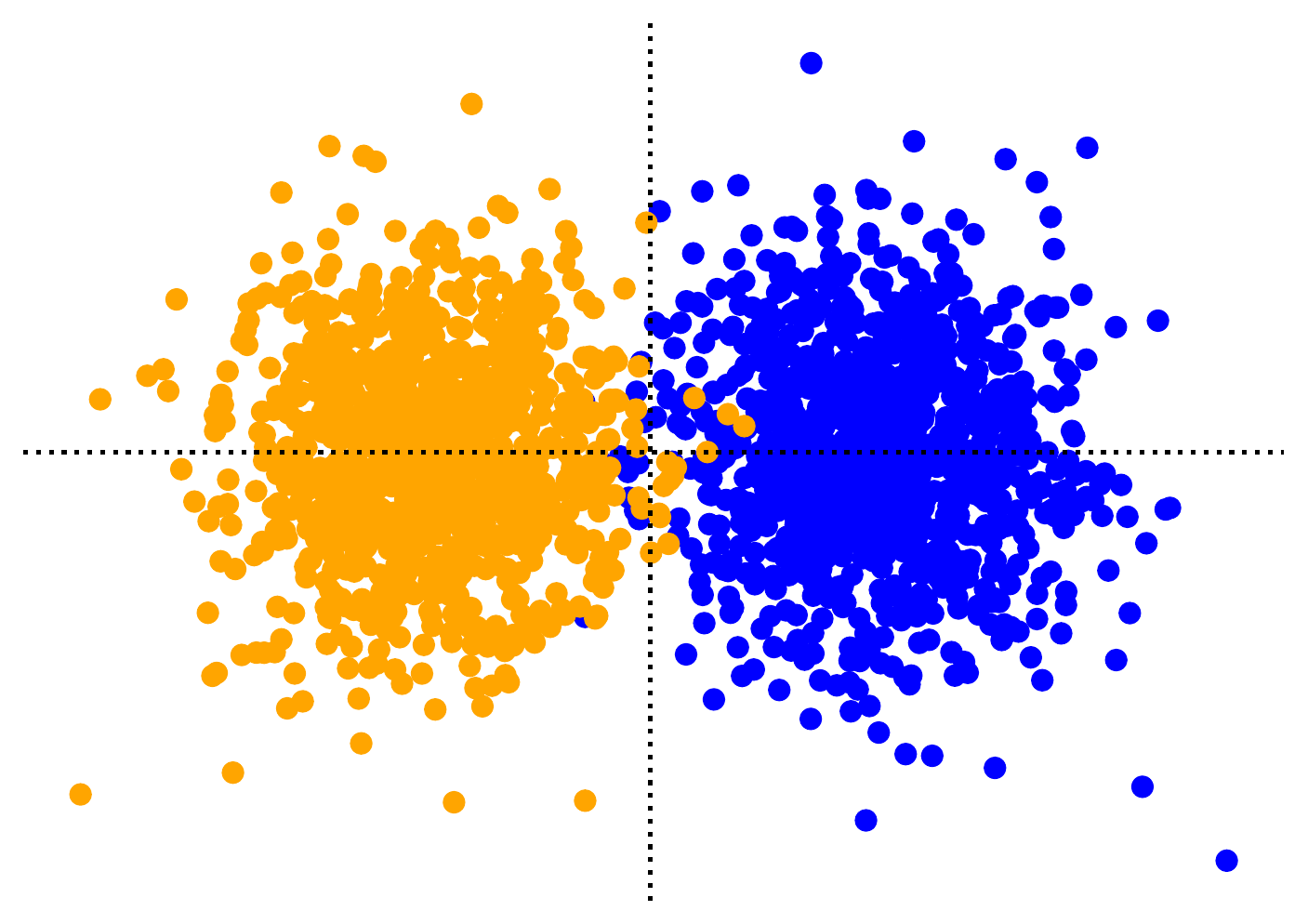}
    \includegraphics[width=0.45\textwidth]{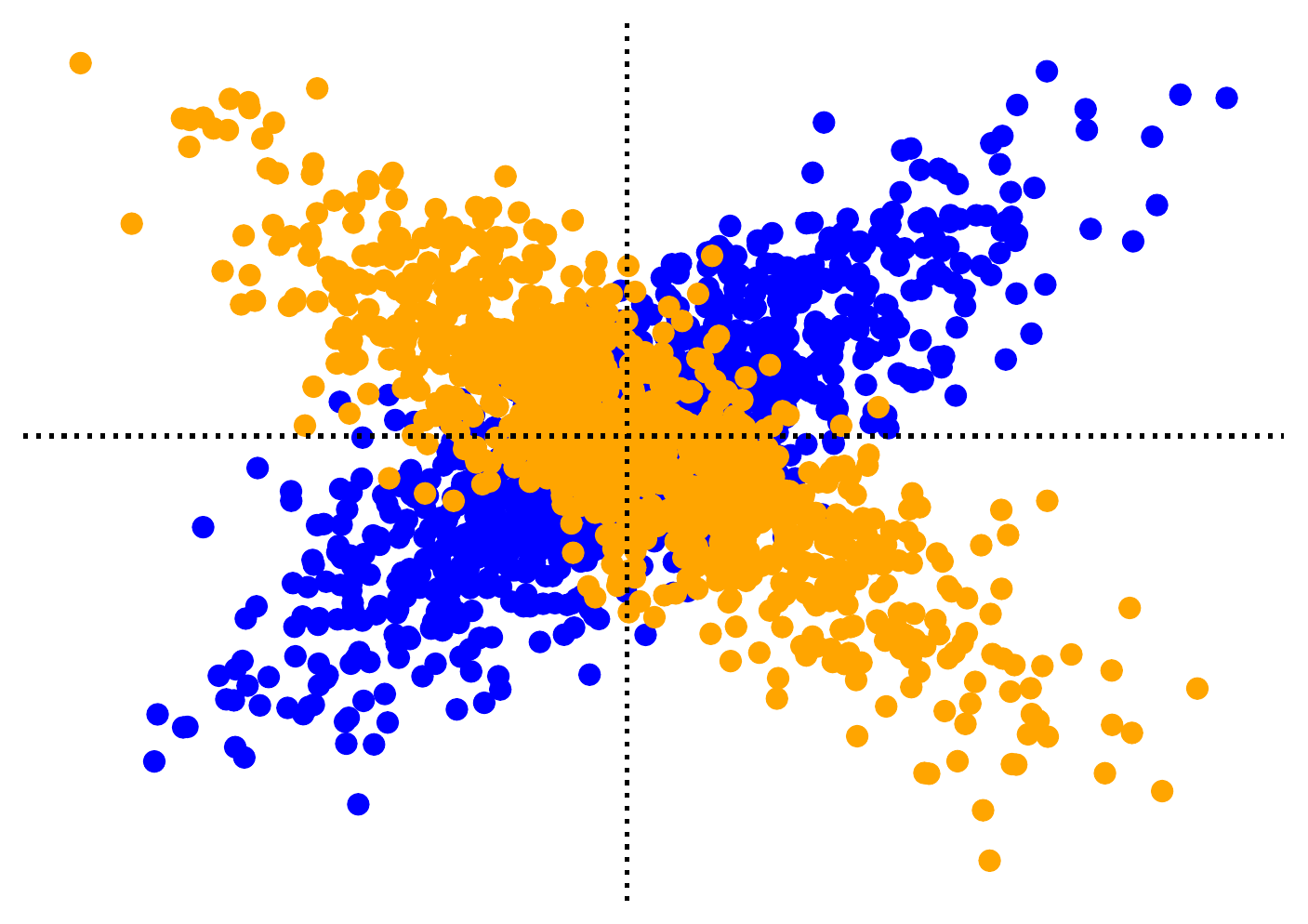}
    \caption{Two dimensional projection of the setting described in eq.~\eqref{eq:app:4vs2}. (\textbf{Left}) Realisable case,  (\textbf{Right}) Non-realisable case (XOR function).}
    \label{fig:app:xor}
\end{figure}

\begin{figure}[H]
    \centering
    \includegraphics[width=\textwidth]{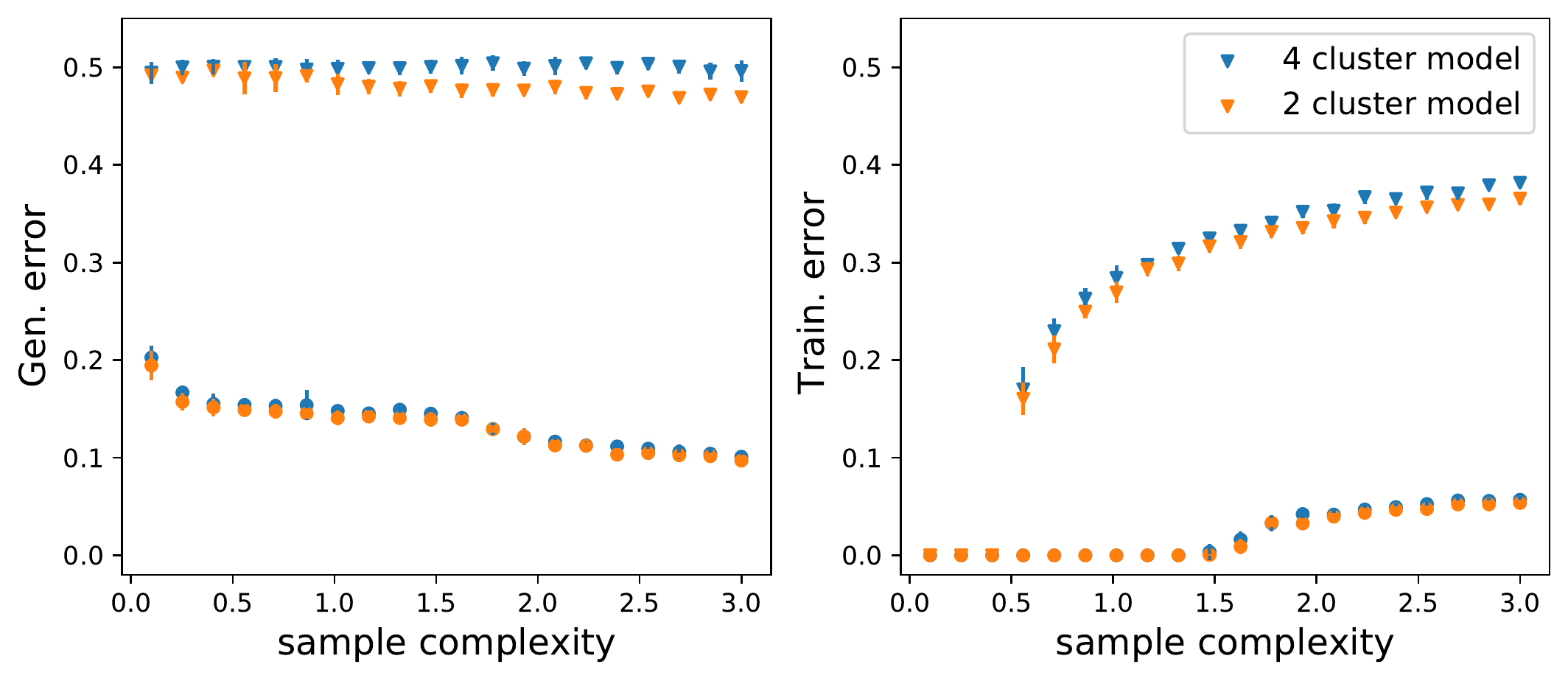}
    \caption{(\textbf{Left}) Generalisation and (\textbf{right}) training errors as a function of the sample complexity for logistic regression with $\ell_2$ penalty and $\lambda=10^{-4}$ for the four models pictured in Fig.~\ref{fig:app:xor}. Points denote the separable model (bottom curve), and triangles denote the non-realisable xor model (top curves). We have chosen a balanced scenario with $\Delta = 0.5$.}
    \label{fig:app:2vs4}
\end{figure}
\begin{figure}[H]
    \centering
    \includegraphics[scale=0.45]{./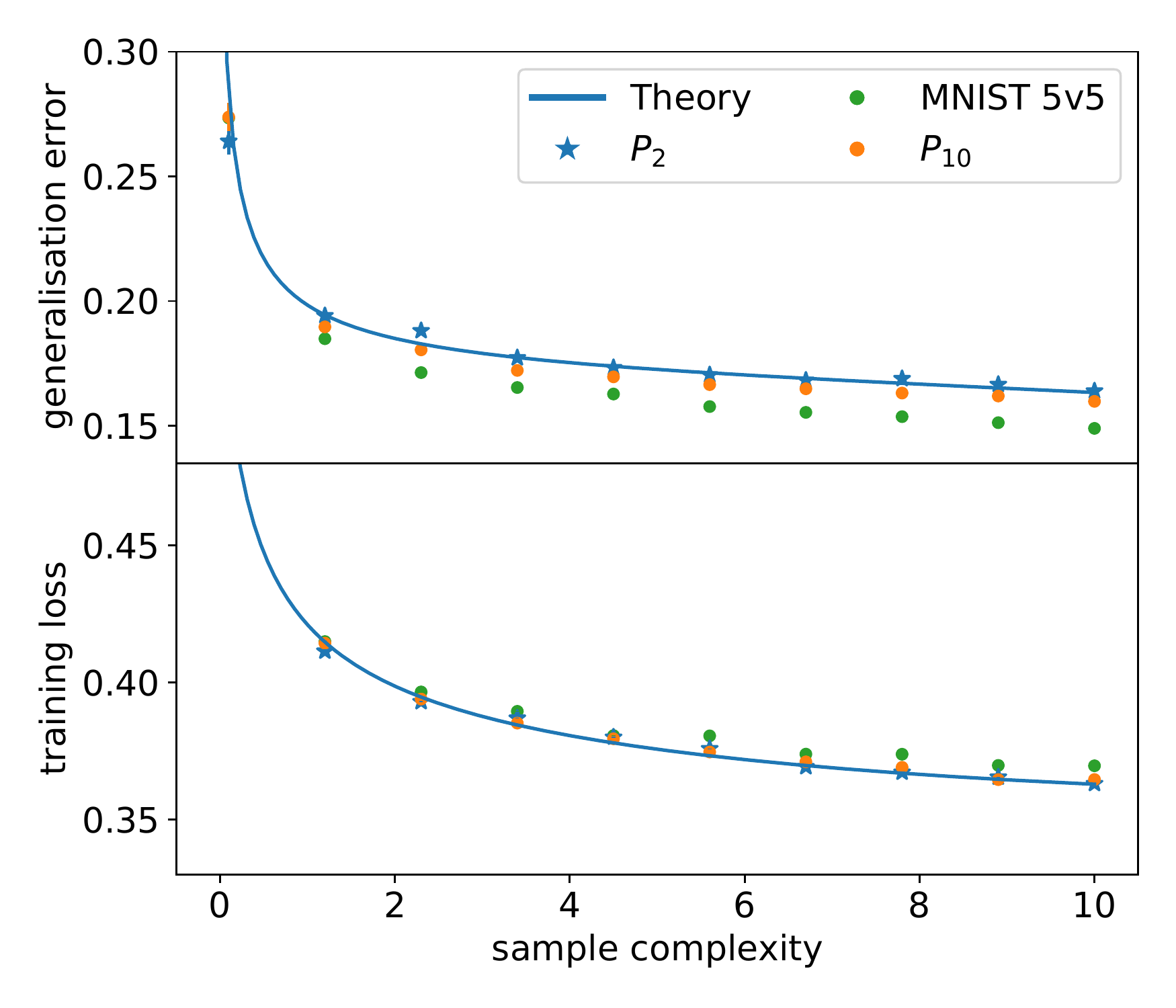}
    \caption{Generalisation error and training loss for logistic regression on the task of classifying $\{0, 1, 2, 3, 4\}$ vs $\{5,6,7,8,9\}$ digits of MNIST, as a function of the sample complexity for fixed $\ell_2$ penalty $\lambda = 0.1$. The blue curves show the 2-Gaussian cluster approximation $P_2$ (solid for theory, points for finite size simulations), while the orange points show the $10$-Gaussian cluster approximation $P_{10}$, which lies systematically below. The green points denote simulations on the true data set.}
    \label{fig:app:2vs10}
\end{figure}

\paragraph{Multiclass vs. binary approximation --} In the cases previously discussed, we have considered a $K=2$ cluster approximation $P_{2}$ to the empirical data distribution $\hat{P}$. However, the data sets considered here (MNIST and Fashion-MNIST) are originally composed of $10$ classes, and therefore we should ask the question of whether a $K=10$ cluster approximation $P_{10}$ where we fit the means and covariances of each original class is any different from the approximation studied above. In principle, these two approximations can have very different statistical properties. For instance, from Theorem \ref{th:2} it follows that the generalisation and training errors of Gaussian mixtures only depend on the statistics of the local field $\lambda = \bW\bx$ conditioned on the labels, which in the binary setting considered here is $y\in\{+,-\}$. Conditioned on $y=\pm$, this local field is simply a Gaussian random variable under $P_2$, while it is a multi-modal random variable under $P_{10}$. Therefore, there is \emph{a priori} no reason for these two approximations to give the same learning curves.

As an example, consider a $K=4$ Gaussian mixture distribution with a common variance $\Sigma_{k} = \Delta \mat{I}_{d}$ and with means:
\begin{align}
    \bmu_{1} = \be_{1}+\be_{2}, && \bmu_{2} = \be_{1}-\be_{2}, && \bmu_{3} = -\be_{1}+\be_{2}, && \bmu_{4} = -\be_{1}-\be_{2}
    \label{eq:app:4vs2}
\end{align}
\noindent where $\be_{i}\in\mathbb{R}^{d}$ is the canonical basis vector of $\mathbb{R}^{d}$, with entries $e_{ij} = \delta_{ij}$. We consider two label assignments: a) a realisable case in which clusters $1$ and $2$ are assigned label $+1$, and clusters $3$ and $4$ are assigned $-1$ and b) a non-realisable case in which clusters $1$ and $4$ are assigned $+1$ and clusters $2$ and $3$ are assigned $-1$ (XOR function), see Fig.~\ref{fig:app:xor} (\textit{top}) for an illustration. Now consider a dual $K=2$ Gaussian mixture model with means and covariances $(\bmu_{\pm}, \bSigma_{\pm})$ chosen to match the class means and covariances of the $K=4$ mixture, see Fig.~\ref{fig:app:xor} (\textit{bottom}) for an illustration. In Fig.~\ref{fig:app:2vs4} we compare the learning curves of the $K=4$ model with the $K=2$ counterpart with matched class means and covariances. While in the realisable case $a)$ both have identical performance under the error bars, in the non-realisable case $b)$ the performance in are significantly different.  

Indeed, a similar behaviour can be observed in the real data experiments. Fig.~\ref{fig:app:2vs10} compares the real learning curves of a MNIST 5v5 binary classification task (classifying five first digits vs. five last) with the two different Gaussian mixture approximations: $P_{10}$ where we fit the means and covariances of each individual cluster and $P_{2}$, where we fit only the class-wise means and covariances. While both approximations capture the high-level behaviour of the learning curves, $P_{10}$ is closer to the real learning curve than $P_{2}$.

\paragraph{Note on numerical instabilities} When dealing with means and covariance matrices estimated from real data sets, we have observed that for small regularisation strength $\lambda\ll 1$ the self-consistent equations from Theorem \ref{the:1} can develop spurious fixed points corresponding to negative values of the overlap parameters $q_{\pm} = \bW^{\top}\bSigma_{\pm}\bW$ -- which is clearly not possible since $\bSigma_{\pm}$ is a positive-definite matrix. This is observed across different scenarios, and is independent of the choice of loss or the particular way the equations are solved. In fact, the minimum value of $\lambda$ below which the spurious fixed point develop seems to depend only on the conditioning number of the covariance matrices.

\bibliographystyle{unsrt}
\bibliography{refs}

\end{document}